\patchcmd{\@IEEEeqnarray}{\relax}{\relax\intertext@}{}{}
\newtheorem{lemma}{Lemma}
\newcommand{\bnote}[1]{\textcolor{blue}{\text{#1}}}
\newcolumntype{M}{>{$\displaystyle}l<{$}}
\newcolumntype{N}{>{$\displaystyle}c<{$}}
\newcolumntype{O}{>{$\displaystyle}r<{$}}
\newcolumntype{;}{@{\hspace{0.27778em}}}
\newcolumntype{Z}{@{}}
\newcounter{experiment}
\newcommand{\experiment}{\vspace{0.1in}\noindent\refstepcounter{experiment}%
\textbf{Experiment \theexperiment}. }
\begin{document}

\title{The Spectral Bias of Shallow Neural Network Learning is Shaped by the
Choice of Non-linearity
}

\author{Justin~Sahs${}^1$, Ryan~Pyle${}^1$, Fabio~Anselmi${}^1$, and~Ankit~Patel${}^{1,2,*}$%
\thanks{\hspace{-\parindent}\resizebox{\columnwidth}{!}{\begin{tabular}{@{}l@{}}$^{1}$Department of Neuroscience, Baylor College of Medicine, Houston, TX, 77030, USA\\
$^{2}$Department of Electrical Engineering, Rice University, Houston, TX, 77005, USA\\
$^{*}$corresponding author: \texttt{ankit.patel@rice.edu}\end{tabular}}}}

\maketitle

\begin{abstract}
  Despite classical statistical theory predicting severe overfitting, modern
  massively overparameterized neural networks still generalize well.
  This unexpected property is attributed to the network's so-called implicit
  bias, which describes its propensity to converge to solutions that generalize
  effectively, among the many possible that correctly label the training data.
  The aim of our research is to explore this bias from a new perspective,
  focusing on how non-linear activation functions contribute to shaping it.
  First, we introduce a reparameterization which removes a continuous weight
  rescaling symmetry.
  Second, in the kernel regime, we leverage this reparameterization to
  generalize recent findings that relate shallow Neural Networks to the Radon
  transform, deriving an explicit formula for the implicit bias induced by a
  broad class of activation functions.
  Specifically, by utilizing the connection between the Radon transform and the
  Fourier transform, we interpret the kernel regime's inductive bias as
  minimizing a spectral seminorm that penalizes high-frequency components, in a
  manner dependent on the activation function.
  Finally, in the adaptive regime, we demonstrate the existence of local
  dynamical attractors that facilitate the formation of clusters of hyperplanes
  where the input to a neuron's activation function is zero, yielding alignment
  between many neurons' response functions.
  We confirm these theoretical results with simulations.
  All together, our work provides a deeper understanding of the mechanisms
  underlying the generalization capabilities of overparameterized neural
  networks and its relation with the implicit bias, offering potential pathways
  for designing more efficient and robust models.
\end{abstract}

\section{Introduction}

\IEEEPARstart{T}{he} surprising observation that modern massively
overparameterized Neural Networks (NNs) achieve good generalization, despite
classical statistical predictions suggesting they should heavily
overfit, has led to the study of \emph{inductive bias} (IB) and \emph{implicit
regularization} (IR).
These phenomena posit that the combination of architecture,
initialization, and training algorithm selects global minimizers of the
training loss that also exhibit strong generalization
properties~\cite{Neyshabur2015Search}.

Recent progress towards understanding such IR effects focuses on simple
architectures such as shallow, fully-connected (FC) networks trained with
$\ell_2$ weight decay.
Early work identified the class of functions representable by such networks
with the ReLU activation, first for univariate input~\cite{Savarese2019How},
then extending to multivariate~\cite{Ongie2020Function}.
These works identified a seminorm derived from the Radon transform of the
network's fitted function, such that the representable functions are exactly
those with a finite seminorm.
These results were then extended to the problem of data fitting, leading to the
representer theorem and Banach space characterization presented
in~\cite{Parhi2021Banach, Parhi2023Near-Minimax, Unser2023Ridges}.
This body of work introduces an infinite-dimensional function-space
optimization problem (minimizing the Radon seminorm) whose extreme points are
finite-width NNs that minimize the combined loss of a data-fitting term and
$\ell_2$ weight decay (see \cite[Theorem 12]{Unser2023Ridges} and \cite[Theorem
8]{Parhi2021Banach}).
Some of these results have also been extended to powers of the (leaky) ReLU
activation function~\cite{Parhi2021Banach, Parhi2020Role}, as well as to deep
ReLU networks with rank constraints on the weight
matrices~\cite{Parhi2022What}.

While these analyses represent a significant step toward understanding NN
function space optimization, a full characterization is still lacking.
First, they do not directly analyze real-world learning algorithms such as
gradient descent (GD).
Instead, they consider an entire convex space of solutions spanning various
network widths, whereas GD is applied to a single fixed-width network and
converges to a specific solution--one that may not achieve exactly zero
training error, particularly with early stopping.
Moreover, while these results are mathematically rigorous, they are not
intuitive.
What do functions with low Radon seminorm actually look like?
What structural or qualitative properties do they exhibit?
What is the effect of activation functions outside of the (leaky) power ReLU
family?
Developing a deeper intuition for these aspects remains an open challenge.

Generally, the research in this area can be classified as concerning one of two
training \emph{regimes}: the \emph{kernel} regime, wherein parameter dynamics
are simplified and linear, or the \emph{adaptive} regime, where dynamics retain
full complexity (see~\Cref{sec:alphadegen}).
In~\cite{williams2019gradient}, univariate ReLU networks in the kernel regime
are found to minimize a seminorm based on the second derivative of the network
function.
The work in \cite{Jin2023Implicit} generalizes to the multivariate case,
where the seminorm again involves the Radon transform.
While these results are derived in the context of gradient descent (GD), they
remain limited in scope--either to univariate settings or to multivariate ReLU
networks--and their implications are still challenging to interpret
intuitively.

A separate series of works has employed the so-called ``mean-field'' approach,
which takes an infinite-width limit that preserves adaptivity, yielding an
asymptotic PDE that governs the dynamics of the approximating function
throughout training~\cite{Mei2018Mean, Mei2019Mean-field,
Rotskoff2022Trainability, Chizat2018Global, De_Bortoli2020Quantitative,
Chen2020Dynamical}.
Furthermore, finite-width networks stay close to the asymptotic functions
throughout training~\cite{Rotskoff2022Trainability, Chen2020Dynamical}, and the
finite-width loss landscape does not change much as width
grows~\cite{Mei2018Mean}, independent of input
dimension~\cite{Mei2019Mean-field}.
Additionally, the mean-field PDE takes the form of a so-called \emph{continuity
equation}, as studied in the context of fluid dynamics~\cite{Chizat2018Global}.
This framework is useful for establishing convergence results and approximation
bounds, but, again, is lacking in interpretability: it is unclear how to
translate the mean-field PDE results into meaningful statements about
regularization or generalization.
Once again: What do the trained network functions \emph{look like}?

\paragraph{Main Contributions.}
\begin{itemize}
\item We present a reparameterization of multivariate networks with arbitrary
    activation function~(\Cref{sec:reparam}).
    We show how, in the kernel regime, this reparameterization makes the
    relationship with the Radon transform simpler, and provides a mathematical
    framework for generalizing the results of~\cite{Jin2023Implicit} to a large
    class of activation functions.~(\Cref{sec:kernel})
\item We interpret the induced Radon-space seminorm as a Fourier-space
    penalization composed of two parts: one induced by the shallow FC
    architecture and one induced by the choice of activation
    function~(\Cref{sec:fourier}).
    Thus low-seminorm functions are relatively smooth functions without much
    high frequency content.
\item We leverage this Fourier perspective and show how it enables the design
    of activation functions that impose a desired Fourier-space penalty; we
    further explore the implications of such tailored design for generalization
    and potential challenges posed by the curse of
    dimensionality.~(\Cref{sec:choose,sec:curse})
\item Finally, we consider the adaptive regime (see~\Cref{sec:alphadegen},
    examining the training dynamics and loss landscape structure of ReLU
    networks in the light of the new reparameterization.
    We show how this novel perspective provides intuitive explanations for
    previously-observed phenomena, such as the tendency of network weights to
    ``concentrate in a small number of directions''~\cite{Maennel2018Gradient},
    which manifests in our reparameterization as clustering of parameters
    related to the direction and orientation of hyperplanes associated with
    each neuron, where that neuron's contribution is
    non-linear.~(\Cref{sec:adapt})
    We generalize to general activations in~\Cref{sec:adaptgen}
\end{itemize}

\section{Reparameterization}\label{sec:reparam}

For $D$-dimensional inputs, we write the weight-based NN parameterization of a
shallow ReLU NN with $H$ neurons as
\[ f_{\theta_{\tNN}}\mkern-2mu(\vx)
   = \sum_{i=1}^H v_i\left(\langle\vw_i,\vx\rangle + b_i\right)_+,
\]
where $\theta_{\tNN} \triangleq (\vw_i,b_i,v_i)_{i=1}^H$ with $D$-dimensional
input weights $\vw_i$, scalar biases $b_i$, and scalar output weights $v_i$.
Each term $f_i(\vx) \triangleq v_i\left(\langle\vw_i,\vx\rangle + b_i\right)_+$
is a 2-piece continuous piecewise linear function which is 0 for all $\vx$ on
the ``inactive'' side of the $(D-1)$-plane determined by the equation
$\langle\vw_i,\vx\rangle + b_i = 0$ (referred to as a \emph{breakplane}), and
linear in the distance from that plane on the ``active'' side.
The mapping $(\vw_i,b_i,v_i) \mapsto f_i(\vx)$ is many-to-one.
However, fundamentally, $f_i(\vx)$ of this form belong to a family of functions
uniquely determined by the location and orientation of the breakplane, and the
slope on the active side.
In the setting of univariate ReLU networks, \cite{sahs2022shallow} introduced a
reparameterization that reflects this fact.

Extending this view to the multivariate setting yields a reparameterization
based on the orientation $\vxi_i \triangleq \frac{\vw_i}{\|\vw_i\|_2}$, signed
distance from the origin, $\gamma_i \triangleq \frac{-b_i}{\|\vw_i\|_2}$, and
slope $\mu_i \triangleq v_i\|\vw_i\|_2$.
Dubbing this the \emph{Radon Spline} parameterization $\theta_{\tRS}$ based on
the relationship with the Radon transform discussed below
in~\Cref{sec:radonintro}, we can write
\begin{equation}\label{eq:reparamrelu}
  f_{\theta_{\tRS}}\mkern-2mu(\vx)
   = \sum_{i=1}^H \mu_i \left( \langle \vxi_i, \vx \rangle - \gamma_i\right)_+.
\end{equation}

\subsection[Training regimes and alpha-degeneracy]{Training regimes and $\alpha$-degeneracy}\label{sec:alphadegen}
Because $(\cdot)_+$ is 1-homogeneous, the mapping from $\theta_{\tNN}$ to
$\theta_{\tRS}$ is many-to-one: the underlying function, and hence
$\theta_{\tRS}$, is invariant under the mapping $(\vw_i,v_i,b_i) \mapsto
(\alpha_i\vw_i,\frac{v_i}{\alpha_i},\alpha_i b_i)$.
We call this the \emph{$\alpha$-degeneracy} or \emph{$\alpha$-symmetry}.
Adding an additional parameter, $\omega_i \triangleq \|\vw_i\|_2$ yields the
$\theta_{\tRS,\omega}$ parameterization, which is no longer invariant to the
$\alpha$-symmetry, making it one-to-one with $\theta_{\tNN}$.
Although the underlying function is invariant under the $\alpha$-symmetry, and
hence $f_{\theta_{\tRS,\omega}}\mkern-2mu(\vx)$ does not depend on
$(\omega_i)_i$, the training dynamics under gradient descent are affected by
the $\alpha$ mapping (as first studied in~\cite{Chizat2019Lazy}).
We can measure the effect of $\alpha$ on training by the derived statistic
$\delta_i \triangleq v_i^2 - \|\vw_i\|_2^2 - b_i^2 = \mu_i^2/\omega_i^2 -
\left(\gamma_i^2+1\right)\omega_i^2$, which is generalized from the
1-dimensional version found in~\cite{williams2019gradient}.
$\delta_i$ is not invariant under the $\alpha$-symmetry, but \emph{is}
invariant under gradient descent, i.e.\ they depend only on the initial values
of the parameters $\theta_{\tRS,\omega}$.

As $\delta_i \to -\infty$ (e.g.\ under a $\alpha_i\to\infty$ transformation),
breakplanes stop changing, so that only the delta-slopes change.
This effectively transforms our learning problem into learning a set of weights
for a fixed basis set; we call this the \emph{kernel
regime}~\cite{Woodworth2020Kernel}.
In other words, in the kernel regime, only $(\mu_i)_{i=1}^H$ is trained, with
$(\vxi_i,\gamma_i,\omega_i)_{i=1}^H$ constant.
This regime has also been studied under the names \emph{linear
regime}~\cite{Luo2021Phase} and \emph{lazy training}~\cite{Chizat2019Lazy}.

Conversely, as $\delta_i \to \infty$ (i.e.\ $\alpha_i\to 0$), breakplane motion
becomes an integral part of training.
We call this the \emph{adaptive regime}~\cite{williams2019gradient}; it has
also been studied under the name \emph{rich
regime}~\cite{Woodworth2020Kernel, Li2022What} and \emph{critical
regime}~\cite{Luo2021Phase}.

Recently~\cite{Kunin2024Get} has shed more light on this phenomenon by
considering per-layer learning weights, $\eta_1$ (governing the learning rate
of $\vw_i$ and $b_i$) and $\eta_2$ (governing the rate of $v_i$).
Under this approach, we redefine $\delta_i \triangleq \eta_1 v_i^2 -
\eta_2\|\vw_i\|_2^2 - \eta_2 b_i^2 = \eta_1\mu_i^2/\omega_i^2 - \eta_2
\left(\gamma_i^2+1\right)\omega_i^2$.
Then, these new weights can be tuned to select along the kernel-adaptive
spectrum, independently of the scale of the initialization.

\subsection{Arbitrary Activation Functions}

We now generalize this parameterization to arbitrary activation functions
$\phi(\cdot)$.
The parameters $(\mu_i,\vxi_i,\gamma_i)$ are kept, but their meaning is
generalized: $\mu_i$ becomes a scale parameter, rather than a slope,
$(\vxi_i,\gamma_i)$ now parameterize the location and orientation of a
\emph{zero-plane} where the input to the activation crosses zero (from negative
to positive as you move from inactive to active side), rather than a
breakplane.
Finally, the underlying function is no longer invariant to the parameter
$\omega_i$, which now parameterizes the horizontal rescaling of the activation,
$\phi_{\omega_i}\!(z) \triangleq \frac{1}{\omega_i}\phi(\omega_i z)$.
G+eneralizing \Cref{eq:reparamrelu}, we get
\begin{equation}\label{eq:reparam}
  f_{\theta_{\tRS,\omega}}\!(\vx)
   = \sum_{i=1}^H \mu_i \phi_{\omega_i}\left(\langle\vxi_i,\vx\rangle - \gamma_i\right).
\end{equation}
When $\phi(\cdot)$ is 1-homogeneous (as is the case with the ReLU activation),
$\phi_{\omega_i}\!(\cdot) = \phi(\cdot)$, so that $\omega_i$ becomes a
redundant parameter.
However, in the adaptive regime, $\omega_i$ can still affect parameter
dynamics, even though $f_{\theta_{\tRS}}\mkern-2mu(\vx)$ is unaffected.

\subsection{Relationship with the Radon Transform}\label{sec:radonintro}

In cases where the activation function $\phi$ is 1-homogeneous or that
$\omega_i = 1$ for all $i$ at all times, we can rewrite the sum in
\Cref{eq:reparam} as an integral:
\begin{IEEEeqnarray*}{r;l}
  f_{\theta_{\tRS}(t)}\mkern-1mu(\vx)
    &\triangleq \sum_{i=1}^H\mu_i\phi(\langle\vxi_i,\vx\rangle-\gamma_i)\\
    &\triangleq
       \int\limits_{\mathclap{\SS^{D-1}\times\RR\times\RR}}
         \mu\phi(\langle\vxi,\vx\rangle-\gamma)
       \dd{\eta_t(\vxi,\gamma,\mu)}\\
\shortintertext{where}
  \eta_t(\vxi,\gamma,\mu)
    &\triangleq \sum_{i=1}^H \delta_{(\vxi_i,\gamma_i,\mu_i)}
\end{IEEEeqnarray*}
is the (un-normalized) empiric distribution of parameters at time $t$.
By letting $\eta_t(\ldots)$ be an arbitrary measure, we can represent infinite
width networks.
Below, we use $\eta_t(\ldots)$ to also denote the density of $\eta_t(\ldots)$;
in the case that $\eta_t(\ldots)$ has atoms, we understand this density to be a
Schwartz distribution.
Rearranging in terms of conditional and marginal densities, we get
\begin{IEEEeqnarray*}{r;l}
  \mathrlap{f_{\theta_{\tRS}(t)}\mkern-1mu(\vx)}
  \hspace{8mm}
  \\
    &= \int\limits_{\mathclap{\SS^{D-1}\times\RR}}
         \left(
           \int_\RR
             \mu
             \eta_t(\mu|\vxi,\gamma)
           \dd{\mu}
         \right)
         \phi(\langle \vxi, \vx\rangle - \gamma)
         \eta_t(\vxi,\gamma)
       \dd{\vxi}\dd{\gamma}\\
    &\triangleq
       \int\limits_{\mathclap{\SS^{D-1}\times\RR}}
         c_t(\vxi,\gamma)
         \phi(\langle \vxi, \vx\rangle - \gamma)
         \eta_t(\vxi,\gamma)
       \dd{\vxi}\dd{\gamma}\\
    &= \int\limits_{\mathclap{\SS^{D-1}}}
         \left(
           \phi
           \ast_\gamma
           c_t(\vxi,\cdot)\eta_t(\vxi,\cdot)
         \right)
         (\langle \vxi, \vx\rangle)
       \dd{\vxi}\\
\end{IEEEeqnarray*}
where $\ast_\gamma$ denotes convolution in the $\gamma$ variable.
Finally, we can rewrite the last equality as
\begin{equation}\label{eq:nndualradon}
  f_{\theta_{\tRS}(t)}\mkern-1mu(\vx)
  = \R^*
    \left\{
      \left(
        \phi
        \ast_\gamma
        c_t(\vxi,\cdot)\eta_t(\vxi,\cdot)
      \right)
      (\gamma)
    \right\}
    (\vx)
\end{equation}
where $\R^*\{\cdot\}(\vx)$ denotes the Dual Radon transform
\[ \R^*\{\varphi\}(\vx)
   \triangleq
     \int\limits_{\mathclap{\SS^{D-1}}} \varphi(\vxi,\langle\vxi,\vx\rangle) \dd{\vxi}
\]
which takes a function on hyperplanes $\varphi(\cdot,\cdot) :
\SS^{D-1}\times\RR \to \RR$ and converts it to a function on points,
$\R^*\{\varphi\} : \RR^D\to\RR$ by integrating over all hyperplanes that pass
through $\vx$.
As the name implies, the Dual Radon transform is dual to the Radon transform of
a function $f(\cdot) : \RR^D\to\RR$, given by
\[
    \R\{f\}(\vxi,\gamma) = \int\limits_{\mathclap{\langle\vxi,\vx\rangle = \gamma}} f(\vx)\dd{\vx},
\]
which integrates over all $\vx$ on the hyperplane defined by $(\vxi,\gamma)$
(see, e.g.~\cite{Helgason2011Integral, Helgason1980Radon}).

An intuitive understanding for the Radon and dual Radon transforms comes from
the field of medical imaging~\cite{Kuchment2013Radon, Beatty2012Radon}.
In (the basic form of) Computed Tomography (CT), a linear array of parallel
X-ray beams are shot through a patient, and a linear array of sensors records
the resulting intensities on the other side.
Then, the source and sensor arrays are rotated around the patient, producing a
large number of beams with various orientations and offsets.
Each beam effectively computes the integral of the density of the patient along
a line, one for each orientation and offset pair $(\vxi,\gamma)$.
In other words, the CT scanner is computing the Radon transform
$\R\{f\}(\vxi,\gamma)$ of the density of the 2D slice of the patient, yielding
a so-called sinogram.
The original density function can be recovered from this data by using the
inversion formula for the Radon transform:
\[ f(\vx)
   = \kappa_D
     \R^*\left\{\left(-\pdv[2]{\gamma}\right)^{\!\!\frac{D-1}{2}}\!\R\{f\}\right\}(\vx)
\]
where the fractional power $\left(-\pdv[2]{\gamma}\right)^{\!\!\frac{D-1}{2}}$
is typically defined via its Fourier transform, and $\kappa_D$ is a constant
that only depends on $D$.
A similar formula goes in the other direction:
\[ \varphi(\vxi,\gamma)
    = \kappa_D \R\left\{\left(-\nabla^2\right)^{\!\frac{D-1}{2}} \R^*\{\varphi\}\right\}(\vxi,\gamma)
\]
where the $\left(-\nabla^2\right)^{\!\frac{D-1}{2}}$ term is called the
fractional Laplacian.
In these formulae, the fractional differential operators act as low-pass
filters that are necessary to avoid ``overcounting'' points far from the
origin.

\section{Kernel Regime}\label{sec:kernel}

In the kernel regime, only the parameters $\vmu \triangleq (\mu_i)_{i=1}^H$ are
trained.
Because the other parameters are fixed, this turns the network into just a
linear model with fixed features
$\Phi\left(\vx;(\vxi_i,\gamma_i,\omega_i)_{i=1}^H\right) \triangleq
(\phi(\langle\vxi_i,\vx\rangle-\gamma_i))_{i=1}^{H}$.
We can then write the network output as $f_{\theta_{\tRS,\omega}}\!(\vx) =
\Phi\left(\vx;(\vxi_i,\gamma_i,\omega_i)_{i=1}^H\right) \vmu$.
Provided the model can reach zero error, this leads to the solution
\begin{equation}\label{eq:muopt}
  \begin{IEEEeqnarrayboxm}[][c]{c}
    \hat{\vmu}
      = \argmin_{\vmu} \|\vmu-\vmu_0\|_2^2\\
    \text{ s.t.\ }\ y_n = \Phi\left(\vx_n;(\vxi_i,\gamma_i,\omega_i)_{i=1}^H\right) \vmu\ \forall n.
  \end{IEEEeqnarrayboxm}
\end{equation}
Thus, we have a $\ell_2$-regularized feature linear regression with fixed
features $\Phi(\ldots)$.

For simplicity of exposition, let us assume $\vmu_0=\vec{0}$ so that we
minimize $\|\vmu\|_2^2$, and have $f_{\theta_{\tRS}(0)}\mkern-1mu(\vx) \equiv 0$.
Additionally, as in \Cref{sec:radonintro}, we assume that either
$\eta_0(\omega) = \delta_{1}$, i.e.\ that $\|\vw\|_2=1$ at initialization with
probability 1, or that $\phi(\cdot)$ is 1-homogeneous.
Then, as before, we can represent a finite sum as an integral, writing
\begin{equation}\label{eq:ctsquared}
  \|\vmu_t\|_2^2
    = \int\limits_{\mathclap{\SS^{D-1}\times\RR\times\RR}}
        \mu^2
      \dd{\eta_t}\!(\vxi,\gamma,\mu)
    = \int\limits_{\mathclap{\SS^{D-1}\times\RR}}
        c_t^2(\vxi,\gamma)
        \eta_t(\vxi,\gamma)
      \dd{\vxi}\dd{\gamma}
\end{equation}

Then, let $\L_\phi$ be the linear operator defined by convolution with the
activation function $\phi$, that is $\L_\phi \varphi \triangleq \phi
\ast_\gamma \varphi$ for any function $\varphi : \RR\to\RR$ such that the
convolution converges.
Then, we can extend $\L_\phi$ to an operator $\L_{\phi,\vxi}$ by
$\left(\L_{\phi,\vxi} g\right)(\vx) \triangleq \L_\phi g(\vx +\gamma\vxi)$,
i.e.\ by applying the convolution ``in the direction'' $\vxi$.
Then, if $\L_\phi$ has a unique inverse\footnote{As we see in
\Cref{tab:activation}, many activation functions yield a well-defined
$\L_\phi^{-1}$.
Without our assumption that $\omega=1$, \Cref{eq:nndualradon} would have an
extra integral $\dd{\omega}$, which would also need to be inverted, but we
would only expect a unique inverse in special circumstances.}, we can use the
inversion formula for the Dual Radon transform to solve \Cref{eq:nndualradon}
for $c_t(\vxi,\gamma)$, which yields
\begin{IEEEeqnarray*}{r;l}
  c_t(\vxi,\gamma)
    &= \frac{\kappa_D}{\eta_0(\vxi,\gamma)}
       \L_\phi^{-1}
       \R\left\{
         \left(-\nabla^2\right)^{\!\frac{D-1}{2}}
         f_{\theta_{\tRS}(t)}
       \right\}(\vxi,\gamma)\\
    &= \frac{\kappa_D}{\eta_0(\vxi,\gamma)}
       \R\left\{
         \left(-\nabla^2\right)^{\!\frac{D-1}{2}}
         \L_{\phi,\vxi}^{-1}
         f_{\theta_{\tRS}(t)}
       \right\}(\vxi,\gamma)\\
    &\triangleq
       \frac{1}{\eta_0(\vxi,\gamma)}
       \left(\R^*\right)^{-1}
       \left\{
         \L_{\phi,\vxi}^{-1}
         f_{\theta_{\tRS}(t)}
       \right\}(\vxi,\gamma)
\end{IEEEeqnarray*}
We can then substitute this expression for $c_t(\vxi,\gamma)$ into
\Cref{eq:ctsquared} and combine with \Cref{eq:muopt}, yielding
\begin{equation}
\begin{IEEEeqnarraybox}[][c]{c}
  f_{\widehat{\theta}_{\tRS}}
    = \argmin_{f \in \mathscr{F}_{\!\phi}}
        \int\limits_{\mathclap{\SS^{D-1}\times\RR}}
          \frac
            {\left(
               \left(\R^*\right)^{-1}
               \left\{
                 \L^{-1}_{\phi,\vxi} f
               \right\}(\vxi,\gamma)
             \right)^2
            }
            {\eta_0(\vxi,\gamma)}
        \dd{\vxi}\dd{\gamma}
        \\
  \text{ s.t.\ }
  f(\vx_n) = y_n\ \forall n,
\end{IEEEeqnarraybox}\label{eq:kernelreg}
\end{equation}
where the minimization is over the space $\mathscr{F}_{\!\phi}$ of functions
such that the integral being minimized is finite.

If we consider the special case $\phi=(\cdot)_+$, we can see that
\[((\cdot)_+ \ast \varphi)(\gamma)
  = \int_{-\infty}^\gamma (\gamma-t) \varphi(t) \dd{t}\!,
\]
which has the form of the Cauchy formula for repeated integration, i.e.\
$\L_\phi \varphi = \iint \varphi(t) \dd[2]{t}$.
From this, it is clear that $\L_\phi$ is inverted by twice differentiating,
$\L_\phi^{-1} \varphi = \dv[2]{\varphi}{\gamma}$.
The Radon transform is said have an ``intertwining'' property that
$\dv[2]{}{\gamma} \R\{f\} = \R\left\{\nabla^2 f\right\}$, so instead of using
$\L_{\phi,\vxi}^{-1} = \partial^2_{\vxi}$ (the second derivative in the
direction $\vxi$), we can use this to specialize \Cref{eq:kernelreg} and
reproduce \cite[Theorem 6]{Jin2023Implicit}:
\begin{IEEEeqnarray*}{c}
  f_{\widehat{\theta}_{\tRS}}
    =
    \argmin_{f \in \mathscr{F}_{\!\phi}}
      \int\limits_{\mathclap{\SS^{D-1}\times\RR}}
        \frac
          {\left(
             \left(\R^*\right)^{-1}
             \left\{
               \nabla^2 f
             \right\}(\vxi,\gamma)
           \right)^2
          }
          {\eta_0(\vxi,\gamma)}
      \dd{\vxi}\dd{\gamma}\\
  \text{ s.t.\ }
  f(\vx_n) = y_n\ \forall n.\\
  \phi=\text{ReLU}
\end{IEEEeqnarray*}
This result is formalized in \cite{Jin2023Implicit}, including technical
requirements for the infinite width limit to converge, and rates of
convergence.

We call the numerator $\left( \left(\R^*\right)^{-1} \left\{
\L^{-1}_{\phi,\vxi} f \right\}(\vxi,\gamma) \right)^2$ in \Cref{eq:kernelreg}
the \emph{representational cost} of $f(\vx)$ along the hyperplane
$\langle\vxi,\vx\rangle=\gamma$, which is a measure of the ``local difficulty''
of implementing $f(\vx)$, where ``local'' means ``confined to the hyperplane''.
In the case of the ReLU activation, this corresponds to the integral of
Laplacian curvature along the hyperplane.
Then, the denominator $\eta_0(\cdot,\cdot)$ serves as a hyperplane weighting
factor which increases the importance of \emph{low density} regions; such
regions are therefore even more regularized (e.g.\ must have very low
curvature).
The intuition here is that a region of $(\vxi,\gamma)$-space with low density
corresponds to a region of $\vx$-space with few or no zero-planes; with no
zero-planes, the network cannot ``afford'' any representational cost in that
region.
In the kernel regime, the network cannot move zero-planes, so it must
necessarily find a solution with low representational cost in that
region (assuming such a solution exists).
In the ReLU activation example, a region with no zero-planes is necessarily
affine, so the network cannot implement any curvature in such a region.

If we let $\psi(\vxi,\gamma)$ be a measure on $\supp \eta_0$ with density
$\frac{1}{\eta_0(\vxi,\gamma)}$, we can write the objective of
\Cref{eq:kernelreg} as
\[ \int\limits_{\mathclap{\supp \eta_0}}
     \left(
       \left(\R^*\right)^{-1}
       \left\{
         \L^{-1}_{\phi,\vxi} f
       \right\}(\vxi,\gamma)
     \right)^2
   \dd{\psi(\vxi,\gamma)}.
\]
From this representation, we can see that this is the square of the
$L^2(\supp \eta_0,\psi)$-norm of $\left(\R^*\right)^{-1} \left\{
\L^{-1}_{\phi,\vxi} f \right\}$.
Because $\L^{-1}_{\phi,\vxi}$ and $\left(\R^*\right)^{-1}$ are linear in $f$,
the composition defines a seminorm\footnote{Both $\L^{-1}_{\phi,\vxi}$ and
$\left(\R^*\right)^{-1}$ have non-trivial null spaces, so we only get a
positive semi-definite functional, hence this is a seminorm instead of a
norm.}:
\[ \|f\|_{\R,\phi,\eta_0}
     \triangleq
     \left\|
       \left(\R^*\right)^{-1} \left\{ \L^{-1}_{\phi,\vxi} f \right\}
     \right\|_{L^2(\supp \eta_0,\psi)}
\]
We refer to this as the ``Radon seminorm'' of $f$.

\subsection{Fourier Interpretation}\label{sec:fourier}
The central slice theorem (see e.g.~\cite[p. 32]{Kuchment2013Radon}
or~\cite[p. 4]{Helgason2011Integral}) relates the ($D$-dimensional) Radon
transform to the (1-dimensional and $D$-dimensional) Fourier transform via
\[\F_\gamma\left[\R\{g\}\right](\vxi,\vartheta) = \F_D[g](\vartheta\vxi). \]
Using this result, we can move the squared term of \Cref{eq:kernelreg} to Fourier
space, giving
\begin{equation}\label{eq:fourierreg}
  \begin{IEEEeqnarrayboxm}[][c]{l}
    \|f\|_{\R,\phi,\eta_0}^2\\
    \hspace{5mm}
    = \int\limits_{\mathclap{\SS^{D-1}\times\RR}}
         \frac{\kappa_D^2}{\eta_0(\vxi,\gamma)}
         \left(
           \F_\gamma^{-1}
           \left[
             \frac{|\vartheta|^{D-1}}{\F_\gamma[\phi](\vartheta)}
             \F_D[f](\vartheta\vxi)
           \right](\gamma)
         \right)^2
       \dd{\vxi}\dd{\gamma}\!.
  \end{IEEEeqnarrayboxm}
\end{equation}
From this, we see that fractional Laplacian from the Radon inversion formula
and the convolutional inverse of the activation function act as high-pass
filters, so that the overall regularization is to dampen high frequencies.
This frequency-based regularization is modulated by the $1/\eta_0(\vxi,\gamma)$
term such that regions of low density are \emph{more} regularized.

This connection with Fourier analysis should not be too surprising: some hints
at such a relationship have existed as far back as Barron's 1993
paper~\cite{Barron1993Universal}, where the study of superpositions of sigmoid
functions is restricted to functions whose Fourier transform have finite first
moment, indicating that functions with ``too much'' high frequency content are
hard to approximate with NNs.
Additionally, more recent works have observed empirically (and, in the case of
shallow kernel-regime learning, with some theoretical support) that (deep) NNs
fit lower frequencies first~\cite{Zhang2019Explicitizing, Xu2019Training,
Xu2020Frequency, Rahaman2019Spectral}.

To fully understand and interpret \Cref{eq:fourierreg}, we consider its
component pieces, starting with the ``innermost'' term
$\F_D[f](\vartheta\vxi)$; this corresponds to a minimal objective of
\begin{IEEEeqnarray*}{r;l}
  \cO_1(f)
    &= \int\limits_{\mathclap{\SS^{D-1}\times\RR}}
         \left(\F_\gamma^{-1}\left[\F_D[f](\vartheta\vxi)\right](\gamma)\right)^2
       \dd{\vxi}\dd{\gamma}\\
    &= \int\limits_{\mathclap{\SS^{D-1}\times\RR}}
         \left|\F_D[f](\vartheta\vxi)\right|^2
       \dd{\vxi}\dd{\vartheta},\\
\shortintertext{%
where we have used Plancherel's theorem to evaluate the integral in frequency
space.
This objective minimizes the $L^2$ norm of the Fourier transform, \emph{in
non-Euclidean ``radial'' coordinates}.
Reparameterizing into Euclidean coordinates gives}
    &=2\int\limits_{\mathclap{\RR^D}}
         \frac{1}{k^{D-1}}
         \left|\F_D[f](\vk)\right|^2
       \dd{\vk}\\
\shortintertext{%
where $k \triangleq \|\vk\|_2$ (note that the $1/k^{D-1}$ term is
\emph{outside} the squared modulus); the factor of 2 comes from the fact that
the original integral ``double-counts'' because
$(-\vartheta)(-\vxi)=\vartheta\vxi$.
Next, we consider the term $|\vartheta|^{D-1}$ from \Cref{eq:fourierreg}, which
corresponds to taking the fractional Laplacian
$\left(-\nabla^2\right)^{(D-1)/2}$ of $f$, and comes from the inversion formula
for the dual Radon transform.
In the medical imaging literature \cite{Kuchment2013Radon}, this high-pass
filter is referred to as a ``deblurring'' operation: applying the Radon then
dual Radon transforms without it results in a blurred version of the original
input function.
Re-introducing this term gives the new intermediate objective
}
  \cO_2(f)
    &=2\int\limits_{\mathclap{\RR^D}}
         \frac{1}{k^{D-1}}
         \left|k^{D-1}\F_D[f](\vk)\right|^2
       \dd{\vk}\\
    &=2\int\limits_{\mathclap{\RR^D}}
         \left|
           k^{(D-1)/2}
           \F_D[f](\vk)
         \right|^2
       \dd{\vk}
\shortintertext{%
Thus, we are now \textit{penalizing higher frequencies more than lower ones}
via the factor $k^{(D-1)/2}$.
Next, we re-introduce the activation function term.
Because the original form is a function of $\vartheta$, the ``double-counting''
that lead to the 2 out front could be broken.
However, because $\phi(\cdot)$ is purely real, $\F_\gamma[\phi](-\vartheta) =
\overline{\F_\gamma[\phi](\vartheta)}$ and we have
$|\F_\gamma[\phi](-\vartheta)|=|\F_\gamma[\phi](\vartheta)|$, so we are still
double counting, yielding}
  \IEEEyesnumber
  \cO_3(f)
    &=2\int\limits_{\mathclap{\RR^D}}
         \left|
           \frac{k^{(D-1)/2}}{\F_\gamma[\phi](k)}
           \F_D[f](\vk)
         \right|^2
       \dd{\vk}\\
\shortintertext{%
This adds another weight based on the magnitude of $\vk$; for typical
activation functions, this gives high weight to large frequency magnitudes.
This objective corresponds to a hypothetical uniform density of zero-planes
throughout all of $\C^{D-1}$.
We can expand the modulus-squaring as
}
    \IEEEyesnumber\label{eq:sepsquares}
    &=2\int\limits_{\mathclap{\RR^D}}
         \frac{k^{D-1}}{\left|\F_\gamma[\phi](k)\right|^2}
         \left|
           \F_D[f](\vk)
         \right|^2
       \dd{\vk}\\
    &\triangleq
      2\int\limits_{\mathclap{\RR^D}}
         \rho_{\R,\phi}(k)
         \left|
           \F_D[f](\vk)
         \right|^2
       \dd{\vk}\\
\shortintertext{%
where we call $\rho_{\R,\phi}(k) \triangleq \rho_{\R}(k) \rho_{\phi}(k)$ the
\emph{spectral penalty} induced by the architecture and activation function
$\phi$; we call $\rho_{\R}(k) \triangleq k^{D-1}$ and $\rho_{\phi}(k)
\triangleq 1/\left|\F_\gamma[\phi](k)\right|^2$ the \emph{factors} of the
spectral penalty corresponding to the architecture and activation,
respectively.
Examples of $\rho_\phi(k)$ for various $\phi(z)$ are shown
in~\Cref{tab:activation,fig:activation}).
}
\end{IEEEeqnarray*}

To relate these back to \Cref{eq:fourierreg} more explicitly, we note that
$\cO_1(f)$ corresponds to $\|f\|_{\R,\phi,\eta_0}^2$ for $\phi(\cdot)$ such
that $\F_\gamma[\phi](\vartheta) = |\vartheta|^{D-1}$ and an improper
``density'' $\eta_0(\cdot,\cdot)\equiv 1$.
$\cO_2(f)$ is equivalent to $\|f\|_{\R,\phi,\eta_0}^2$ for $\phi(\cdot)$ such
that $\F_\gamma[\phi](\vartheta) = 1$, i.e. $\phi(\cdot) = \delta(\cdot)$, and
$\eta_0(\cdot,\cdot)\equiv 1$.
$\cO_3(f)$ allows for an arbitrary activation function, but retains the
improper zero-plane density.

Ideally, we would use the convolution theorem then Plancherel's theorem to
re-introduce the $1/\eta_0(\vxi,\gamma)$ term and have a form of
\Cref{eq:fourierreg} entirely in Fourier space.
Unfortunately because $\eta_0(\vxi,\gamma)$ is a density,
$\lim_{\gamma\to\infty}1/\eta_0(\vxi,\gamma)=\infty$, so
$1/\eta_0(\vxi,\gamma)$ is not in $L^2$, and does not have a Fourier transform.
In other words, the zero-plane density term cannot be directly interpreted in
Fourier space.

\begin{table*}
\centering
\resizebox{0.84\textwidth}{!}{
\renewcommand{\arraystretch}{1.2}
\begin{tabular}{r|M|M|Ml}
  Name
    & \text{Activation Function}
    & \text{Filter}
    & \text{Spectral Penalty}
    \\
    & \phi(z)
    & \F[\phi](k)^{-1}
    & \rho_\phi(k) = \left|\F[\phi](k)\right|^{-2}
  \\\hline
  Dirac
    & \delta(z)
    & 1
    & 1\\
  Step
    & \Theta(z)
    & ik
    & k^2\\
  ReLU
    & (z)_+
    & -k^2
    & k^4\\
  Power ReLU
    & \frac{(z)_+^{\lambda-1}}{\Gamma(\lambda)}
    & (ik)^\lambda
    & |k|^{2\lambda}
    & \rule[-12pt]{0pt}{30pt}\\\hline
  Logistic Bump
    & \frac{\sigma e^{-\sigma z}}{\left(1 + e^{-\sigma z}\right)^2}
    & \frac{\sigma}{\pi k} \sinh(\frac{\pi k}{\sigma})
    & \frac{\sigma^2}{\pi^2 k^2}\sinh^2(\frac{\pi k}{\sigma})
    & \rule[-12pt]{0pt}{30pt}\\
  Sigmoid (Logistic)
    & \frac{e^{\sigma z}}{1+e^{\sigma z}}
    & \frac{i\sigma}{\pi} \sinh(\frac{\pi k}{\sigma})
    & \frac{\sigma^2}{\pi^2}\sinh^2(\frac{\pi k}{\sigma})
    & \rule[-12pt]{0pt}{30pt}\\
  SoftPlus
    & \frac{1}{\sigma} \ln(1+e^{\sigma z})
    & -\frac{\sigma k}{\pi}\sinh(\frac{\pi k}{\sigma})
    & \frac{\sigma^2 k^2}{\pi^2}\sinh^2(\frac{\pi}{\sigma}k)
    & \rule[-12pt]{0pt}{30pt}\\
  ``Power SoftPlus''
    & -\frac{1}{\sigma^n}\operatorname{Li}_n(-e^{\sigma z})
    & \frac{\sigma i^{n+1} k^n}{\pi} \sinh(\frac{\pi k}{\sigma})
    & \frac{\sigma^2 k^{2n}}{\pi^2}\sinh^2(\frac{\pi }{\sigma}k)
    & \rule[-12pt]{0pt}{30pt}\\\hline
  Cauchy
    & \frac{1}{\pi\sigma} \frac{1}{1+\left(\frac{z}{\sigma}\right)^2}
    & e^{\sigma|k|}
    & e^{2\sigma|k|}
    & \rule[-10pt]{0pt}{26pt}\\
  Arctangent
    & \frac{1}{\pi} \atan(\frac{z}{\sigma})
    & -ik e^{\sigma|k|}
    & k^2 e^{2\sigma|k|}
    & \rule[-10pt]{0pt}{26pt}\\
  Gaussian
    & \frac{1}{\sigma\sqrt{2\pi}} e^{-\frac{1}{2}\left(\frac{z}{\sigma}\right)^2}
    & e^{\frac{\sigma^2k^2}{2}}
    & e^{\sigma^2k^2}
    & \rule[-8pt]{0pt}{24pt}\\
  Erf
    & \frac{1}{2}\erf\left(\frac{z}{\sigma\sqrt{2}}\right)
    & -ik e^{\frac{\sigma^2 k^2}{2}}
    & k^2 e^{\sigma^2 k^2}
    & \rule[-8pt]{0pt}{24pt}\\
  G-function
    & \phi_n(z)\text{ as in \Cref{eq:gactivation}}
    & \exp\left[\frac{\sigma^n |k|^n}{n^{n-1}}\right]
    & \exp\left[\frac{2\sigma^n |k|^n}{n^{n-1}}\right]
    & \rule[-12pt]{0pt}{28pt}\\\hline
  SatReLU
    & (z)_+ - (z - \Delta)_+
    & \frac{-k^2}{1-e^{-i\Delta k}}
    & \frac{1}{2} \frac{k^4}{1 - \cos(\Delta k)}
    & \rule[-12pt]{0pt}{30pt}\\
  Wavepacket
    & \frac{\cos(\omega z)e^{-\frac{1}{2}\left(\frac{z}{\sigma}\right)^2}}{\sigma\sqrt{2\pi}}
    & \frac{2}{e^{-\frac{\sigma^2(k+\omega)^2}{2}} + e^{-\frac{\sigma^2(k-\omega)^2}{2}}}
    & \frac{4}{\left(e^{-\frac{\sigma^2(k+\omega)^2}{2}} + e^{-\frac{\sigma^2(k-\omega)^2}{2}}\right)^2}
    & \rule[-20pt]{0pt}{34pt}\\
  Rectangle
    & \operatorname{rect}(az)
    & \frac{k}{2\sin(\frac{k}{2a})}
    & \frac{k^2}{4\sin[2](\frac{k}{2a})}
    & \rule[-12pt]{0pt}{30pt}\\
  Triangle
    & \operatorname{tri}(az)
    & \frac{k^2}{4a\sin[2](\frac{k}{2a})}
    & \frac{k^4}{16a^2\sin[4](\frac{k}{2a})}
    & \rule[-12pt]{0pt}{30pt}\\
  Sinc
    & \frac{\sin(\pi a z)}{\pi a z}
    & \frac{a}{\operatorname{rect}\left(\frac{k}{2\pi a}\right)}
    & \frac{a^2}{\operatorname{rect}\left(\frac{k}{2\pi a}\right)}
    & \rule[-12pt]{0pt}{30pt}\\
  Squared Sinc
    & \frac{\sin[2](\pi a z)}{\pi^2 a^2 z^2}
    & \frac{a}{\operatorname{tri}\left(\frac{k}{2\pi a}\right)}
    & \frac{a^2}{\operatorname{tri}^2\left(\frac{k}{2\pi a}\right)}
    & \rule[-12pt]{0pt}{30pt}\\
  Half-Exponential
    & e^{-az}\theta(z)
    & a + ik
    & a^2 + k^2
    & \rule[-4pt]{0pt}{20pt}\\
  Hyperbolic Secant
    & \sech(a z)
    & \frac{a}{\pi} \cosh(\frac{\pi k}{2a})
    & \frac{a^2}{\pi^2} \cosh[2](\frac{\pi k}{2a})
    & \rule[-12pt]{0pt}{30pt}\\
  Log-Absolute
    & \log |z|
    & -\frac{|k|}{\pi}
    & \frac{k^2}{\pi^2}
    & \rule[-12pt]{0pt}{30pt}\\
  Oberhettinger I.84~\cite{Oberhettinger1973Fourier}
    & |z|^{-\frac{3}{2}}
      e^{-a/|z|}
    & \sqrt{\frac{a}{\pi}}
      e^{\sqrt{2ak}}
      \sec(\sqrt{2ak})
    & \frac{a}{\pi}
      e^{2\sqrt{2ak}}
      \sec[2](\sqrt{2ak})
    & \rule[-12pt]{0pt}{30pt}\\
  Oberhettinger I.70~\cite{Oberhettinger1973Fourier}
    & e^{-a|z|}(1-e^{-b|z|})^{\nu-1}
    & \frac
        {2b}
        {B\!\left(\nu,\frac{a-ik}{b}\!\right)
        +B\!\left(\nu,\frac{a+ik}{b}\!\right)
        }
    & \frac
        {4b^2}
        {\left[
           B\!\left(\nu,\frac{a-ik}{b}\!\right)
          +B\!\left(\nu,\frac{a+ik}{b}\!\right)
         \right]^2
        }
    & \rule[-12pt]{0pt}{30pt}\\
  Oberhettinger III.10~\cite{Oberhettinger1973Fourier}
    & (z-b)^{\nu-1}(z+b)^{-\nu-\frac{1}{2}}\llbracket z > b \rrbracket
    & \frac
        {\sqrt{b}}
        {2^{\nu-\frac{1}{2}}
         \Gamma(\nu)
         D_{-2\nu}\left(2\sqrt{ibk}\right)
        }
    & \frac
        {b}
        {2^{2\nu-1}
         \Gamma(\nu)^2
         D_{-2\nu}\left(2\sqrt{ibk}\right)^2
        }
    & \rule[-12pt]{0pt}{30pt}\\
\end{tabular}
}
\caption{Filters and Penalty Factors for various activations.
$\operatorname{Li}_n(\cdot)$ is the polylogarithm of order $n$ (for $n=1$, we
recover SoftPlus).
$\Gamma(\cdot)$ is the Gamma function, $B(\cdot,\cdot)$ is the Beta function,
and $D_\nu(\cdot)$ is the parabolic cylinder function.
}
\label{tab:activation}
\end{table*}

\newcommand{\actfig}[2]
{%
  \begin{tabular}{@{}c@{}}
    \includegraphics{#1-a.pdf}
    ~
    \includegraphics{#1-b.pdf}
    \\
    #2
  \end{tabular}
}
\begin{figure*}
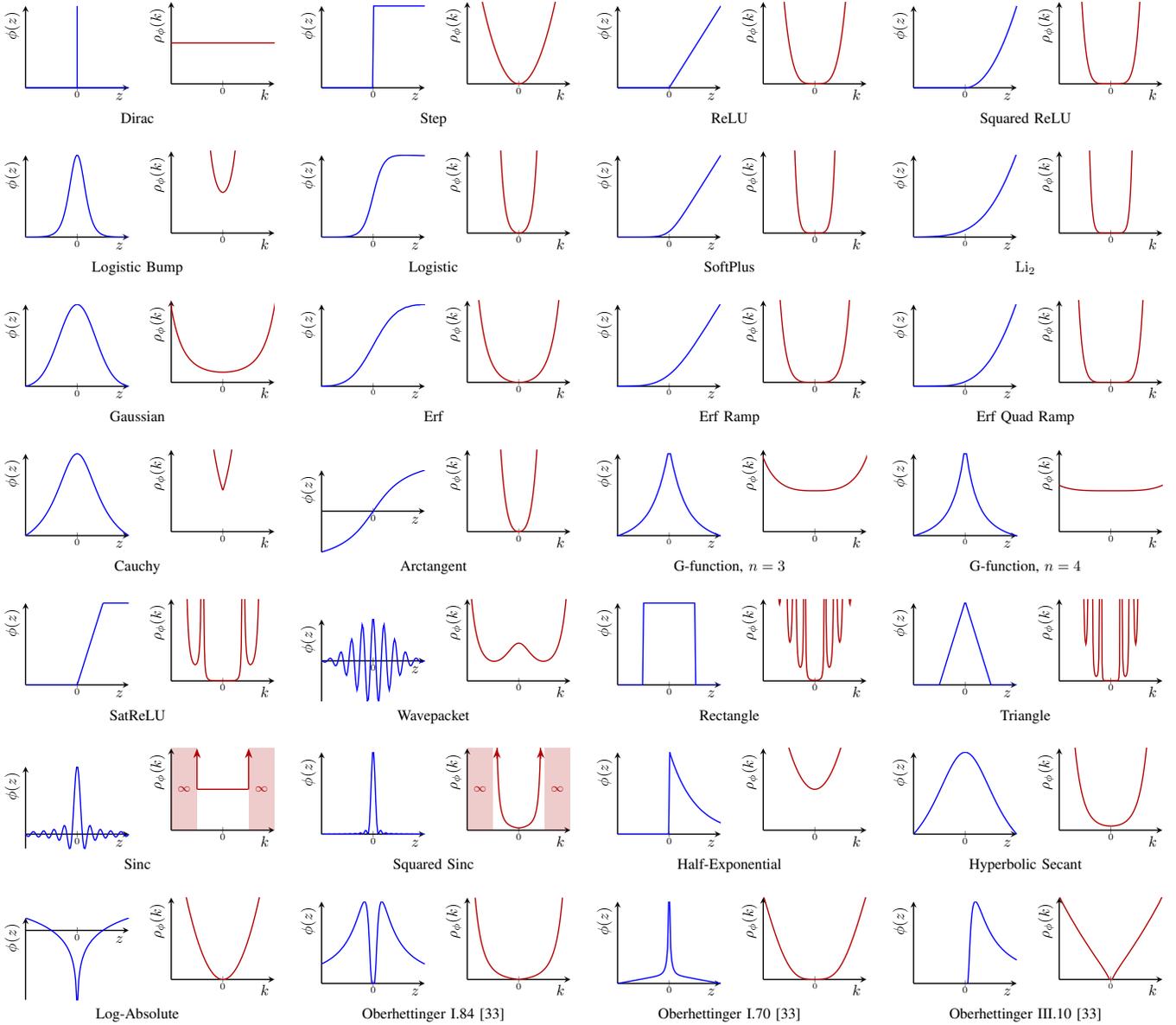

  \resizebox{\textwidth}{!}{
  \begin{tabular}{@{}l@{}}
  \actfig
    {Dirac}
    {Dirac}
  ~
  \actfig
    {Step}
    {Step}
  ~
  \actfig
    {ReLU}
    {ReLU}
  ~
  \actfig
    {Squared_ReLU}
    {Squared ReLU}
  \\[18mm]
  \actfig
    {Logistic_Bump}
    {Logistic Bump}
  ~
  \actfig
    {Logistic}
    {Logistic}
  ~
  \actfig
    {SoftPlus}
    {SoftPlus}
  ~
  \actfig
    {li2}
    {$\operatorname{Li}_2$}
  \\[18mm]
  \actfig
    {Gaussian}
    {Gaussian}
  ~
  \actfig
    {Erf}
    {Erf}
  ~
  \actfig
    {Erf_Ramp}
    {Erf Ramp}
  ~
  \actfig
    {Erf_Quad_Ramp}
    {Erf Quad Ramp}
  \\[18mm]
  \actfig
    {Cauchy}
    {Cauchy}
  ~
  \actfig
    {Arctangent}
    {Arctangent}
  ~
  \actfig
    {G-function,_3}
    {G-function, $n=3$}
  ~
  \actfig
    {G-function,_4}
    {G-function, $n=4$}
  \\[18mm]
  \actfig
    {SatReLU}
    {SatReLU}
  ~
  \actfig
    {Wavepacket}
    {Wavepacket}
  ~
  \actfig
    {Rectangle}
    {Rectangle}
  ~
  \actfig
    {Triangle}
    {Triangle}
  \\[18mm]
  \actfig
    {Sinc}
    {Sinc}
  ~
  \actfig
    {Squared_Sinc}
    {Squared Sinc}
  ~
  \actfig
    {Half-Exponential}
    {Half-Exponential}
  ~
  \actfig
    {Hyperbolic_Secant}
    {Hyperbolic Secant}
  \\[18mm]
  \actfig
    {Log-Absolute}
    {Log-Absolute}
  ~
  \actfig
    {unknown1}
    {Oberhettinger I.84~\cite{Oberhettinger1973Fourier}}
  ~
  \actfig
    {unknown2}
    {Oberhettinger I.70~\cite{Oberhettinger1973Fourier}}
  ~
  \actfig
    {unknown3}
    {Oberhettinger III.10~\cite{Oberhettinger1973Fourier}}
  \end{tabular}
  }
\caption{Activation functions $\phi(z)$ and their spectral penalty factors
$\rho_\phi(k)$.
For Sinc, and Squared Sinc, $\rho_\phi(k)$ is infinite outside the interval
$(-a,a)$, as indicated by the shaded region.}\label{fig:activation}
\end{figure*}

\subsection[Designing the Activation Function phi]{Designing the Activation Function $\phi$}\label{sec:choose}
Using these equations--especially \Cref{eq:sepsquares}--we can reverse engineer
an activation function from a desired spectral penalty factor $\rho(k)$:
\begin{equation}\label{eq:phirho}
  \phi_\rho(z) \triangleq \F^{-1}\left[\frac{1}{\sqrt{\rho(k)}}\right]\!(z)
\end{equation}

Note that we are inverting the squared magnitude in \Cref{eq:sepsquares}, a
many-to-one function; the inverse, which we have just written with
$\sqrt{\cdot}$, is therefore not unique.
For example, for a quadratic spectral penalty factor $\rho(k) \propto k^2$, we
can invert to $\F[\phi](k)^{-1} = ik$ to yield the Step activation, or we could
invert to $-|k|$ to yield the Log-Absolute activation.
In general, we can invert to $\zeta(k) k$ for any $\zeta : \RR\to\SS^1\subset
\CC$, which maps $k$ to any complex phase.

In the special case of the Power ReLU family's polynomial penalty factor,
writing $\rho(k) = k^{2\lambda}$, we have a closed form for the operator
$\L_{\phi,\vxi}^{-1}$: $\DD^{\lambda}_{+,\vxi}$, the (1-dimensional)
right-sided Riemann-Liouiville fractional derivative of order $\lambda$ applied
in the direction of $\vxi$; for integer values of $\lambda=n$, these are just
the directional derivatives $\partial_{\vxi}^n$.
(See~\Cref{sec:actderiv}.)
Using this, if we have a known order of derivative we wish to penalize, we can
choose the corresponding Power ReLU activation.
We can also use this to reason about activations built from (Power) ReLU
functions, such as the saturating ReLU $\operatorname{SatReLU}(z) = (z)_+ - (z
- \Delta)_+$, whose operator is $\L_{\phi,\vxi}^{-1} : f \mapsto
\sum_{j=0}^{\infty} \nabla^2 f(\cdot + \vxi\Delta j)$.

We can also use~\Cref{eq:phirho} to derive novel activation functions.
For example, we see from \Cref{tab:activation} that the Cauchy and Gaussian
activations have spectral penalty factors $\propto e^{2n|\frac{\sigma
k}{n}|^n}$ for $n=1$ and $n=2$, respectively.
Extending this pattern to higher $n$ yields a representation in terms of Meijer
G-functions:
\begin{equation}\label{eq:gactivation}
  \begin{IEEEeqnarrayboxm}[][c]{r;l}
    \phi_n(z)
      &\triangleq
         \frac{c_n}{\sigma}
         \left[
           G^{n,1}_{1,n}
             \left(
               \begin{matrix}\frac{n-1}{n}\\0,\frac{1}{n},\ldots,\frac{n-1}{n}\end{matrix}
             \middle|
               \frac{i^nz^n}{n\sigma^n}
             \right)
         \right.\\&\hspace{10mm}\left.
         + G^{n,1}_{1,n}
             \left(
               \begin{matrix}\frac{n-1}{n}\\0,\frac{1}{n},\ldots,\frac{n-1}{n}\end{matrix}
             \middle|
               \frac{(-i)^nz^n}{n\sigma^n}
             \right)
         \right]
  \end{IEEEeqnarrayboxm}
\end{equation}
where
\[ c_n
     \triangleq
       \frac
         {\sqrt{n}}
         {n^n 2^{\left\lceil\frac{n}{2}\right\rceil} \pi^{\frac{n+1}{2}}}.
\]
This activation with $n=3$ and $n=4$ is included in \Cref{fig:activation}.
Integrating any of these with respect to $z$ yields the Arctan and Erf
sigmoidal activation functions for $n=1$ and $n=2$; higher values of $n$ also
yield sigmoidal functions represented in terms of integrals of G-functions,
which tend toward a constant function as $n\to\infty$.
For $n>1$, additional integrals yield smooth approximations of the Power ReLU
family (for $n=1$, the fat tails of the Cauchy mean that the antiderivative of
$\atan(z)+\frac{\pi}{2}$ tends to $-\infty$ as $z\to-\infty$, rather than
approaching 0).

\subsection{The Radon Seminorm, Generalization, and the Curse of
Dimensionality.}\label{sec:curse}

We may also use the Radon seminorm and its Fourier interpretation to reason
about the generalization properties of learned functions.
Following \cite{Ongie2020Function}, we consider the contractions
$f_\varepsilon(\vx) \triangleq f(\vx/\varepsilon)$ for small $\varepsilon > 0$.
To connect these contractions to generalization, suppose $f(\cdot)$ is a
bump function centered at the origin.
Then, $f_\varepsilon(\cdot)$ is a sharper (or ``spikier'') bump at the origin,
as shown in \Cref{fig:contract}.
If our regularizer penalizes contractions, it then prefers less-spiky (i.e.\
smoother) bumps.
The calculations below are invariant to translations, and using the triangle
inequality gives the same threshold for spikiness penalization for a sum of
bumps.
Then, consider some function $g(\cdot)$ which can be represented as a sum of
bump functions centered at each datapoint.
If our regularizer fails to penalize contractions, any $g'(\cdot)$ with sharper
bumps will have lower cost.
Then, the lowest cost function will have infinitely-sharp bumps at datapoints
(i.e.\ a ``bed of nails'' fit), which will predict 0 for all inputs except the
training data and thus have no generalization at all.

\begin{figure}
  \resizebox{\columnwidth}{!}{\includegraphics{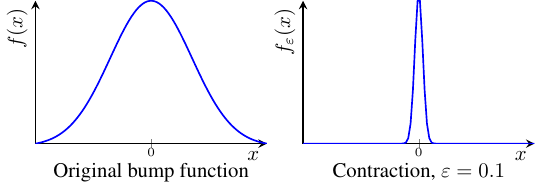}}
  \caption{A bump function $f(x)$ and its contraction, $f_\varepsilon(x)$ for $\varepsilon=0.1$.}
  \label{fig:contract}
\end{figure}

Therefore, we wish to show that our regularizer penalizes contractions.
Towards this goal, considering the Radon seminorm of $f_\varepsilon(\vx)$, we
have
\begin{IEEEeqnarray*}{r;l}
  \mathrlap{\|f_\varepsilon(\vx)\|_{\R,\phi,\eta_0}^2}
  \hspace{3mm}\\*
    &= \varepsilon^{-1}
       \!\!\!
       \int\limits_{\mathclap{\SS^{D-1}\times\RR}}
          \!\!
          \frac{\kappa_D^2}{\eta_0(\vxi,\varepsilon\gamma)}
          \left(
            \!
            \F_\gamma^{-1}
            \!
            \left[
              \frac{|\vartheta|^{D-1}}{\F_\gamma[\phi_\varepsilon](\vartheta)}
              \F_D[f](\vartheta\vxi)
            \right](\gamma)
          \right)^2
          \!\!\!
        \dd{\vxi}\dd{\gamma}\\
\shortintertext{%
where $\phi_\varepsilon(\cdot)=\phi(\varepsilon\cdot)$ is the \emph{dilation}
of $\phi$; $\eta_0(\cdot,\cdot)$ is likewise dilated in its second
argument.
If we suppose $\eta_0(\cdot,\cdot)$ is constant, this simplifies to}
    &= \varepsilon^{-1}
       \int\limits_{\mathclap{\RR^D}}
         k^{D-1}
         \rho_{\phi_\varepsilon}\mkern-2mu(k)
         \left|
           \F_D[f](\vk)
         \right|^2
       \dd{\vk}
\end{IEEEeqnarray*}
where $\rho_{\phi_\varepsilon}\mkern-2mu(k)$ is the spectral penalty factor
corresponding to the dilated activation $\phi_\varepsilon(\cdot)$.
To achieve generalization, we need $\lim_{\varepsilon\to0}
\|f_\varepsilon(\vx)\|_{\R,\phi,\eta_0}^2 = \infty$ so that ``spikier''
functions (small $\varepsilon$) are penalized more.
To have this, we need $\rho_{\phi_\varepsilon}\mkern-2mu(k) = o(\varepsilon)$,
which requires $\phi(\varepsilon z) = \omega\left(\varepsilon^{-1/2}\right)
\phi(z)$, which is independent of $D$.

If we remove the effects of the NN architecture (i.e.\ the $k^{D-1}$
in~\Cref{eq:sepsquares}) and the activation function (the
$1/|\F_\gamma[\phi](k)|^2$ term), we are left with minimizing $\|\F[f]\|_2 =
\|f\|_2$.
In this case, $\|f_\varepsilon\|_2 = \varepsilon^D\|f\|_2$, which leads to a
\emph{curse of dimensionality}, as spikier, non-generalizing functions are
preferred \emph{exponentially} in $D$.
Thus, the use of the shallow NN architecture and its relationship to the Radon
transform is indispensable in avoiding this curse of dimensionality.

\section{Adaptive Regime}\label{sec:adapt}

Recent work has shown the adaptive regime to be more
powerful~\cite{Lee2020Finite, Flesch2022Orthogonal}: as one transitions from
the kernel regime to the adaptive regime there is typically an increase in
generalization performance~\cite{Mehta2021Extreme}, which arises from the power
to adapt the zero-plane density $\eta_t(\vxi,\gamma)$ to the training data.
For example, an infinite-width deep convolutional NTK model achieves within 5\%
of a finite-width adaptive regime model~\cite{arora2019exact}.
In general, adaptive NNs can approximate a more complex class of functions than
the corresponding kernel-regime RKHS models~\cite{Ghorbani2020Neural}.

Learning dynamics in the adaptive regime are more complex, and so we do not
expect an equation as simple as~\Cref{eq:muopt} to hold.
Nevertheless, we will see that the $\theta_{\tRS}$ ``spline'' parameterization
is also useful in the adaptive regime.

The Fourier view will also turn out to offer insights and so we start by
considering the Fourier transform of a finite-width network:
\[
  \F_D[f_{\theta_{\tRS}}](\vk)
    = \sum_{j=1}^H
        \mu_i
        e^{-i\gamma_i \langle\vk,\vxi_i\rangle}
        \F_1[\phi_{\omega_i}](\langle\vk,\vxi_i\rangle)
        \delta_{\vxi_i}\!(\vk)
\]
where we define the ``Dirac-line'' distribution $\delta_{\vxi_i}\!(\vk)$ by
$\langle \delta_{\vxi_i},\psi\rangle \triangleq \int_\RR \psi(u\vxi_i) \dd{u}$.
Note that this distribution is only supported on lines through the origin parallel to
the $\vxi_i$.
The magnitude of the (complex) ``height'' along each line is given by
$\left|\mu_i\F_1[\phi_{\omega_i}](k)\right|$; for typical activation functions,
this will be concentrated at the origin.
Suppose that the target function $f^*$ has a periodic component in some
direction $\vxi^*$ with frequency $\lambda^*$.
Then, $\F_D[f^*](\vk)$ will have a local maximum at $\lambda^*\vxi^*$.
Then, the only way for $\F_D[f_{\theta_{\tRS}}](\vk)$ to well-approximate this
is if there are multiple $\vxi_i\approx\vxi^*$ with differing $\gamma_i$ such
that the complex sinusoids $e^{-i\gamma_i \langle\vk,\vxi_i\rangle}$
constructively interfere at the offset $\lambda^*$ in a way which counteracts
the decay of $\F_1[\phi](\langle\vk,\vxi_i\rangle)$.
This corresponds to having parallel zero-planes with spacing
$\propto\frac{1}{\lambda^*}$.

In the kernel regime, the spacing and alignment of zero-planes is governed by
the random initialization $\eta_0(\ldots)$.
For typical initalization schemes, this initial distribution is diffuse, so
that large $H$ is necessary to ensure that such parallel zero-planes exist.
Additionally, there is a curse of dimensionality at play here: as $D$
increases, the required $H$ grows exponentially.
In the adaptive regime, both $\vxi_i$ and $\gamma_i$ can be learned, so such
interference patterns can hypothetically be (approximately) orchestrated.
Accordingly, we examine these dynamics next.

\subsection[Dynamics of the Radon Spline Parameters, theta RS]{Dynamics of the Radon Spline Parameters, $\theta_{\tRS}$}
First, we consider training a ReLU network directly in the $\theta_{\tRS}$
parameterization.
Let $\widetilde{\vx} = (x_1,\ldots,x_D,1)$, $\widetilde{\vxi} =
(\xi_1,\ldots,\xi_D,-\gamma)$, and let $\C^{D-1}\triangleq\SS^{D-1}\times\RR$
denote the hyper-cylinder of possible breakplane coordinates.
This way, we have a single parameter that completely determines each
breakplane.
Let $\widetilde{\ell}(\widetilde{\vxi}_i|\ldots)$ denote the loss
$\widetilde{\ell}(\theta_{\tRS})$ with all parameters except
$\widetilde{\vxi}_i$ fixed.
Then, $\widetilde{\ell}(\widetilde{\vxi}_i|\ldots)$ is\footnote{Let
$\widecheck{\vxi}_i$ denote the embedding of $\widetilde{\vxi}_i$ into
$\RR^{D+1}$, and let $\widecheck{\ell}(\widecheck{\vxi}_i|\ldots)$ be the
extension of $\widetilde{\ell}(\widetilde{\vxi}_i|\ldots)$ to $\RR^{D+1}$.
Then, $\widecheck{\ell}(\widecheck{\vxi}_i|\ldots)$ is a CPW-Quadratic
real-valued function with non-negative curvature on $\RR^{D+1}$, and therefore
its gradient is (discontinuous, in general) piecewise linear (PWL).
These properties imply corresponding properties of
$\widetilde{\ell}(\widetilde{\vxi}_i|\ldots)$, but because its domain is
$\C^{D-1}$, the corresponding properties cannot be linearity and quadraticity.
For brevity, in the remainder of this section, we will treat
$\widetilde{\ell}(\widetilde{\vxi}_i|\ldots)$ as CPWQ with a PWL gradient.}
CPW-Quadratic, with piece boundaries consisting of the hyper-ellipses $\E_n
\triangleq \left\{\widetilde{\vxi}_i\mid
\langle\widetilde{\vxi}_i,\widetilde{\vx}_n\rangle = 0 \right\}$ formed by
intersecting the datapoint-associated planes $\P_n\triangleq \left\{
\vz\in\RR^{D+1}\mid \langle\vz,\widetilde{\vx}_n\rangle = 0\right\}$ with the
cylinder $\C^{D-1}$.
Then, the hyper-ellipse $\E_n$ corresponds to all breakplanes that pass through
datapoint $\vx_n$.

\begin{figure*}
\centering
\resizebox{0.77\textwidth}{!}{\includegraphics{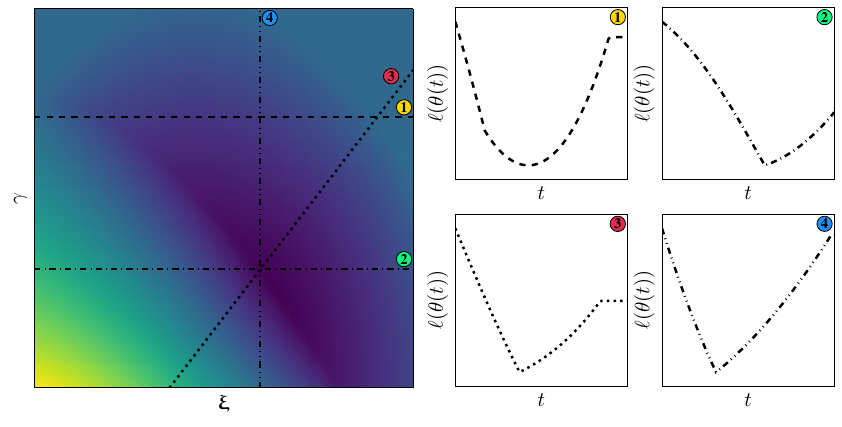}}
\resizebox{0.77\textwidth}{!}{\includegraphics{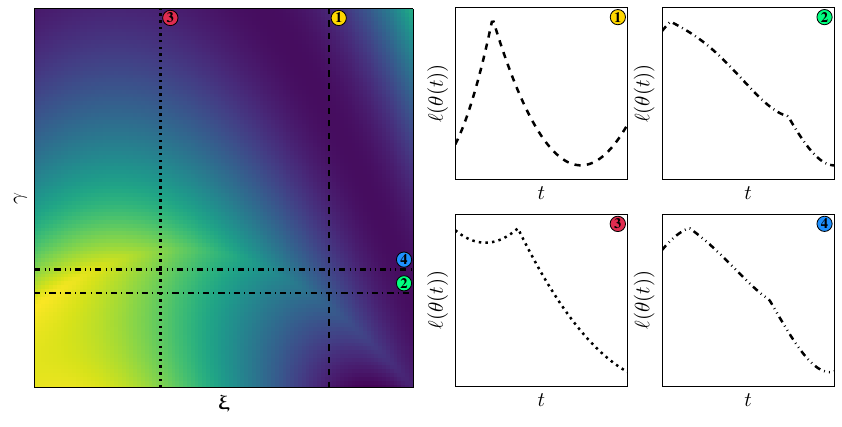}}
\resizebox{0.77\textwidth}{!}{\includegraphics{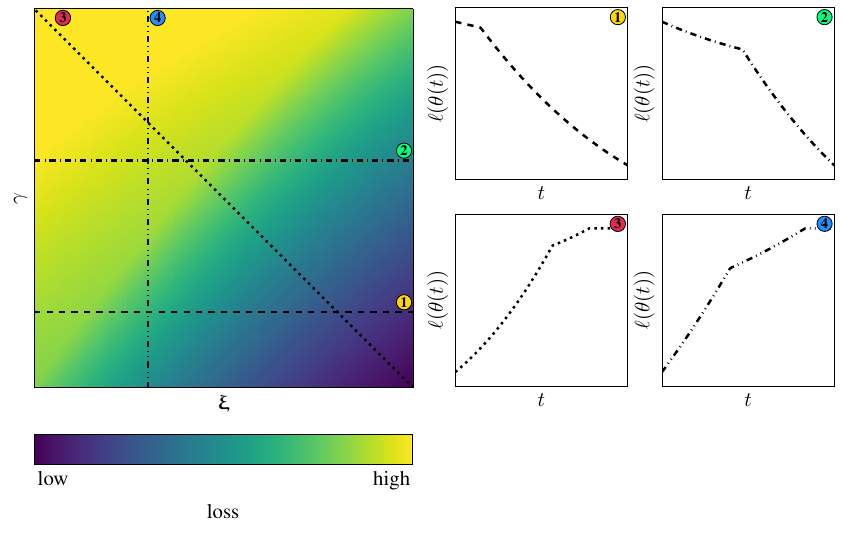}}
\caption{Local features of the loss surface slice
$\widetilde{\ell}(\widetilde{\vxi}_i|\ldots)$.
\textbf{Top}: a valley; \textbf{Middle}: a ridge; \textbf{Bottom}: a
pass-through crease.
\textbf{Left}: heatmap of the loss.
\textbf{Right}: 1-dimensional slices along numbered lines.}
\label{fig:loss2d}
\end{figure*}

\tikzset{
  , stylaba/.style={black,fill=Gold,draw,solid,thin,circle,inner sep=0.5pt,outer sep=3pt,font=\scriptsize\bfseries}
  , stylabb/.style={black,fill=SpringGreen,draw,solid,thin,circle,inner sep=0.5pt,outer sep=3pt,font=\scriptsize\bfseries}
  , stylabd/.style={black,fill=DodgerBlue,draw,solid,thin,circle,inner sep=0.5pt,outer sep=3pt,font=\scriptsize\bfseries}
}
\begin{figure*}
  \centering
  \resizebox{0.77\textwidth}{!}{\includegraphics{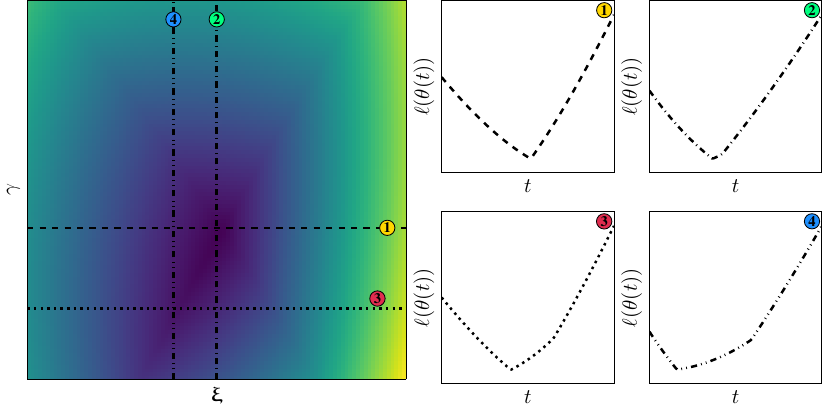}}
  \resizebox{0.77\textwidth}{!}{\includegraphics{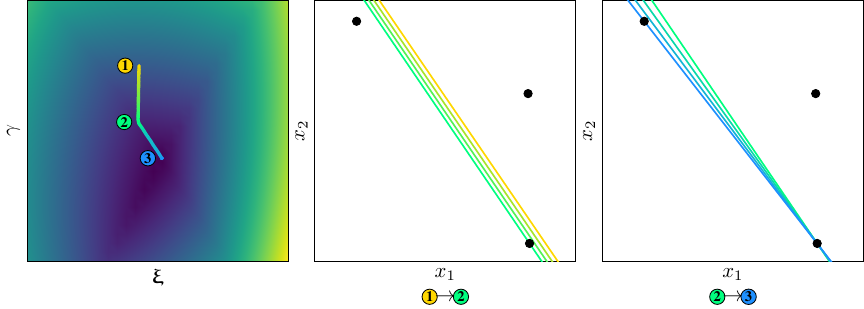}}
  \caption{
  Datapoint pinning: the region near the
  intersection of two datapoint ellipses $\E_n$ and $\E_m$ where both
  boundaries are valley floors.
  \textbf{Top Left:} heatmap of the loss.
  \textbf{Top Right:} 1-dimensional slices along numbered lines.
  \textbf{Bottom Left:} The parameter-space trajectory of a breakplane
  following gradient descent on $\widetilde{\ell}(\widetilde{\vxi}_i|\ldots)$.
  Starting at point \protect\tikz \protect\node [stylaba] {1};, the breakplane
  follows a nearly-vertical trajectory (i.e. almost all change is in
  $\gamma_i$) until it meets a valley floor at \protect\tikz \protect\node
  [stylabb] {2};, after which it remains confined to that valley floor, and is
  pinned by the corresponding datapoint.
  It then continues along the valley floor until it reaches the intersection
  point \protect\tikz \protect\node [stylabd] {3};, which is a local minima.
  \textbf{Bottom Right:} the trajectory of the breakplane in data space,
  showing that the breakplane first moves towards the bottom datapoint, then is
  constrained to rotate around that datapoint until it becomes pinned by the
  top datapoint as well.}
  \label{fig:pinning}
\end{figure*}

\subsubsection[Specializing to D=2]{Specializing to $D=2$}
In the case $D=2$, we have $\C^1=\SS^1\times\RR$, which is the ordinary
(infinite length) cylinder, and the piece boundaries $\E_n$ are ordinary
ellipses (embedded in $\RR^3$).

Consider a small neighborhood of a point on some boundary $\E_n$.
Then, $\widetilde{\ell}(\widetilde{\vxi}_i|\ldots)$ is smooth with non-negative
curvature (i.e.\ bowl-shaped) on either side of $\E_n$, with different
curvature on each side.
By continuity these two bowls must agree on the boundary $\E_n$.
This leads to the question: what shapes are possible \emph{along} that
boundary?
The answer is that $\widetilde{\ell}(\widetilde{\vxi}_i|\ldots)$ can take the
form of
\begin{itemize}
\item a ``ridge top'' (when the minima of the two bowls are contained on the
    same side of the boundary as their respective pieces);
\item a ``valley floor'' (when the minima are each contained on the opposite
    side); or
\item a ``pass-through crease'' (when both minima are contained on the same
    side).
\end{itemize}
Examples of these shapes are illustrated in \Cref{fig:loss2d}.
In the special case that one side of the boundary is the piece corresponding to
no active data (i.e. the active side of the breakplane points away from all
datapoints), this side will have zero curvature, and can take the form of
either a ``plateau'' such that loss is higher on the no-data side, or a flat
``basin'' such that loss is lower on the no-data side.
This last case is somewhat pathological, as breakplanes will be attracted to
the no-data configuration, and upon arrival will cease to receive gradient
updates.
Note that different regions of a single boundary $\E_n$ may have different
classifications.

Consider a datapoint $\vx_n$ and a value of $\widetilde{\vxi}_i$ near a region
where $\E_n$ is a valley floor.
Then, $\widetilde{\vxi}_i$ will be attracted to $\E_n$, and after a small
amount of training, will be confined to the valley floor, but may still have
gradient \emph{along} $\E_n$.
Such motion in parameter space corresponds to rotating the corresponding
breakline around the datapoint $\vx_n$; we say that neuron $i$ is \emph{pinned}
to $\vx_n$.

Then, consider additional datapoints: $\E_n$ will intersect any $\E_m$ at
exactly two points (unless $\vx_n=\vx_m$ in which case $\E_n=\E_m$).
If following the gradient along $\E_n$ does not reach a local minimum first, it
will eventually lead to one of the two intersection points with some $\E_m$.
If the region of $\E_m$ around this intersection point is also a valley floor,
then the intersection point will be a local minima where the breakline goes
through both datapoints.
This is illustrated in \Cref{fig:pinning}.

\subsubsection[Generalizing to D>2]{Generalizing to $D>2$}
The above analysis generalizes to higher dimensions as follows.
First, consider (for $D=2$) the neighborhood classification of some pinned
breakplane, $\widetilde{\vxi}_i\in\E_n$, and consider some contiguous region
$\N\subset\E_n$ containing $\widetilde{\vxi}_i$ for which the classification is
constant.
Then, $\N$ is a segment of an ellipse, and we can view it as an 1-manifold
embedded in $\C^1\subset\RR^3$.
Then, the neighborhood classification depends on the behavior of the loss as we
move along $\C^1$ normal to $\N$: e.g., if loss decreases then increases, we
have a valley floor.
Moving to $D>2$, $\N$ becomes a general $(D-1)$-manifold, but we can still move
normal to it, and we keep the same classification names as the $D=2$ case.

Assuming no datapoints are equal, the hyper-ellipses $\E_n$ intersect each
other in $(D-2)$-dimensional hyper-ellipses, which intersect as
$(D-3)$-dimensional hyper-ellipses, and so on until we have $D-1$
hyper-ellipses intersecting at 2 points.
Thus, the datapoint pinning phenomenon extends to higher dimensions: in a
region of $\E_n$ that is a valley floor, $\widetilde{\vxi}_i$ will be pinned to
$\vx_n$, but free to rotate around it ($D-1$ degrees of freedom).
At the intersection of $\E_n$ with another valley floor datapoint ellipse
$\E_m$, $\widetilde{\vxi}_i$ will be pinned to both $\vx_n$ and $\vx_m$ as
before, but will now have $D-2 > 0$ degrees of freedom.
For example, for $D=3$, the breakplane has 1 degree of freedom to rotate around
the line $\overline{\vx_n\vx_m}$.
We may repeat this logic until the hyperplane has no more degrees of freedom.
It is also possible for the motion along the intersection of hyper-ellipses to
lead to regions where one or more hyper-ellipse stops being an attractor, thus
restoring degrees of freedom, or for a regular local minimum to be reached
``between'' intersections.

\subsection[Dynamics of the Neural Network Parameters, theta NN]{Dynamics of the Neural Network Parameters, $\theta_{\tNN}$}
We now consider the dynamics of $\theta_{\tRS}$ during normal $\theta_{\tNN}$
training.
In this case, the $\theta_{\tRS}$ updates have an additional Jacobian factor,
and no longer correspond to gradient descent on $\widetilde{\ell}(\theta_{\tRS})$.
However, $\theta_{\tRS}$ will still trace out a continuous curve through
parameter space, and the value of $\widetilde{\ell}(\theta_{\tRS})$ still
determines the value of $\ell(\theta_{\tNN})$.
In particular, $\ell(\theta_{\tNN})$ can be constructed from
$\widetilde{\ell}(\theta_{\tRS})$ via the inclusion $(\vw,b,v) \triangleq
(\vxi,-\gamma,\mu)$ followed by copying the value along the $\alpha$-symmetry
hyperboloids.
Thus, local minima and $d$-dimensional valleys of $\widetilde{\ell}(\theta_{\tRS})$
map to 1-dimensional valleys and $(d+1)$-dimensional valleys in
$\ell(\theta_{\tNN})$, respectively.
These extended valleys are ``flat'' (have 0 gradient) along the
$\alpha$-symmetry curve, so if a parameter would be confined to a valley
according to $\theta_{\tRS}$ dynamics, it will still be confined according to
$\theta_{\tNN}$ dynamics, i.e.\ $\theta_{\tNN}$ dynamics admit the same cluster
formation dynamics including datapoint pinning.

Next, we consider the effects of the Jacobian factor on breakplane dynamics
under $\theta_{\tNN}$ training:

\begin{figure*}
  \centering
  \resizebox{0.8\textwidth}{!}{\includegraphics{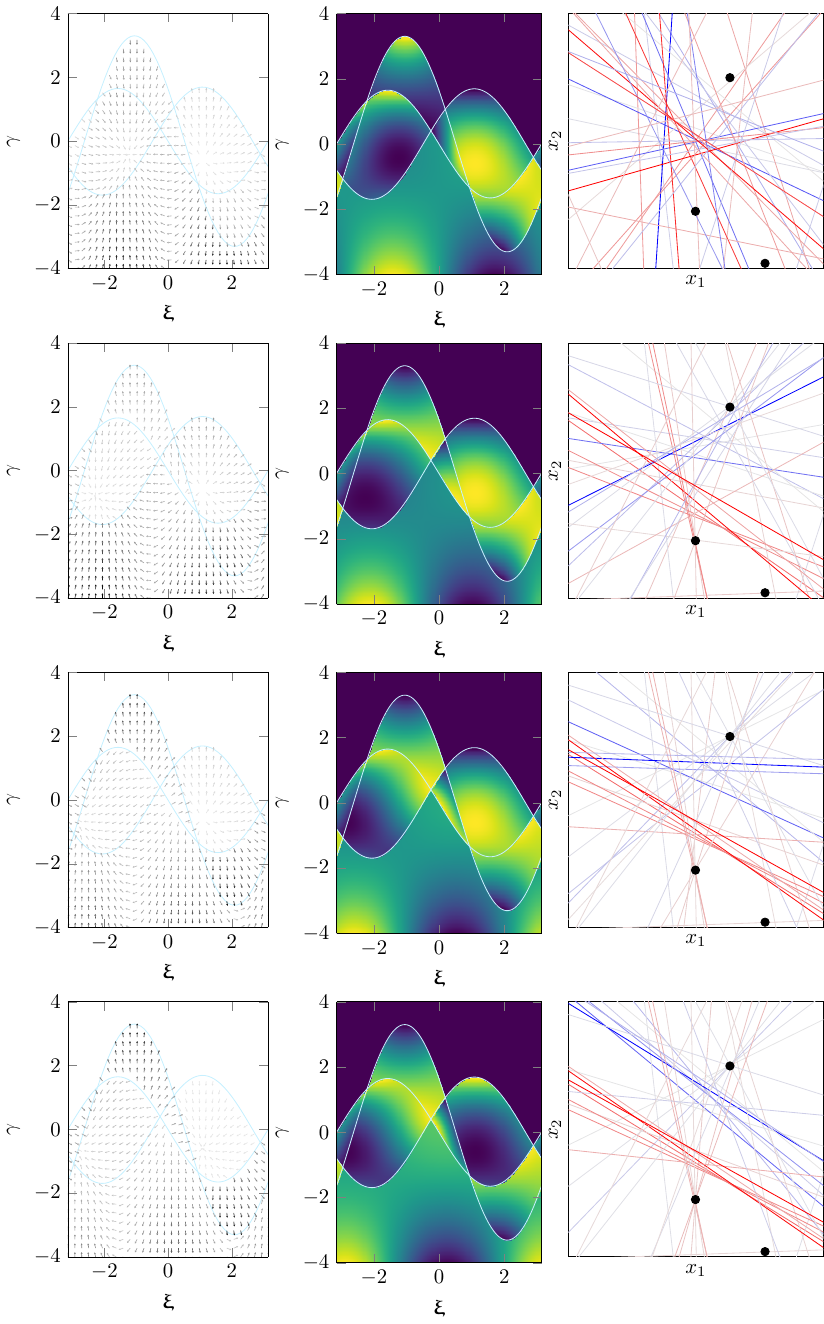}}
  \caption{Cluster formation: \textbf{Left:} $v_t(\vxi,\gamma)$ on a
  3-datapoint 2D example.
  \textbf{Center:} alternative visualization with maxima and minima
  corresponding to the sources and sinks of $v_t(\vxi,\gamma)$.
  \textbf{Right:} the breaklines, colored by delta-slope, and the 3 datapoints.
  Each row shows snapshots from different times throughout training.}
  \label{fig:stabpin}
\end{figure*}

\begin{center}
\resizebox{\columnwidth}{!}{
\begin{tabular}{ZO;M|O;MZ}
  \multicolumn{2}{c|}{$\theta_{\tRS}$ training}&
  \multicolumn{2}{c}{$\theta_{\tNN}$ training}\\
  \hline\rule{0pt}{1.5\normalbaselineskip}
  \dv{\vxi_i}{t}
    &=-\mu_i
       \left\langle
         \hat{\vepsilon}_t
       , \vX_i - \left\langle\vX_i,\vxi_i\right\rangle\vxi_i
       \right\rangle
  &\dv{\vxi_i}{t}
    &=-\frac{\mu_i}{\omega_i^2}
       \left\langle
         \hat{\vepsilon}_t
       , \vX_i - \left\langle\vX_i,\vxi_i\right\rangle\vxi_i
       \right\rangle\\
  \dv{\gamma_i}{t}
    &= \mu_i
       \left\langle
         \hat{\vepsilon}_t
       , \vec{1}_i
       \right\rangle
  &\dv{\gamma_i}{t}
    &= \frac{\mu_i}{\omega_i^2}
       \left\langle
         \hat{\vepsilon}_t
       , \vec{1}_i + \gamma_i\left\langle\vX_i,\vxi_i\right\rangle
       \right\rangle\\
  \\[-0.7\normalbaselineskip]
  \hline\rule{0pt}{1.6\normalbaselineskip}
  \dv{\widetilde{\vxi}_i}{t}
    &=-\mu_i
       \left\langle
         \!
         \hat{\vepsilon}_t
       , \widetilde{\vX}_i
        -\left\langle\vX_i,\vxi_i\right\rangle\widecheck{\vxi}_i
         \!
       \right\rangle
  &\dv{\widetilde{\vxi}_i}{t}
    &=-\frac{\mu_i}{\omega_i^2}
       \left\langle
         \!
         \hat{\vepsilon}_t
       , \widetilde{\vX}_i
        -\left\langle\vX_i,\vxi_i\right\rangle\widetilde{\vxi}_i
         \!
       \right\rangle\\[3mm]
     &\triangleq
       -\mu_i
        v_t^{\tRS}(\widetilde{\vxi}_i)
    &&\triangleq
       -\frac{\mu_i}{\omega_i^2}
        v_t(\widetilde{\vxi}_i)
\end{tabular}
}
\end{center}
where $\widecheck{\vxi}_i\triangleq(\xi_1,\ldots,\xi_D,0)$.
Thus, the effect of the Jacobian factor is to introduce the scalar factor
$1/\omega_i^2$ to all dynamics, and the
$\gamma_i\left\langle\vX_i,\vxi_i\right\rangle$ term in the $\gamma_i$
dynamics.
Then, the last line shows that each breakplane moves with its own (scalar) rate
multiplier (for $\theta_{\tNN}$ training, $-\mu_i/\omega_i^2$) according to the
shared vector field $v_t(\cdot)$.
Borrowing terminology from the study of fluid dynamics, $v_t$ defines a
velocity flow vector field.
Note that $v_t^{\tRS}(\cdot)$ and $v_t(\cdot)$ are piecewise quadratic, with
the same piece structure based on the datapoint hyper-ellipses $\E_n$ as
$\widetilde{\ell}(\widetilde{\vxi}|\ldots)$.
Let $\sigma(\widetilde{\vxi})$ denote the region of $\C^{D-1}$ containing
$\widetilde{\vxi}$ (corresponding to a given activation pattern on the training
data), and let $\vX_\sigma$ and $\widetilde{\vX}_\sigma$ denote the masked data
and masked augmented data for a given region $\sigma$.
Then, restricting to a region $\sigma$, we can write the shared vector field
for $\theta_{\tNN}$ training as
\begin{IEEEeqnarray*}{r;l}
  v_t(\sigma,\widetilde{\vxi})
    &= \left\langle
         \hat{\vepsilon}_t
       , \widetilde{\vX}_{\sigma}
        -\left\langle\vX_{\sigma},\vxi\right\rangle\widetilde{\vxi}
       \right\rangle\\
    &\triangleq
       \widetilde{\vd}_{\sigma}
      -\langle\vd_{\sigma},\vxi\rangle\widetilde{\vxi}
\end{IEEEeqnarray*}
Inspection reveals that $\mathrlap{\widetilde{\phantom{\vxi}}}\vxi^*_\sigma
\triangleq
\frac{(\vd_\sigma:-\langle\hat{\epsilon}_t,\vec{1}\rangle)}{\|\vd_\sigma\|_2}$
is a sink for the vector field $v_t(\sigma,\cdot)$, and the antipodal point
$-\mathrlap{\widetilde{\phantom{\vxi}}}\vxi^*_\sigma$ is a source.
These points are thus attractor and repeller for $\widetilde{\vxi}$ dynamics,
when $-\frac{\mu_i}{\omega_i^2}$ is positive (and repeller and attractor,
respectively, when it is negative).

Observing that $\vd_\sigma\triangleq\langle\hat{\vepsilon}_t,\vX_\sigma\rangle$
is the direction along which active data is maximally correlated with error, we
see that $v_t(\sigma,\cdot)$ is driving $\vxi$ to maximize correlation with
error, so that changes in the delta-slope $\mu_i$ will maximally reduce error.
Similarly, in the $\theta_{\tRS}$ case, $\gamma$ is driven to reduce the net
error $\langle\hat{\vepsilon}_t,\vec{1}_\sigma\rangle$.
However, in the $\theta_{\tNN}$ case, $\gamma$ and $\mu$ are coupled by the
Jacobian, so that $\gamma$ will get no gradient at
$\gamma=-\frac{\langle\hat{\epsilon}_t,\vec{1}\rangle}{\langle\vd_\sigma,\vxi\rangle}$,
even if $\langle\hat{\vepsilon}_t,\vec{1}_\sigma\rangle$ is non-zero.
Conversely, if $\langle\vd_\sigma,\vxi\rangle$ were to remain small in
magnitude, $\gamma$ would be driven to larger and larger magnitudes.

In general, $\mathrlap{\widetilde{\phantom{\vxi}}}\vxi^*_\sigma$ and
$-\mathrlap{\widetilde{\phantom{\vxi}}}\vxi^*_\sigma$ need not be contained in
$\sigma$, in which case any flow along $v_t(\sigma,\cdot)$ will lead to some
piece boundary $\E_n$.
Across this boundary is some other sector $\sigma'$, with its own
$v_t(\sigma',\cdot)$.
We may then ``glue'' the $v_t(\sigma,\cdot)$ together into $v_t(\cdot)$.
Maennel \emph{et al.} formalized this (in terms of $\theta_{\tNN}$) using the
formalism of stratified vector fields in~\cite{Maennel2018Gradient}.
The case that flow converges towards the same boundary $\E_n$ on $\sigma'$
means that flow will subsequently be confined to $\E_n$.
This corresponds to the ``valley floor'' and ``pinning'' phenomena discussed
above.

\begin{figure*}
  \resizebox{\textwidth}{!}{\includegraphics{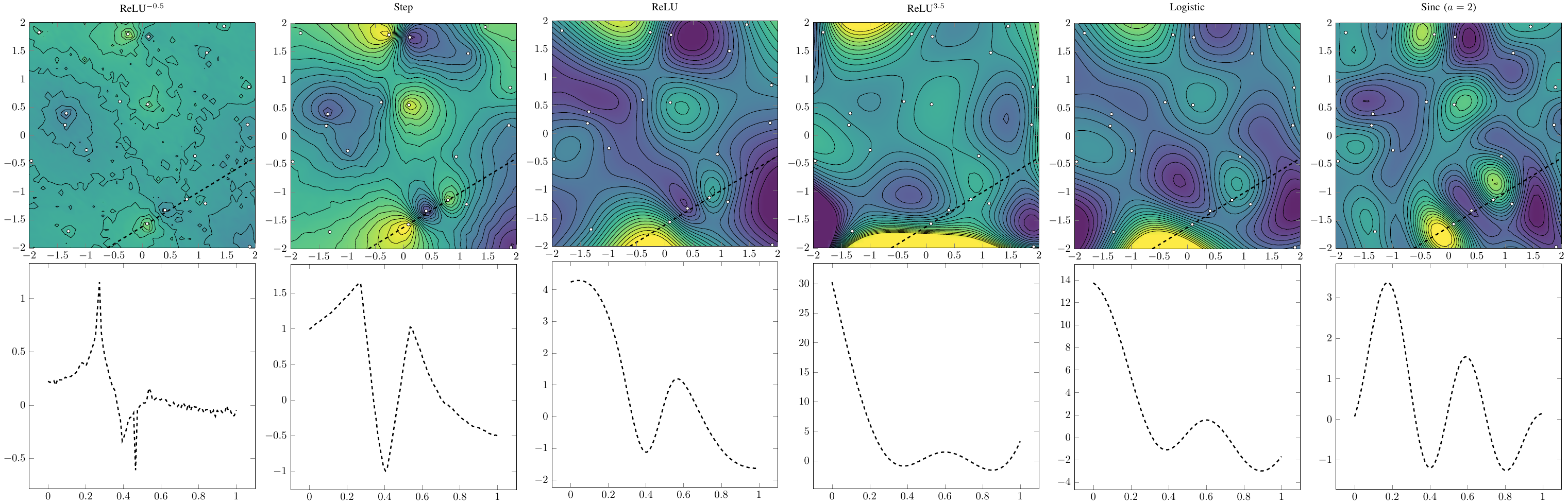}}
  \caption{Example fits on 20 uniform random points.
  Top: heatmap and contour plot, input data as white points; a dashed black
  line marks a 1-dimensional slice through two datapoints which also passes
  near a third.
  Bottom: the fit function along the 1-dimensional slice.}
  \label{fig:gd_vs_conv}
\end{figure*}

As training progresses and residuals change, the attractors
$\pm\mathrlap{\widetilde{\phantom{\vxi}}}\vxi^*_\sigma$ will move around.
If this motion is slow enough relative to the motion of breakplanes,
breakplanes within $\sigma$ will ``flow together'' towards
$\pm\mathrlap{\widetilde{\phantom{\vxi}}}\vxi^*_\sigma$.
\Cref{fig:stabpin} Shows $v_t(\cdot)$ and breakplane parameters throughout
training on a toy 2D dataset of 3 points.

In practice the attractors
$\pm\mathrlap{\widetilde{\phantom{\vxi}}}\vxi^*_\sigma$ move enough that
breakplanes rarely form ``hard clusters'' where a large set of breakplanes
perfectly align with each other.
Instead, breakplanes flow towards the moving target with decreasing speed as
error is reduced and thus the $\|\vd_\sigma\|_2$ decrease; eventually,
$\|\vd_\sigma\|_2\approx0$ and motion stops.
This leads to breakplanes forming ``soft'' or ``smeared-out'' clusters near the
final $\pm\mathrlap{\widetilde{\phantom{\vxi}}}\vxi^*_\sigma$.
An exception to this smearing is when datapoint pinning persists (i.e. when two
adjacent attractors $\pm\mathrlap{\widetilde{\phantom{\vxi}}}\vxi^*_\sigma$ and
$\pm\mathrlap{\widetilde{\phantom{\vxi}}}\vxi^*_{\sigma'}$ persist on opposite
sides of a boundary), leading to clusters that are sharply concentrated on the
boundary, but potentially smeared ``along'' the boundary.

\subsection{Other Activation Functions}\label{sec:adaptgen}
These phenomena generalize to non-ReLU networks, with similar loss landscapes
and shared vector fields $v_t(\widetilde{\vxi})$, leading to similar cluster
formation behaviour.
For example, the loss landscape and dynamics with the SoftPlus activation will
look essentially unchanged far from piece boundaries.
Near piece boundaries, the non-differentiable cusps and the discontinuities in
$v_t(\cdot)$ are relaxed to a smooth landscape smooth field that interpolates
between the two pieces.
In the ``valley floor'' case, the discontinuous reversal of direction is
replaced by a smooth field such that the component perpendicular to the
boundary is zero at the boundary, leading to the same cluster formation
phenomenon.
However, because the field is smooth, any particle following it will slow down
as it approaches the boundary.
As residuals are reduced throughout training, the field $v_t(\cdot)$ decreases
in magnitude, exacerbating the slowing down and leading to looser clusters that
are only ``softly'' pinned to datapoints.
Generalizing in a different direction, the $v_t(\cdot)$ for the SatReLU
activation will have two discontinuities, corresponding to the lower and upper
bounds.
Thus, each datapoint will be associated with two piece boundaries.
More unconventional activation functions with complex shapes (such as the more
atypical entries in~\Cref{tab:activation,fig:activation}) will lead to more
complex loss landscapes and fields $v_t(\cdot)$.

\section{Experiments}

\subsection{Kernel Regime}

\begin{table}
\centering
\resizebox{\columnwidth}{!}{
\rowcolors{2}{black!7}{white}
\begin{tabular}{r|l|l|l}
Activation                                                                                  & Relative Error      & GD Training Time        & Convex Opt.\\
                                                                                            &                     &                         & Training Time\\\hline
Step                                                                                        & $3.79\% \pm 0.16\%$ & $0.44 \pm 0.13$         & $40.56 \pm 0.89$\\
ReLU                                                                                        & $5.53\% \pm 0.15\%$ & $10.30 \pm 0.90$        & $2.01 \pm 0.07$\\
ReLU$^2$                                                                                    & $5.62\% \pm 0.16\%$ & $100.36 \pm 5.66$       & $2.26 \pm 0.18$\\
ReLU$^{3.5}$                                                                                & $6.49\% \pm 0.33\%$ & $4{,}319.53 \pm 139.69$ & $2.11 \pm 0.06$\\
Logistic                                                                                    & $3.60\% \pm 0.08\%$ & $85.25 \pm 3.49$        & $3.21 \pm 0.14$\\
Atan                                                                                        & $4.37\% \pm 0.17\%$ & $170.41 \pm 4.17$       & $3.18 \pm 0.12$\\
Erf                                                                                         & $2.01\% \pm 0.04\%$ & $172.31 \pm 6.62$       & $3.19 \pm 0.13$\\
Sinc ($a\!=\!2$)                                                                            & $4.48\% \pm 0.09\%$ & $0.22 \pm 0.07$         & $130.29 \pm 11.16$\\
Sinc ($a\!=\!0.75$)                                                                         & $0.97\% \pm 0.06\%$ & $114.68 \pm 5.10$       & $3.82 \pm 0.09$\\
Sinc ($a\!=\!10$)                                                                           & $7.07\% \pm 0.22\%$ & $0.21 \pm 0.05$         & $235.21 \pm 20.72$\\
\begin{tabular}{@{}r@{}} Wavepacket\\($\omega\!=\!10, \sigma\!=\!0.7$)\end{tabular}         & $5.29\% \pm 0.07\%$ & $0.15 \pm 0.06$         & $255.49 \pm 13.31$\\
\begin{tabular}{@{}r@{}} Wavepacket\\($\omega\!=\!20, \sigma\!=\!0.1$)\end{tabular}         & $3.82\% \pm 0.07\%$ & $0.38 \pm 0.03$         & $87.87 \pm 2.44$\\
\begin{tabular}{@{}r@{}} Wavepacket\\($\omega\!=\!50, \sigma\!=\!2$) \end{tabular}          & $4.03\% \pm 0.03\%$ & $0.19 \pm 0.04$         & $312.67 \pm 53.98$\\
Cauchy                                                                                      & $1.67\% \pm 0.02\%$ & $29.30 \pm 3.22$        & $3.21 \pm 0.05$\\
Half-Exp                                                                                    & $3.00\% \pm 0.13\%$ & $0.29 \pm 0.03$         & $101.79 \pm 9.84$\\
\begin{tabular}{@{}r@{}} Oberhettinger\\III.10~\cite{Oberhettinger1973Fourier}\end{tabular} & $2.84\% \pm 0.06\%$ & $0.28 \pm 0.04$         & $68.09 \pm 0.74$
\end{tabular}
}
\caption{Relative error between GD and convex optimization fits with training
time in seconds for each method; mean $\pm$ standard deviation from 5
replications.}
\label{tab:gd_vs_conv}
\end{table}

\begin{figure*}
  \resizebox{\textwidth}{!}{\includegraphics{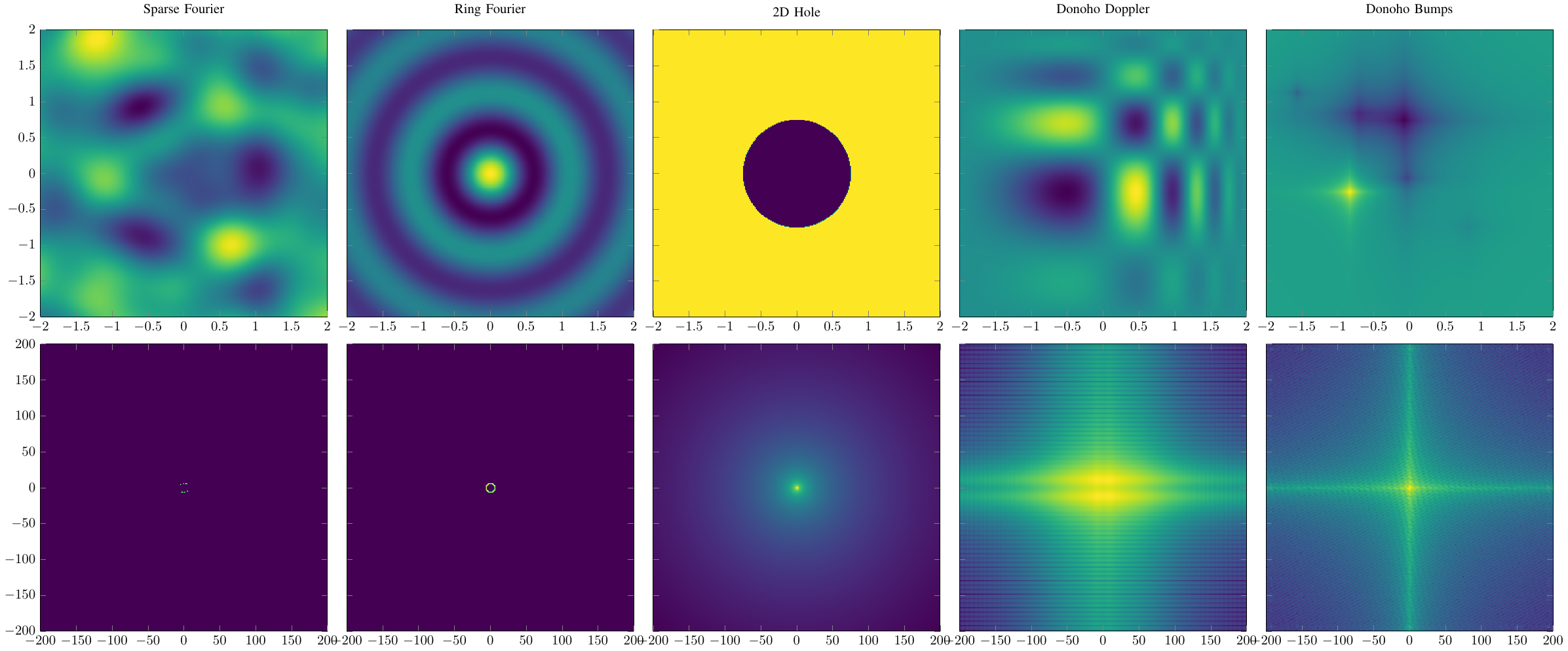}}
  \caption{Experiment~\ref{ex:kernelconv} Targets. Top row: the 5 targets. Bottom row: the log magnitude of the Fourier transform of each target.}
  \label{fig:e2targets}
\end{figure*}

\experiment First we demonstrate that \Cref{eq:muopt} is accurate
when training with finite step size and stopping before perfect zero error.
We use a simple dataset consisting of $N=20$ points uniformly drawn from a
$4\times 4\times 4$ cube.
We then train a series of wide ($H=20,000$) single layer fully connected neural
networks with MSE loss using full-batch Adam in the pure kernel regime (i.e.\
with input weights and biases frozen) to predict the third dimension given the
first two ($D=2$).
We initialize with $\|\vw_i\|_2 = 1$ (i.e.\ $\omega_i = 1$) deterministically
for all $i$, i.e.\ $\vw_i$ is uniform on the unit circle; we initialize $b_i$
uniform on $[-a,a]$ where $a > \max_n \|\vx_n\|$; $v_i$ is initialized
deterministically to 0 so that $\vmu_0=\vec{0}$.
We vary the activation function across the set $\{$Step, ReLU, ReLU${}^2$,
ReLU${}^{3.5}$, Logistic, Arctan, Erf, Sinc (with each of
$a=2$,$a=0.75$,$a=10$), Wavepacket (with each of
$\omega=10,\sigma=0.7$;$\omega=20,\sigma=0.1$;$\omega=50,\sigma=2$)$\}$
(see~\Cref{tab:activation} for definitions of these functions).
We train to an error of $10^{-6}$ (except the ReLU${}^{3.5}$, which we stopped
at 2.5 million iterations and an error of $1.3\times10^{-6}$).
We then use the convex optimization library CVXPY to solve the optimization
\begin{IEEEeqnarray*}{c}
  \hat{\vmu}
    = \argmin_{\vmu} \|\vmu-\vmu_0\|_2^2\\
  \text{ s.t.\ }\
  \frac{1}{N}
  \sum_{n=1}^N
    \left(y_n - \Phi\left(\vx_n;(\vxi_i,\gamma_i,\omega_i)_{i=1}^H\right) \vmu\right)^2
  \le \varepsilon
\end{IEEEeqnarray*}
for each activation function, where $\varepsilon$ is the error achieved by GD
for the same activation function.
This relaxes the hard equality constraint of~\Cref{eq:muopt} to account for the
early stopping of GD and to allow for finite numerical precision.

\Cref{tab:gd_vs_conv} shows the relative error (mean and standard deviation
from 5 replications), defined as the absolute difference between GD and the
convex optimization fits, averaged over the convex hull of the training data,
divided by the maximum value attained by the GD fit in the same area.
Relative error values of a few percentage points is typical.
with the largest values given by ReLU${}^{3.5}$ and Sinc ($a=10$).
ReLU${}^{3.5}$ has extreme growth that leads to difficulty training and high
sensitivity to small changes in parameters, so that even small numerical
innacuracies could lead to large relative error; Sinc ($a=10$) yields noisy,
highly oscillatory fits, such that high relative error could be achieved by a
slight misalignment.
Even these only have 6.49\% and 7.07\% relative error, respectively.
\Cref{tab:gd_vs_conv} also shows the training time for GD and convex
optimization.
For ReLU family, Sigmoids, Sinc ($a=0.75$), and Cauchy: the convex optimization
is faster, sometimes \emph{much} faster (e.g. ReLU${}^{3.5}$ is 2000$\times$
faster).
For Step, Sinc ($a=2$), Sinc ($a=10$), Wavepacket, Half-Exp, Oberhettinger: the
convex optimization is slower, up to 1600$\times$ slower.
Notably, the cases where convex is slower all have very fast GD training.
A few convex optimization fits are shown in~\Cref{fig:gd_vs_conv} to
demonstrate the qualitative effects of the various activation functions.

\begin{table*}
\newcommand{\bl}[1]{\textcolor{blue}{#1}}
\newcommand{\rr}[1]{\textcolor{red}{#1}}
\centering
\resizebox{\textwidth}{!}{
\begin{tabular}{l|c|c|c|c|c|c|c|c|c|c}
                                     & \multicolumn{2}{c|}{Sparse Fourier}
                                     & \multicolumn{2}{c|}{Ring Fourier}
                                     & \multicolumn{2}{c|}{2D Hole}
                                     & \multicolumn{2}{c|}{Donoho Doppler}
                                     & \multicolumn{2}{c}{Donoho Bumps}\\
Activation                                  & Train         & Test          & Train         & Test          & Train         & Test          & Train         & Test          & Train         & Test\\\hline
Step                                        & 8.31e-08      & 5.91e-01      & \bl{4.97e-08} & 9.08e-03      & 5.48e-08      & \bl{1.74e-02} & 5.00e-08      & 1.24e-02      & 5.14e-08      & 7.34e-04 \\
ReLU                                        & 5.07e-08      & 1.96e-01      & 5.00e-08      & 2.67e-03      & \bl{5.00e-08} & 1.96e-02      & \bl{4.99e-08} & 8.96e-03      & \bl{5.00e-08} & 5.09e-04 \\
ReLU${}^2$                                  & 5.03e-08      & 4.57e-02      & 4.99e-08      & 1.16e-03      & 5.00e-08      & 2.27e-02      & 5.00e-08      & \bl{7.37e-03} & 5.00e-08      & 2.31e-03 \\
ReLU${}^{3.5}$                              & 5.16e-08      & 2.53e-02      & 5.01e-08      & 3.32e-04      & 6.17e-08      & 3.77e-02      & 6.98e-08      & 1.23e-02      & 1.12e-06      & 2.71e-01 \\
Logistic                                    & 5.10e-08      & \bl{1.75e-02} & 5.00e-08      & \bl{2.54e-05} & 9.73e-06      & 5.89e+00      & 5.16e-08      & 7.70e-02      & 9.79e-05      & 2.83e+03 \\
Atan                                        & 5.01e-08      & 2.47e-02      & 5.00e-08      & 3.48e-04      & 1.11e-05      & 3.33e+00      & 5.59e-08      & 5.73e-02      & 6.66e-05      & 7.77e+02 \\
Erf                                         & ---           & ---           & ---           & ---           & 9.54e-02      & 8.75e+00      & \rr{2.03e-02} & 2.19e+00      & 1.31e-02      & 1.66e+02 \\
Cauchy                                      & \bl{5.01e-08} & 2.28e-02      & 5.00e-08      & 1.63e-04      & 6.97e-07      & 3.91e+00      & 4.99e-08      & 6.44e-02      & 4.00e-05      & 6.21e+03 \\
Sinc; $a\!=\!2$                             & \rr{2.65e-05} & 6.85e+02      & 5.00e-08      & 3.78e-05      & 1.71e-03      & \rr{3.64e+04} & 8.38e-06      & \rr{1.48e+01} & 2.64e-04      & \rr{4.01e+04} \\
Sinc; $a\!=\!0.75$                          & ---           & ---           & ---           & ---           & 7.49e-02      & 3.35e-01      & ---           & ---           & \rr{1.39e-02} & 2.12e+01 \\
Sinc; $a\!=\!10$                            & 1.68e-07      & 4.39e+00      & 4.98e-08      & 2.57e-02      & 5.09e-08      & 6.04e-02      & 5.02e-08      & 3.12e-02      & 5.13e-08      & 2.85e-03 \\
Wavepacket; $\omega\!=\!10, \sigma\!=\!0.7$ & 5.15e-08      & \rr{2.99e+03} & 5.00e-08      & \rr{1.52e-01} & 5.00e-08      & 1.79e+01      & 5.00e-08      & 1.09e+00      & 5.30e-08      & 1.55e+04 \\
Wavepacket; $\omega\!=\!20, \sigma\!=\!0.1$ & 2.17e-07      & 2.64e+01      & \rr{1.82e-07} & 4.35e-02      & 7.00e-08      & 4.39e-01      & 1.36e-07      & 6.91e-02      & 6.33e-08      & 2.78e-02 \\
Wavepacket; $\omega\!=\!50, \sigma\!=\!2$   & 7.50e-08      & 1.71e+02      & 5.00e-08      & 1.37e-01      & 5.02e-08      & 2.40e+00      & 5.01e-08      & 3.36e-01      & 5.00e-08      & 7.48e-01 \\
Half-Exp                                    & 8.63e-08      & 6.22e-01      & 5.75e-08      & 8.69e-03      & 5.35e-08      & 1.94e-02      & 5.00e-08      & 1.30e-02      & 5.15e-08      & 7.53e-04 \\
Oberhettinger III.10~\cite{Oberhettinger1973Fourier} & 8.23e-08 & 1.83e-01  & 5.33e-08      & 3.36e-03      & 5.00e-08      & 1.81e-02      & 5.00e-08      & 9.38e-03      & 5.00e-08      & \bl{5.00e-04}
\end{tabular}
}
\caption{Training and test error for Experiment~\ref{ex:kernelconv}: shallow
NNs trained with CVXPY to fit each of 5 targets.
The best value in each column is shown in blue, and the worst is shown in red.}
\label{tab:kernel}
\end{table*}

\begin{figure*}
  \resizebox{\textwidth}{!}{\includegraphics{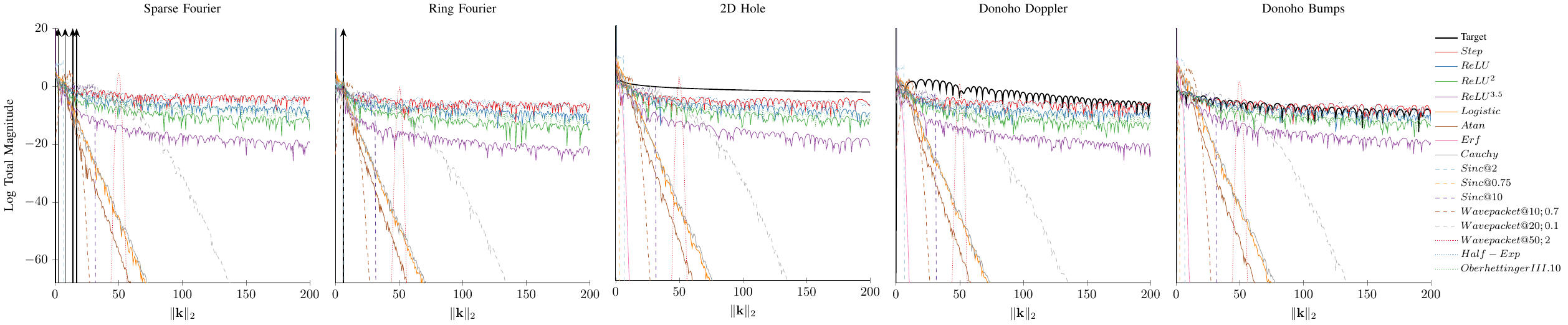}}
  \caption{Log total magnitude $\log M(r)$ for each target
  and activation function.
  Each target is shown in black, with Dirac distributions shown as vertical
  arrows.}
  \label{fig:total_mag}
\end{figure*}

\experiment \label{ex:kernelconv} We then use the (often much faster) convex
optimization, with a fixed mean square error of $5\times10^{-8}$, and the same
initialization and set of activation functions as Experiment 1 to fit several
targets chosen to exhibit a variety of Fourier features including two targets
based on Donoho's \emph{spatially nonhomogeneous}
functions~\cite{Donoho1995Adapting}:
\begin{enumerate}
\item The sum of six plane waves of random phase, frequency, direction, and
    amplitude.
    The Fourier transform of each plane wave is a pair of Dirac distributions
    aligned with the direction of the wave, with distance determined by the
    frequency.
    The target's Fourier transform is a weighted sum of six such distributions.
\item A radially symmetric Bessel function, $J_0(2\pi\|\vx\|_2)$.
    The Fourier transform of this target is a ``Dirac ring'' distribution whose
    support is a circle of radius $2\pi$.
\item A radially symmetric generalization of the Heaviside step function, which
    is 0 on a disc of radius $\frac{3}{4}$, and 1 outside of the disc.
    The Fourier transform of this function is proportional to
    $\frac{1}{\|\vk\|_2^2}$.
\item A 2D function inspired by Donoho's ``doppler''
    function~\cite{Donoho1995Adapting}, consisting of the product of two
    axis-aligned 1D functions which are sinusoids with smoothly-varying
    frequency multiplied by a smooth envelope.
\item A 2D generalization of Donoho's ``bumps''
    function~\cite{Donoho1995Adapting}, consisting of the weighted sum of
    several copies of the sharply pointed function $\frac{1}{|x_1|^4|x_2|^4}$
    with random translations, scales and (positive and negative) weights.
\end{enumerate}
These targets and their Fourier transforms are shown in~\Cref{fig:e2targets}.
\Cref{tab:kernel} shows training and test error for each target and activation
function.
To show the effect of different activation functions in Fourier space, we
reduce the Fourier transform of $f(\cdot)$ to the 1-dimensional function
\[ M(r) \triangleq \left|\hspace{3.3mm}\int\limits_{\smash{\mathclap{\|\vk\|_2=r}}} \F[f](\vk) \dd{\vk}\right| \]
which we call the \emph{total magnitude} at radius $r$.
\Cref{fig:total_mag} shows $M(r)$ for each target and each fit.

Most combinations of target and activation function are able to fit, with numerical innacuracy leading to occasional training
error values greater than $5\times10^{-8}$.
Stronger regularization (leading to steeper decay on total magnitude plots) yields lower test
performance in general, with some of the strongest regularizers (Erf, Sinc ($a=0.75$))
unable to fit.
Total magnitude plots mostly look similar, regardless of target function. One explanation is that the breakplane orientation does not affect the radial component, and the offset only changes
phase; so then deviations from $\F[\phi](k)$ can only be caused by interference
patterns. This explanation is plausible but needs further testing.

\subsection{Adaptive Regime}

\experiment \label{ex:adaptive_mnist} We trained shallow networks on MNIST
(input dimensionality $28 \times 28 = 784$, hidden layer of size $H = 200$,
output dimensionality $D_{out} = 10$) with SGD (10 epochs of 25 mini-batches,
256 training examples per mini-batch).
We vary the activation function across the set $\{$Step, ReLU, ReLU${}^2$,
ReLU${}^{3.5}$, Logistic, Cauchy, Sinc ($a=0.75$,$a=10$), Wavepacket
($\omega=2,\sigma=10$), Half-Exp, Oberhettinger
III.10~\cite{Oberhettinger1973Fourier}$\}$ (see~\Cref{tab:activation} for
definitions of these functions).
We show the training and test loss and accuracy
in~\Cref{fig:mnist_adaptive}~(a)--(d) which shows that many activation
functions are trainable, although some of the more unorthodox examples
(especially Sinc, Wavepacket) have very strong inductive biases that lead to
very poor generalization.
The Half-Exp and ReLU${}^{3.5}$ activations largely fail to train at all.
For Half-Exp, this seems to be due largely to the discontinuity at 0; the
Oberhettinger III.10~\cite{Oberhettinger1973Fourier} activation has similar
overall shape, but is continuous, and achieves reasonable training accuracy,
but low test accuracy.
For ReLU${}^{3.5}$, the driving cause is likely the extreme growth of the
active side, which causes small changes in parameters to cause huge changes in
the function far from the zero-plane, yielding instability.

\begin{figure*}
  \resizebox{\textwidth}{!}{\includegraphics{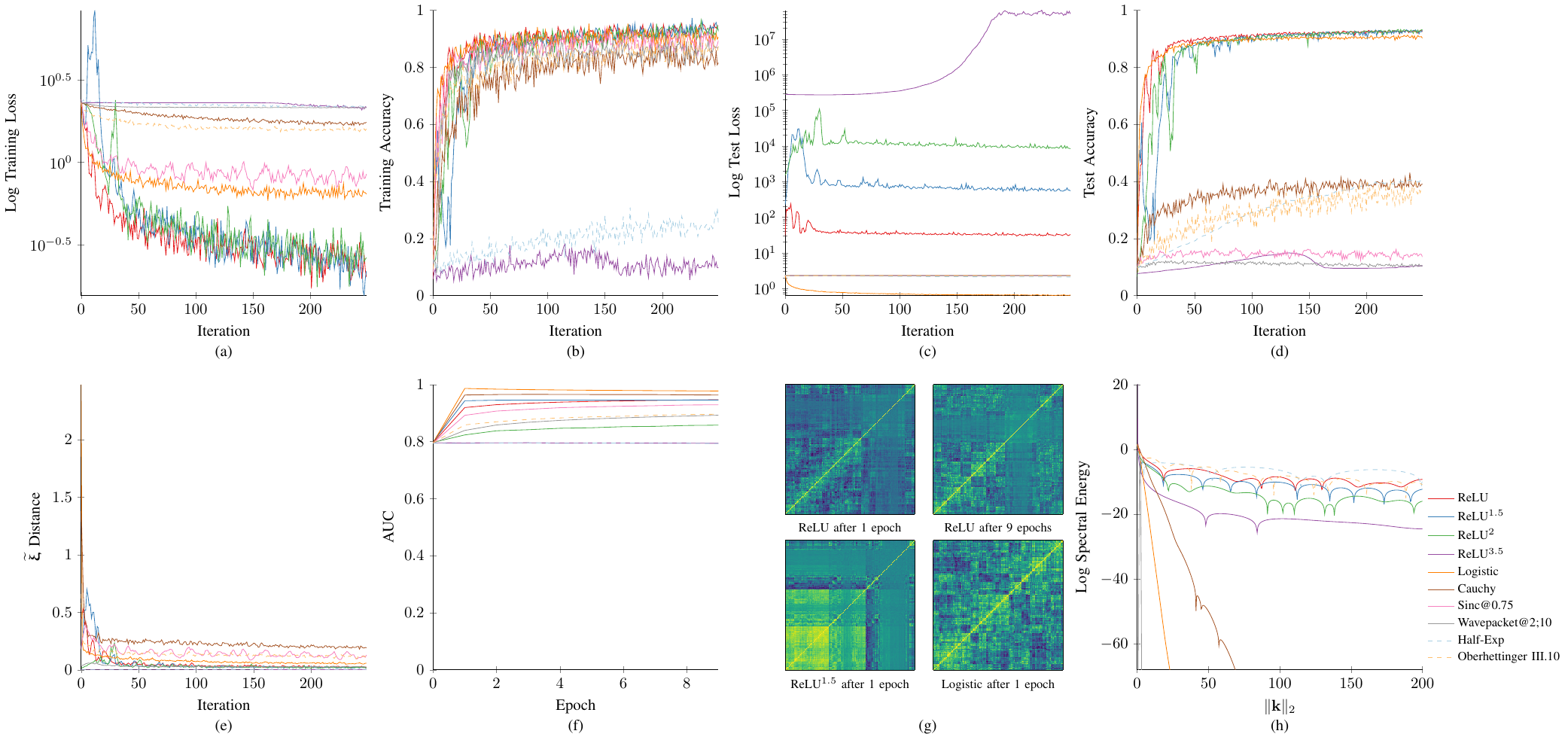}}
  \caption{Results for training networks with various activation functions to
  fit MNIST.
  \textbf{(a)--(d)}: Training loss, training accuracy, test loss, and test
  accuracy over training time.
  Loss plots on log scale.
  \textbf{(e)}: $\widetilde{\vxi}$ distance traveled per iteration.
  \textbf{(f)}: area under the curve of the cumulative sum of eigenvalues of the
  $\widetilde{\vxi}$ similarity matrix over time.
  \textbf{(g)}: 4 example $\widetilde{\vxi}$ similarity matrices.
  \textbf{(h)}: log total magnitude $\log E\left(\|\vk\|_2\right)$ for the
  final fits.
  The AUC plots and example matrices demonstrate that zero-plane clusters are
  formed rapidly.
  This is also supported by the $\widetilde{\vxi}$ distance plot, which shows
  higher values in early iterations.}
  \label{fig:mnist_adaptive}
\end{figure*}

We measure the extent of the zero-plane clustering phenomenon in two ways.
First, we plot per-training step $\widetilde{\vxi}$ distances, defined as the
average $\ell_2$ distance between matched pairs\footnote{A \emph{matched pair}
is any $\widetilde{\vxi}_i(t-1)$ and the closest $\widetilde{\vxi}_j(t)$;
typically, $i=j$, so that the pair is the same zero-plane at different
time-steps, but if breakplanes cross eachother, the ordering may change.
This definition is equivalent to the Wasserstein 2-distance between the
zero-plane distributions at times $t-1$ and $t$.} which shows a bias towards
early motion for many activation functions~\Cref{fig:mnist_adaptive}~(e).
Second, we calculate the matrix $S_ij \triangleq \langle\vxi_i,\vxi_j\rangle
\exp\left[-|\gamma_i-\gamma_j|^2\right]$ of
$\widetilde{\vxi}$-$\widetilde{\vxi}$ similarity scores, then plot the
cumulative sum of the sorted normalized eigenvalues of this matrix, and compute
the Area Under the Curve (AUC).
A high amount of clustering manifests as a left-skewed plot with an AUC near
1~\Cref{fig:mnist_adaptive}~(f).
We also show a few examples of $\vS$ in~\Cref{fig:mnist_adaptive}~(g), with
rows and columns sorted so that clusters of similar zero-planes are shown
together as lighter-colored blocks.

Additionally, we show the log total magnitude of the final trained networks
in~\Cref{fig:mnist_adaptive}~(h).
As in Experiment~\ref{ex:kernelconv}, the total magnitude at radius $r$ is not
changed much by adaptivity.
Additionally, with higher input dimension, breakplanes are sparser, making it
more difficult to orchestrate interference patterns.

\experiment Experiment~\ref{ex:adaptive_mnist} showed that after only a few
epochs, there is already a large amount of clustering, and zero-plane movement
is reduced, suggesting the possibility of switching to the (computationally
cheaper) kernel regime.
This is consistent with previous work \cite{Atanasov2022Neural}.

We trained the same networks as in Experiment~\ref{ex:adaptive_mnist} for 2
epochs, then switched to pure kernel training.
\Cref{fig:mnist_adaptive_then_kernel_gd}~(a)--(d) show training and test loss
over training time.
Note that many networks, including the best and some of the worst performers
(ReLU family, Logistic, Wavepacket) remain unchanged in terms of training loss,
but others (Cauchy, Sinc, Half-Exp, Oberhettinger) manage to achieve a better
training loss.
Training accuracy is less affected, except for a dramatic improvement seen for
the Half-Exp activation.
The test loss is also largely unchanged, except for a dramatic improvement for
ReLU${}^{3.5}$ and a minor improvement for Half-Exp.

\Cref{fig:mnist_adaptive_then_kernel_gd}~(e) shows log total magnitude, which
is qualitatively unchanged from Experiment~\ref{ex:adaptive_mnist}.

\section{Discussion}

Reparameterizing shallow neural networks in terms of the functional (Radon
spline) parameters enables a representation that is much more intuitive than
the original weight-space parameterization.
It directly enables novel results relating kernel regime implicit
regularization to Fourier regularization, and makes adaptive regime results
more interpretable.
This work has focused on the kernel regime, leaving the adaptive regime results
limited, but future work may be able to leverage this machinery to calculate
similar results in the adaptive regime.

Another potentially fruitful direction for future work is to compute closed
form expressions for kernel regime fits, which could lead to more efficient
optimization or direct calculations.
This could useful if the target function known to be smooth, as is the case in
e.g. the energy Hamiltonians in all-atom models of protein
folding~\cite{Yang2022Construction}.
Combined with the ``rational design'' of activation functions, this could lead
to efficient models with controllable smoothness and adaptivity, taking us
closer towards a vision of human-interpretable design of neural splines in high
dimensions.

\begin{figure*}
  \resizebox{\textwidth}{!}{\includegraphics{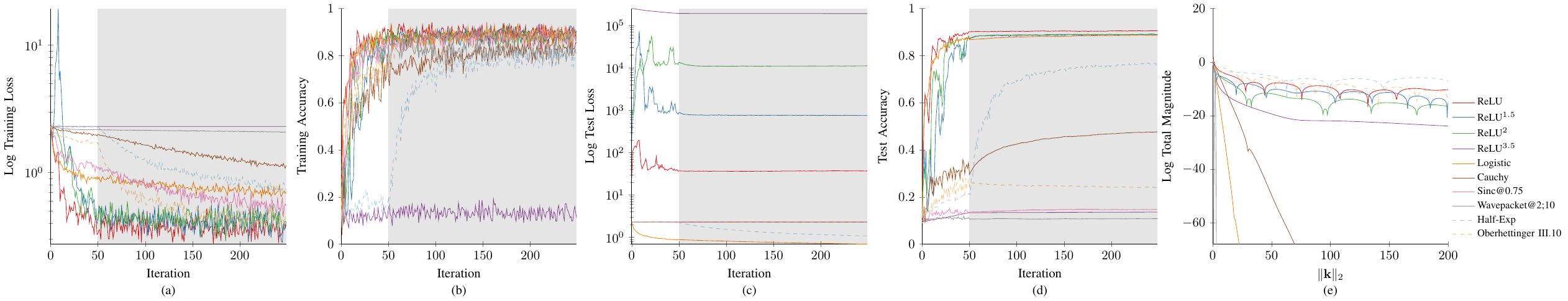}}
  \caption{Results for adaptive-then-kernel training networks with various
  activation functions to fit MNIST.
  \textbf{(a)--(d)}: Training loss, training accuracy, test loss, and test
  accuracy over training time.
  Loss plots on log scale; shaded region indicates kernel learning.
  \textbf{(h)}: log total magnitude $\log E\left(\|\vk\|_2\right)$ for the
  final fits.}
  \label{fig:mnist_adaptive_then_kernel_gd}
\end{figure*}

\bibliographystyle{IEEEtran}
\bibliography{main}

\begin{thebibliography}{10}
\providecommand{\url}[1]{#1}
\csname url@samestyle\endcsname
\providecommand{\newblock}{\relax}
\providecommand{\bibinfo}[2]{#2}
\providecommand{\BIBentrySTDinterwordspacing}{\spaceskip=0pt\relax}
\providecommand{\BIBentryALTinterwordstretchfactor}{4}
\providecommand{\BIBentryALTinterwordspacing}{\spaceskip=\fontdimen2\font plus
\BIBentryALTinterwordstretchfactor\fontdimen3\font minus
  \fontdimen4\font\relax}
\providecommand{\BIBforeignlanguage}[2]{{%
\expandafter\ifx\csname l@#1\endcsname\relax
\typeout{** WARNING: IEEEtran.bst: No hyphenation pattern has been}%
\typeout{** loaded for the language `#1'. Using the pattern for}%
\typeout{** the default language instead.}%
\else
\language=\csname l@#1\endcsname
\fi
#2}}
\providecommand{\BIBdecl}{\relax}
\BIBdecl

\bibitem{Neyshabur2015Search}
\BIBentryALTinterwordspacing
B.~Neyshabur, R.~Tomioka, and N.~Srebro, ``In search of the real inductive
  bias: On the role of implicit regularization in deep learning.'' in
  \emph{ICLR (Workshop)}, 2015. [Online]. Available:
  \url{https://arxiv.org/abs/1412.6614}
\BIBentrySTDinterwordspacing

\bibitem{Savarese2019How}
P.~Savarese, I.~Evron, D.~Soudry, and N.~Srebro, ``How do infinite width
  bounded norm networks look in function space?'' in \emph{Conference on
  Learning Theory}.\hskip 1em plus 0.5em minus 0.4em\relax PMLR, 2019, pp.
  2667--2690.

\bibitem{Ongie2020Function}
G.~Ongie, R.~Willett, D.~Soudry, and N.~Srebro, ``A function space view of
  bounded norm infinite width relu nets: The multivariate case,'' in
  \emph{International Conference on Learning Representations}, 2020.

\bibitem{Parhi2021Banach}
R.~Parhi and R.~D. Nowak, ``Banach space representer theorems for neural
  networks and ridge splines.'' \emph{J. Mach. Learn. Res.}, vol.~22, no.~43,
  pp. 1--40, 2021.

\bibitem{Parhi2023Near-Minimax}
------, ``Near-minimax optimal estimation with shallow relu neural networks,''
  \emph{IEEE Transactions on Information Theory}, vol.~69, no.~2, pp.
  1125--1140, 2023.

\bibitem{Unser2023Ridges}
\BIBentryALTinterwordspacing
M.~Unser, ``Ridges, neural networks, and the radon transform,'' \emph{Journal
  of Machine Learning Research}, vol.~24, no.~37, pp. 1--33, 2023. [Online].
  Available: \url{http://jmlr.org/papers/v24/22-0227.html}
\BIBentrySTDinterwordspacing

\bibitem{Parhi2020Role}
\BIBentryALTinterwordspacing
R.~Parhi and R.~D. Nowak, ``The role of neural network activation functions,''
  \emph{IEEE Signal Processing Letters}, vol.~27, pp. 1779--1783, 2020.
  [Online]. Available: \url{http://dx.doi.org/10.1109/lsp.2020.3027517}
\BIBentrySTDinterwordspacing

\bibitem{Parhi2022What}
------, ``What kinds of functions do deep neural networks learn? insights from
  variational spline theory,'' \emph{SIAM Journal on Mathematics of Data
  Science}, vol.~4, no.~2, pp. 464--489, 2022.

\bibitem{williams2019gradient}
F.~Williams, M.~Trager, D.~Panozzo, C.~Silva, D.~Zorin, and J.~Bruna,
  ``Gradient dynamics of shallow univariate relu networks,'' in \emph{Advances
  in Neural Information Processing Systems}, 2019, pp. 8376--8385.

\bibitem{Jin2023Implicit}
\BIBentryALTinterwordspacing
H.~Jin and G.~Montufar, ``Implicit bias of gradient descent for mean squared
  error regression with two-layer wide neural networks,'' \emph{Journal of
  Machine Learning Research}, vol.~24, no. 137, pp. 1--97, 2023. [Online].
  Available: \url{http://jmlr.org/papers/v24/21-0832.html}
\BIBentrySTDinterwordspacing

\bibitem{Mei2018Mean}
S.~Mei, A.~Montanari, and P.-M. Nguyen, ``A mean field view of the landscape of
  two-layer neural networks,'' \emph{Proceedings of the National Academy of
  Sciences}, vol. 115, no.~33, pp. E7665--E7671, 2018.

\bibitem{Mei2019Mean-field}
S.~Mei, T.~Misiakiewicz, and A.~Montanari, ``Mean-field theory of two-layers
  neural networks: dimension-free bounds and kernel limit,'' in
  \emph{Conference on Learning Theory}.\hskip 1em plus 0.5em minus 0.4em\relax
  PMLR, 2019, pp. 2388--2464.

\bibitem{Rotskoff2022Trainability}
\BIBentryALTinterwordspacing
G.~Rotskoff and E.~Vanden‐Eijnden, ``Trainability and accuracy of artificial
  neural networks: An interacting particle system approach,''
  \emph{Communications on Pure and Applied Mathematics}, vol.~75, no.~9, pp.
  1889--1935, 07 2022. [Online]. Available:
  \url{http://dx.doi.org/10.1002/cpa.22074}
\BIBentrySTDinterwordspacing

\bibitem{Chizat2018Global}
L.~Chizat and F.~Bach, ``On the global convergence of gradient descent for
  over-parameterized models using optimal transport,'' \emph{Advances in neural
  information processing systems}, vol.~31, 2018.

\bibitem{De_Bortoli2020Quantitative}
V.~De~Bortoli, A.~Durmus, X.~Fontaine, and U.~Simsekli, ``Quantitative
  propagation of chaos for sgd in wide neural networks,'' \emph{Advances in
  Neural Information Processing Systems}, vol.~33, pp. 278--288, 2020.

\bibitem{Chen2020Dynamical}
Z.~Chen, G.~Rotskoff, J.~Bruna, and E.~Vanden-Eijnden, ``A dynamical central
  limit theorem for shallow neural networks,'' \emph{Advances in Neural
  Information Processing Systems}, vol.~33, pp. 22\,217--22\,230, 2020.

\bibitem{Maennel2018Gradient}
\BIBentryALTinterwordspacing
H.~Maennel, O.~Bousquet, and S.~Gelly, ``Gradient descent quantizes relu
  network features,'' \emph{arXiv preprint arXiv:1803.08367}, 2018. [Online].
  Available: \url{https://arxiv.org/abs/1803.08367}
\BIBentrySTDinterwordspacing

\bibitem{sahs2022shallow}
\BIBentryALTinterwordspacing
J.~Sahs, R.~Pyle, A.~Damaraju, J.~O. Caro, O.~Tavaslioglu, A.~Lu, F.~Anselmi,
  and A.~B. Patel, ``Shallow univariate relu networks as splines:
  Initialization, loss surface, hessian, and gradient flow dynamics,''
  \emph{Frontiers in Artificial Intelligence}, vol.~5, 2022. [Online].
  Available: \url{https://www.frontiersin.org/article/10.3389/frai.2022.889981}
\BIBentrySTDinterwordspacing

\bibitem{Chizat2019Lazy}
L.~Chizat, E.~Oyallon, and F.~Bach, ``On lazy training in differentiable
  programming,'' \emph{Advances in Neural Information Processing Systems},
  vol.~32, pp. 2933--2943, 2019.

\bibitem{Woodworth2020Kernel}
B.~Woodworth, S.~Gunasekar, J.~D. Lee, E.~Moroshko, P.~Savarese, I.~Golan,
  D.~Soudry, and N.~Srebro, ``Kernel and rich regimes in overparametrized
  models,'' in \emph{Conference on Learning Theory}.\hskip 1em plus 0.5em minus
  0.4em\relax PMLR, 2020, pp. 3635--3673.

\bibitem{Luo2021Phase}
T.~Luo, Z.-Q.~J. Xu, Z.~Ma, and Y.~Zhang, ``Phase diagram for two-layer relu
  neural networks at infinite-width limit.'' \emph{J. Mach. Learn. Res.},
  vol.~22, pp. 71--1, 2021.

\bibitem{Li2022What}
\BIBentryALTinterwordspacing
Z.~Li, T.~Wang, and S.~Arora, ``What happens after {SGD} reaches zero loss? --a
  mathematical framework,'' in \emph{International Conference on Learning
  Representations}, 2022. [Online]. Available:
  \url{https://openreview.net/forum?id=siCt4xZn5Ve}
\BIBentrySTDinterwordspacing

\bibitem{Kunin2024Get}
\BIBentryALTinterwordspacing
D.~Kunin, A.~Ravent\'{o}s, C.~Domin\'{e}, F.~Chen, D.~Klindt, A.~Saxe, and
  S.~Ganguli, ``Get rich quick: exact solutions reveal how unbalanced
  initializations promote rapid feature learning,'' in \emph{Advances in Neural
  Information Processing Systems}, A.~Globerson, L.~Mackey, D.~Belgrave,
  A.~Fan, U.~Paquet, J.~Tomczak, and C.~Zhang, Eds., vol.~37.\hskip 1em plus
  0.5em minus 0.4em\relax Curran Associates, Inc., 2024, pp. 81\,157--81\,203.
  [Online]. Available:
  \url{https://proceedings.neurips.cc/paper_files/paper/2024/file/94074dd5a072d28ff75a76dabed43767-Paper-Conference.pdf}
\BIBentrySTDinterwordspacing

\bibitem{Helgason2011Integral}
S.~Helgason, \emph{Integral geometry and Radon transforms}.\hskip 1em plus
  0.5em minus 0.4em\relax Springer, 2011.

\bibitem{Helgason1980Radon}
------, \emph{The radon transform}.\hskip 1em plus 0.5em minus 0.4em\relax
  Springer, 1980, vol.~2.

\bibitem{Kuchment2013Radon}
\BIBentryALTinterwordspacing
P.~Kuchment, ``The radon transform and medical imaging,'' 12 2013. [Online].
  Available: \url{http://dx.doi.org/10.1137/1.9781611973297}
\BIBentrySTDinterwordspacing

\bibitem{Beatty2012Radon}
\BIBentryALTinterwordspacing
J.~Beatty, ``The radon transform and the mathematics of medical imaging,''
  2012. [Online]. Available:
  \url{https://digitalcommons.colby.edu/honorstheses/646}
\BIBentrySTDinterwordspacing

\bibitem{Barron1993Universal}
A.~R. Barron, ``Universal approximation bounds for superpositions of a
  sigmoidal function,'' \emph{IEEE Transactions on Information theory},
  vol.~39, no.~3, pp. 930--945, 1993.

\bibitem{Zhang2019Explicitizing}
\BIBentryALTinterwordspacing
Y.~Zhang, Z.-Q.~J. Xu, T.~Luo, and Z.~Ma, ``Explicitizing an implicit bias of
  the frequency principle in two-layer neural networks,'' \emph{arXiv preprint
  arXiv:1905.10264}, 2019. [Online]. Available:
  \url{https://arxiv.org/abs/1905.10264}
\BIBentrySTDinterwordspacing

\bibitem{Xu2019Training}
\BIBentryALTinterwordspacing
Z.-Q.~J. Xu, Y.~Zhang, and Y.~Xiao, ``Training behavior of deep neural network
  in frequency domain,'' in \emph{Neural Information Processing: 26th
  International Conference, ICONIP 2019, Sydney, NSW, Australia, December
  12–15, 2019, Proceedings, Part I}.\hskip 1em plus 0.5em minus 0.4em\relax
  Berlin, Heidelberg: Springer-Verlag, 2019, p. 264–274. [Online]. Available:
  \url{https://doi.org/10.1007/978-3-030-36708-4_22}
\BIBentrySTDinterwordspacing

\bibitem{Xu2020Frequency}
Z.-Q.~J. Xu, ``Frequency principle: Fourier analysis sheds light on deep neural
  networks,'' \emph{Communications in Computational Physics}, vol.~28, no.~5,
  pp. 1746--1767, 2020.

\bibitem{Rahaman2019Spectral}
\BIBentryALTinterwordspacing
N.~Rahaman, A.~Baratin, D.~Arpit, F.~Draxler, M.~Lin, F.~Hamprecht, Y.~Bengio,
  and A.~Courville, ``On the spectral bias of neural networks,'' in
  \emph{Proceedings of the 36th International Conference on Machine Learning},
  ser. Proceedings of Machine Learning Research, K.~Chaudhuri and
  R.~Salakhutdinov, Eds., vol.~97.\hskip 1em plus 0.5em minus 0.4em\relax PMLR,
  09--15 Jun 2019, pp. 5301--5310. [Online]. Available:
  \url{https://proceedings.mlr.press/v97/rahaman19a.html}
\BIBentrySTDinterwordspacing

\bibitem{Oberhettinger1973Fourier}
\BIBentryALTinterwordspacing
F.~Oberhettinger, \emph{Fourier Transforms of Distributions and their
  Inverses}.\hskip 1em plus 0.5em minus 0.4em\relax Elsevier, 1973. [Online].
  Available: \url{http://dx.doi.org/10.1016/b978-0-12-523650-8.50005-x}
\BIBentrySTDinterwordspacing

\bibitem{Lee2020Finite}
\BIBentryALTinterwordspacing
J.~Lee, S.~Schoenholz, J.~Pennington, B.~Adlam, L.~Xiao, R.~Novak, and
  J.~Sohl-Dickstein, ``Finite versus infinite neural networks: an empirical
  study,'' in \emph{Advances in Neural Information Processing Systems},
  H.~Larochelle, M.~Ranzato, R.~Hadsell, M.~Balcan, and H.~Lin, Eds.,
  vol.~33.\hskip 1em plus 0.5em minus 0.4em\relax Curran Associates, Inc.,
  2020, pp. 15\,156--15\,172. [Online]. Available:
  \url{https://proceedings.neurips.cc/paper_files/paper/2020/file/ad086f59924fffe0773f8d0ca22ea712-Paper.pdf}
\BIBentrySTDinterwordspacing

\bibitem{Flesch2022Orthogonal}
T.~Flesch, K.~Juechems, T.~Dumbalska, A.~Saxe, and C.~Summerfield, ``Orthogonal
  representations for robust context-dependent task performance in brains and
  neural networks,'' \emph{Neuron}, vol. 110, no.~7, pp. 1258--1270, 2022.

\bibitem{Mehta2021Extreme}
\BIBentryALTinterwordspacing
H.~Mehta, A.~Cutkosky, and B.~Neyshabur, ``Extreme memorization via scale of
  initialization,'' in \emph{International Conference on Learning
  Representations}, 2021. [Online]. Available:
  \url{https://openreview.net/forum?id=Z4R1vxLbRLO}
\BIBentrySTDinterwordspacing

\bibitem{arora2019exact}
S.~Arora, S.~S. Du, W.~Hu, Z.~Li, R.~Salakhutdinov, and R.~Wang, ``On exact
  computation with an infinitely wide neural net,'' 2019, arXiv preprint
  arXiv:1904.11955.

\bibitem{Ghorbani2020Neural}
\BIBentryALTinterwordspacing
B.~Ghorbani, S.~Mei, T.~Misiakiewicz, and A.~Montanari, ``When do neural
  networks outperform kernel methods?'' in \emph{Advances in Neural Information
  Processing Systems}, H.~Larochelle, M.~Ranzato, R.~Hadsell, M.~Balcan, and
  H.~Lin, Eds., vol.~33.\hskip 1em plus 0.5em minus 0.4em\relax Curran
  Associates, Inc., 2020, pp. 14\,820--14\,830. [Online]. Available:
  \url{https://proceedings.neurips.cc/paper_files/paper/2020/file/a9df2255ad642b923d95503b9a7958d8-Paper.pdf}
\BIBentrySTDinterwordspacing

\bibitem{Donoho1995Adapting}
\BIBentryALTinterwordspacing
D.~L. Donoho and I.~M. Johnstone, ``Adapting to unknown smoothness via wavelet
  shrinkage,'' \emph{Journal of the American Statistical Association}, pp.
  1200--1224, 1995. [Online]. Available:
  \url{http://dx.doi.org/10.1080/01621459.1995.10476626}
\BIBentrySTDinterwordspacing

\bibitem{Atanasov2022Neural}
\BIBentryALTinterwordspacing
A.~Atanasov, B.~Bordelon, and C.~Pehlevan, ``Neural networks as kernel
  learners: The silent alignment effect,'' in \emph{International Conference on
  Learning Representations}, 2022. [Online]. Available:
  \url{https://openreview.net/forum?id=1NvflqAdoom}
\BIBentrySTDinterwordspacing

\bibitem{Yang2022Construction}
H.~Yang, Z.~Xiong, and F.~Zonta, ``Construction of a deep neural network energy
  function for protein physics,'' \emph{Journal of chemical theory and
  computation}, vol.~18, no.~9, pp. 5649--5658, 2022.

\end{thebibliography}

\clearpage
\onecolumn
\appendix
\section{}

\subsection{Radon Representation}

Starting with
\[ f_{\theta_{\tRS}(t)}
     = \R^*
       \left\{
         \left(
           \phi
           \ast_\gamma
           c_t(\vxi,\cdot)
           \eta_t(\vxi,\cdot)
         \right)
         (\gamma)
       \right\},
\]
we wish to solve for $c_t(\vxi,\gamma)$. Starting by inverting the $\R^*$:
\begin{IEEEeqnarray*}{r;l}
  (\phi \ast_\gamma c_t(\vxi,\cdot)\eta_t(\vxi,\cdot))(\gamma)
    &= -\kappa_D \R\left\{\left(-\nabla^2\right)^{(D-1)/2} f_{\theta_{\tRS}(t)} \right\}(\vxi,\gamma)\\
\shortintertext{Applying the Fourier transform with respect to $\gamma$ to both
sides yields}
  \left(\F_\gamma[\phi] \cdot \F_\gamma[c_t(\vxi,\cdot)\eta_t(\vxi,\cdot)]\right)(\vartheta)
    &= -\kappa_D \F_\gamma\left[\R\left\{\left(-\nabla^2\right)^{(D-1)/2} f_{\theta_{\tRS}(t)} \right\}(\vxi,\cdot)\right](\vartheta)\\
\shortintertext{Applying the Central Slice Theorem to the right hand side,}
    &= -\kappa_D \F_D\left[\left(-\nabla^2\right)^{(D-1)/2} f_{\theta_{\tRS}(t)}\right](\vartheta\vxi)\\
    &= -\kappa_D |\vartheta|^{D-1} \F_D\left[f_{\theta_{\tRS}(t)}\right](\vartheta\vxi)\\
\shortintertext{Then, dividing by $\F_\gamma[\phi](\vartheta)$}
  \F_\gamma\left[c_t(\vxi,\cdot)\eta_t(\vxi,\cdot)\right](\vartheta)
    &= -\kappa_D\frac{|\vartheta|^{D-1}}{\F_\gamma[\phi](\vartheta)}
       \F_D\left[f_{\theta_{\tRS}(t)}\right](\vartheta\vxi)\\
\shortintertext{Finally, apply the inverse Fourier transform with respect to
$\gamma$ to both sides, and divide by $\eta_t(\vxi,\gamma)$}
  c_t(\vxi,\gamma)
    &= \frac{-\kappa_D}{\eta_t(\vxi,\gamma)}
       \F_\gamma^{-1}
       \left[
         \frac{|\vartheta|^{D-1}}{\F_\gamma[\phi](\vartheta)}
         \F_D\left[f_{\theta_{\tRS}(t)}\right](\vartheta\vxi)
       \right](\gamma)\\
    &\triangleq
       \frac{1}{\eta_t(\vxi,\gamma)}
       \left(\R^*\right)^{-1}
       \left\{
         \L^{-1}_{\phi,\vxi} f_{\theta_{\tRS}(t)}
       \right\}
       (\vxi,\gamma),
\end{IEEEeqnarray*}
where $\L^{-1}_{\phi,\vxi}$ is the convolutional inverse of $\phi$, i.e.\ the
linear operator such that $\L^{-1}_{\phi,\vxi} \phi = \delta$, applied in the
direction of $\vxi$

\subsection{Kernel Regime Implicit Regularization}

Starting with
\[c_t(\vxi,\gamma)
    = \frac{1}{\eta_t(\vxi,\gamma)}
      \left(\R^*\right)^{-1}
      \left\{
        \L^{-1}_{\phi,\vxi} f_{\theta_{\tRS}(t)}
      \right\}
      (\vxi,\gamma),
\]
Then,
\begin{IEEEeqnarray*}{r;l}
  \|\vmu\|_2^2
    &= \sum_{i=1}^H \mu_i^2\\
\shortintertext{Applying the integral representation from \Cref{sec:radonintro}, we
get}
    &= \int\limits_{\mathclap{\SS^{D-1}\times\RR\times\RR}}
         \mu^2
       \dd{\eta_t(\vxi,\gamma,\mu)}\\
    &= \int\limits_{\mathclap{\SS^{D-1}\times\RR}}
         \hspace{1.3mm}
         \left(
           \hspace{0.3mm}
           \int\limits_\RR
             \mu^2
             \eta_0(\mu|\vxi,\gamma)
           \dd{\mu}
         \right)
       \dd{\eta_0(\vxi,\gamma)}\\
\shortintertext{Here, we have used the fact that in the kernel regime,
$\eta_t(\vxi,\gamma)$ does not change with $t$.
}
    &= \int\limits_{\mathclap{\SS^{D-1}\times\RR}}
         \EE_{\eta_0}\left[ \mu^2|\vxi,\gamma\right]
       \dd{\eta_0(\vxi,\gamma)}\\
    &= \int\limits_{\mathclap{\SS^{D-1}\times\RR}}
         \EE_{\eta_0}\left[\mu|\vxi,\gamma\right]^2
        +\operatorname{Var}_{\eta_0}\left[\mu|\vxi,\gamma\right]
       \dd{\eta_0(\vxi,\gamma)}\\
\shortintertext{In the setting of the optimization~\Cref{eq:kernelreg}, note that
the data-fitting term only depends on the mean $\int_\RR \mu
\eta_0(\mu|\vxi,\gamma)$, so we may always minimize the above integral by
setting $\Var[\mu|\vxi,\gamma] = 0$.
Thus, at the minimizer,
}
    &= \int\limits_{\mathclap{\SS^{D-1}\times\RR}}
         \EE_{\eta_0}\left[\mu|\vxi,\gamma\right]^2
       \dd{\eta_0(\vxi,\gamma)}\\
    &\triangleq
       \int\limits_{\mathclap{\SS^{D-1}\times\RR}}
         c_t(\vxi,\gamma)^2
       \dd{\eta_0}\!(\vxi,\gamma)\\
\shortintertext{Multiplying and dividing by $\eta_0(\vxi,\gamma)$, and explicitly
integrating over $\supp \eta_0$ to avoid dividing by 0, we have}
    &= \int\limits_{\mathclap{\supp \eta_0}}
         \frac{\left(c_t(\vxi,\gamma) \eta_0(\vxi,\gamma)\right)^2}{\eta_0(\vxi,\gamma)}
       \dd{\vxi}\dd{\gamma}\\
\shortintertext{Expanding $c_t(\cdot,\cdot)$ gives}
    &= \int\limits_{\mathclap{\supp \eta_0}}
         \frac
           {\left(
              \left(\R^*\right)^{-1}
              \left\{
                \L^{-1}_{\phi,\vxi} f_{\theta_{\tRS}(t)}
              \right\}(\vxi,\gamma)
            \right)^2
           }
           {\eta_0(\vxi,\gamma)}
       \dd{\vxi}\dd{\gamma}\\
\shortintertext{Taking $\argmin$ of both sides yields \Cref{eq:kernelreg}.}
\end{IEEEeqnarray*}

\subsection{Activation Functions}\label{sec:actderiv}

Here we derive the results summarized in \Cref{tab:activation}.

\paragraph{Power ReLU Family.}
Let $\lambda > 0$, and let $\DD_{+,\gamma}^\lambda$ be the right-sided
Riemann-Liouiville Fractional Derivative of order $\lambda$, w/r/t $\gamma$,
given by
\[ \DD_{+,\gamma}^\lambda f
   \triangleq
     \pdv[\lceil\lambda\rceil]{}{\gamma} \II_{+,\gamma}^{\lceil\lambda\rceil-\lambda} f,
\]
where $\II_{+,\gamma}^{\lambda}$ is the right-sided Riemann-Liouiville
Fractional Integral of order $\lambda$, w/r/t $\gamma$, given by
\[ \II_{+,\gamma}^\lambda f
   \triangleq
     \frac{1}{\Gamma(\lambda)}
     \int_{-\infty}^\gamma
       (\gamma-t)^{\lambda-1}
       f(t)
     \dd{t}.
\]
Then, if $f$ is well-behaved enough, specifically a Lizorkin function (a
Schwartz function whose Fourier transform is a Schwartz function $\psi$ such
that $\psi^{(k)}(0) = 0$ for all $k \ge 0$; equivalently, the Lizorkin space is
the space of Schwartz functions that are orthogonal to all polynomials), we
have
\[ \F_\gamma\left[\DD_{+,\gamma}^\lambda\right] = (i\vartheta)^\lambda, \qquad
   \F_\gamma\left[\II_{+,\gamma}^\lambda\right] = (i\vartheta)^{-\lambda}.
   \vspace{-2mm}
\]

Let
\begin{IEEEeqnarray*}{r;l}
  \left(\DD_{+,\vxi}^\lambda\, f\right)(\vx) &= \DD_{+,\gamma}^\lambda\, f(\vx + \gamma\vxi),\\
\shortintertext{%
i.e.\ take the fractional derivative of the 1-dimensional slice of $f(\cdot)$ at
$\vx$ in the direction of $\vxi$.
Then,}
  \F_D\left[\DD_{+,\vxi}^\lambda\, f\right](\vvartheta)
    &= \F_D\left[\DD_{+,\gamma}^\lambda\, f(\vx + \gamma\vxi)\right](\vvartheta)\\
    &= \int\limits_{\mathclap{\RR^D}}
         \DD_{+,\gamma}^\lambda\, f(\vx + \gamma\vxi)
         e^{-i\langle\vx,\vvartheta\rangle}
       \dd{\vx}\\
\shortintertext{Split $\RR^D$ into the parts parallel and perpendicular to $\vxi$:}
    &= \int\limits_{\mathclap{\RR^{D-1}\times\RR}}
           \DD_{+,x^\parallel}^\lambda\, f(\vx^\perp + x^\parallel\vxi)
           e^{-i\langle\vx^\perp + x^\parallel\vxi,\vvartheta\rangle}
         \dd{x^\parallel}
       \dd{\vx^\perp}\\
    &= \int\limits_{\mathclap{\RR^{D-1}\times\RR}}
           \DD_{+,x^\parallel}^\lambda\, f(\vx^\perp + x^\parallel\vxi)
           e^{-i\langle\vx^\perp + x^\parallel\vxi,\vvartheta^\perp + \vartheta^\parallel\vxi\rangle}
         \dd{x^\parallel}
       \dd{\vx^\perp}\\
    &= \int\limits_{\mathclap{\RR^{D-1}\times\RR}}
           \DD_{+,x^\parallel}^\lambda\, f(\vx^\perp + x^\parallel\vxi)
           e^{-i\left(\langle\vx^\perp,\vvartheta^\perp\rangle + x^\parallel\vartheta^\parallel\right)}
         \dd{x^\parallel}
       \dd{\vx^\perp}\\
    &= \int\limits_{\mathclap{\RR^{D-1}\times\RR}}
           \DD_{+,x^\parallel}^\lambda\, f(\vx^\perp + x^\parallel\vxi)
           e^{-i x^\parallel\vartheta^\parallel}
         \dd{x^\parallel}
         e^{-i\langle\vx^\perp,\vvartheta^\perp\rangle}
       \dd{\vx^\perp}\\
    &= \int\limits_{\mathclap{\RR^{D-1}}}
         \F_{x^\parallel}\left[ \DD_{+,x^\parallel}^\lambda\, f(\vx^\perp + x^\parallel\vxi)\right](\vartheta^\parallel)
         e^{-i\langle\vx^\perp,\vvartheta^\perp\rangle}
       \dd{\vx^\perp}\\
    &= \int\limits_{\mathclap{\RR^{D-1}}}
         (i\vartheta^\parallel)^\lambda \F_{x^\parallel}\left[ f(\vx^\perp + x^\parallel\vxi)\right](\vartheta^\parallel)
         e^{-i\langle\vx^\perp,\vvartheta^\perp\rangle}
       \dd{\vx^\perp}\\
    &= \F_{\vx^\perp}\left[
         (i\vartheta^\parallel)^\lambda \F_{x^\parallel}\left[ f(\vx^\perp + x^\parallel\vxi)\right](\vartheta^\parallel)
       \right](\vvartheta^\perp)\\
    &= (i\vartheta^\parallel)^\lambda
       \F_{\vx^\perp}\left[
         \F_{x^\parallel}\left[ f(\vx^\perp + x^\parallel\vxi)\right](\vartheta^\parallel)
       \right](\vvartheta^\perp)\\
    &= (i\vartheta^\parallel)^\lambda
       \F_D[f](\vvartheta)\\
    &= (i\langle\vvartheta,\vxi\rangle)^\lambda
       \F_D[f](\vvartheta)\\
\end{IEEEeqnarray*}
In the context of the Radon transform and the Central Slice Theorem, we have
$\vvartheta \triangleq \vartheta\vxi$; applying this, we get
\[ \F_D\left[\DD_{+,\vxi}^\lambda\, f\right](\vartheta\vxi)
   = (i\vartheta)^\lambda
     \F_D[f](\vartheta\vxi)
\]

Then, the activation function with this filter will be $\phi_\lambda(\cdot)$
such that $\F[\phi_\lambda]^{-1}(\vartheta) = (i\vartheta)^{-\lambda}$, i.e.\
convolution in $\gamma$ with this $\phi_\lambda(\cdot)$ is equivalent to the
Fractional integral $\II_{+,\gamma}^\lambda$, so that
\[ \phi_\lambda \ast f
   = \frac{1}{\Gamma(\lambda)}
     \int_{-\infty}^\gamma
       (\gamma-t)^{\lambda-1}
       f(t)
     \dd{t}.
\]
However, we can see that the right side this equation is already a convolution,
revealing that
\[ \phi_\lambda(z) = \frac{z^{\lambda-1}}{\Gamma(\lambda)} \Theta(z) = \frac{(z)_+^{\lambda-1}}{\Gamma(\lambda)} \]

Specializing for $\lambda=2$, we get $\phi_2(z) = (z)_+$ (i.e.\ the ReLU
activation), with corresponding filter $-k^2$ and regularizing operator
$\nabla^2$, as expected; for $\lambda=1$, we get $\phi_1(z) = \Theta(z)$ (i.e.\
the Heaviside distribution), with corresponding filter $ik$ and regularazing
operator $\partial_{\vxi}$.

\paragraph{SoftPlus Family.}
Consider the sigmoid function $\phi(x) = \operatorname{logistic}(\sigma x) =
\frac{e^{\sigma x}}{1+e^{\sigma x}}$.
First, note that
\begin{IEEEeqnarray*}{r;l}
  \phi'(x) &= \frac{\sigma}{4\cosh[2](\frac{\sigma\gamma}{2})}.\\
\shortintertext{Then, we will use the rule}
  \F\left[\int_{-\infty}^{x}f(\tau)\dd{\tau}\right](\vartheta)
    &= \pi\F[f](0)\delta(\vartheta) + \frac{\F[f](\vartheta)}{i\vartheta}\\
\shortintertext{%
Because we will always treat $\F_\gamma[\phi]$ as a Fourier multiplier against
a Lizorkin function, the $\delta(\vartheta)$ term can be treated as 0.
Applying this yields}
  \F_\gamma[\phi](\vartheta)
    &= \frac{1}{i\vartheta}\F_\gamma\left[\frac{\sigma}{4\cosh[2](\frac{\sigma\gamma}{2})}\right]\\
  \bnote{$(\gamma = \ln(t), t=u^{2/\sigma})$}
    &= \frac{2}{i\vartheta}
       \int_0^\infty
         \frac
           {u^{1-\frac{2i\vartheta}{\sigma}}}
           {(u^2 + 1)^2}
       \dd{u}\\
  \bnote{\small (GR 3.251.2; $\mu=2-\frac{2i\vartheta}{\sigma}, \nu=-1$)}
    &= \frac{1}{i\vartheta} B(1-\frac{i\vartheta}{\sigma}, 1+\frac{i\vartheta}{\sigma})\\
    &= \frac{\pi}{i\vartheta} \csc(\pi\frac{i\vartheta}{\sigma})\frac{1}{B(1,\frac{i\vartheta}{\sigma})}\\
    &= \frac{\pi}{i\vartheta} \csc(\pi\frac{i\vartheta}{\sigma})\frac{i\vartheta}{\sigma}\\
    &= \frac{1}{i\vartheta} \csch(\frac{\pi\vartheta}{\sigma})\frac{\pi\vartheta}{\sigma}\\
    &= - \frac{i\pi}{\sigma} \csch(\frac{\pi\vartheta}{\sigma})\\
\shortintertext{Thus, the filter associated with $\phi(x)$ is}
  \F_\gamma[\phi](\vartheta)^{-1}
    &= -\frac{\sigma}{i\pi}
       \sinh(\frac{\pi\vartheta}{\sigma})
\end{IEEEeqnarray*}
as expected.

The SoftPlus function is just the integral of the sigmoid, incurring a
$i\vartheta$ multiplier per the rule above.
Seeking to extend this family as with the Power ReLU family, we consider the
integral of the SoftPlus,
\[ \int_{-\infty}^z \ln(1+e^{\sigma x}) \dd{x}. \]
Performing the substitution $y \triangleq -e^{\sigma x}$
\[ \int_0^{-e^{\sigma z}} \frac{\ln(1-y)}{\sigma^2 y} \dd{y} \]
yields the result $-\frac{1}{\sigma^2} \operatorname{Li}_2\left(-e^{\sigma
z}\right)$, where $\operatorname{Li}_n(\cdot)$ is the polylogarithm of order
$n$; in particular,
\begin{IEEEeqnarray*}{r;l}
  \operatorname{Li}_0(z) &= \frac{z}{1-z}\\
  \operatorname{Li}_1(z) &= -\ln(1-z)\\
  \operatorname{Li}_n(z) &= \int_0^z \frac{\operatorname{Li}_{n-1}(t)}{t} \dd{t}
\end{IEEEeqnarray*}
Thus, additional integrals of the sigmoid/softplus will yield the order $n$
``Power SoftPlus'' $\phi_n = -\frac{1}{\sigma^n}
\operatorname{Li}_n\left(-e^{\sigma z}\right)$.
Empirically, taking the limit as the sharpness parameter approaches $\infty$
yields the Power ReLU of order $\lambda=n+1$, as expected.

\paragraph{SatReLU.}
Consider $\phi(x) = (x)_+ - (x-\Delta)_+$ (fixed-width un-normalized saturating relu)
\begin{IEEEeqnarray*}{r;l}
  \F_\gamma[\phi](\vartheta)
    &= -\frac{1}{\vartheta^2}\F[\delta(\gamma) - \delta(\gamma-\Delta)]\\
    &= -\frac{1}{\vartheta^2}\left(\F[\delta(\gamma)] - \F[\delta(\gamma-\Delta)]\right)\\
    &= -\frac{1}{\vartheta^2}\left(\F[\delta(\gamma)] - e^{-i\Delta\vartheta}\F[\delta(\gamma)]\right)\\
    &= -\frac{1}{\vartheta^2}\left(1 - e^{-i\Delta\vartheta}\right)\\
\end{IEEEeqnarray*}
$\displaystyle \lim_{\Delta\to\infty} e^{-i\Delta\vartheta}$ is undefined, but
takes on an ``average value'' of 0, which would yield the ReLU limit we
expect.

\paragraph{Wavepacket}
\begin{IEEEeqnarray*}{r;l}
  \F\left[
      \frac
        {\cos(\omega z)e^{-\frac{1}{2}\left(\frac{z}{\sigma}\right)^2}}
        {\sigma\sqrt{2\pi}}
    \right](k)
    &= \frac{1}{2\pi}
       \F\left[
         \cos(\omega z)
       \right](k)
       \ast
       \F\left[
         \frac
           {e^{-\frac{1}{2}\left(\frac{z}{\sigma}\right)^2}}
           {\sigma\sqrt{2\pi}}
       \right](k)\\
    &= \frac{1}{2\pi}
       \pi\left(\delta(k-\omega) + \delta(k+\omega)\right)
       \ast
       \exp(-\frac{\sigma^2k^2}{2})\\
    &= \frac
         {e^{-\frac{\sigma^2(k+\omega)^2}{2}} + e^{-\frac{\sigma^2(k-\omega)^2}{2}}}
         {2}
\end{IEEEeqnarray*}
as desired.

The remaining activation functions can be found in tables of typical Fourier
transform examples, or as the anti-derivative of such examples.

\subsection{Fourier transform of a finite-width network}
To calculate the ($D$-dimensional) Fourier transform of
$f_{\theta_{\tRS}}(\vx)$, we start by treating $f_{\theta_{\tRS}}(\vx)$ as a
distribution, which is defined in terms of its action on the test function
$\psi(\cdot)$:
\begin{IEEEeqnarray*}{r;l}
  \langle f_{\theta_{\tRS}}, \psi\rangle
    &= \sum_{j=1}^H
         \mu_i
         \langle \phi_{\omega_i}\mkern-2mu(\langle\vxi_i,\cdot\rangle-\gamma_i), \psi\rangle\\
    &= \sum_{j=1}^H
         \mu_i
         \int_{\RR^D}
           \phi_{\omega_i}\mkern-2mu(\langle\vxi_i,\vx\rangle-\gamma_i)
           \psi(\vx)
         \dd{\vx}\\
\shortintertext{Rotating $\vx$: let $R_{\vxi_i}$ be the (unitary) rotation matrix
that takes $\vxi_i$ to $\ve_1$; let $\tilde{\vx}$ denote the coordinates
rotated by $R_{\vxi_i}$}
    &= \sum_{j=1}^H
         \mu_i
         \int_{\RR^D}
           \phi_{\omega_i}\mkern-2mu(\langle\vxi_i,R'_{\vxi_i} \tilde{\vx}\rangle-\gamma_i)
           \psi(R'_{\vxi_i}\tilde{\vx})
         \dd{\tilde{\vx}}\\
    &= \sum_{j=1}^H
         \mu_i
         \int_{\RR^D}
           \phi_{\omega_i}\mkern-2mu(\langle R_{\vxi_i} \vxi_i, \tilde{\vx}\rangle-\gamma_i)
           \psi(R'_{\vxi_i}\tilde{\vx})
         \dd{\tilde{\vx}}\\
    &= \sum_{j=1}^H
         \mu_i
         \int_{\RR^D}
           \phi_{\omega_i}\mkern-2mu(\langle \ve_1, \tilde{\vx}\rangle-\gamma_i)
           \psi(R'_{\vxi_i}\tilde{\vx})
         \dd{\tilde{\vx}}\\
    &= \sum_{j=1}^H
         \mu_i
         \int_{\RR^D}
           \phi_{\omega_i}\mkern-2mu(\tilde{x}_1-\gamma_i)
           \psi(R'_{\vxi_i}\tilde{\vx})
         \dd{\tilde{\vx}}\\
\shortintertext{Then, the Fourier transform of a distribution is defined by the action
of the distribution on the Fourier transform of a test function:}
  \langle \F_D[f_{\theta_{\tRS}}], \psi\rangle
    &= \langle f_{\theta_{\tRS}}, \F_D[\psi]\rangle\\
    &= \sum_{j=1}^H
         \mu_i
         \int_{\RR^D}
           \phi_{\omega_i}\mkern-2mu(\tilde{x}_1-\gamma_i)
           \F_D[\psi](R'_{\vxi_i}\tilde{\vx})
         \dd{\tilde{\vx}}\\
    &= \sum_{j=1}^H
         \mu_i
         \int_{\RR^D}
           \phi_{\omega_i}\mkern-2mu(\tilde{x}_1-\gamma_i)
           \int_{\RR^D}
             \psi(\vz)
             e^{-i\langle R'_{\vxi_i}\tilde{\vx},\vz\rangle}
           \dd{\vz}
         \dd{\tilde{\vx}}\\
    &= \sum_{j=1}^H
         \mu_i
         \int_{\RR}
           \phi_{\omega_i}\mkern-2mu(\tilde{x}_1-\gamma_i)
           \int_{\RR^{D-1}}
             \int_{\RR^D}
               \psi(\vz)
               e^{-i\langle R'_{\vxi_i}\tilde{\vx},\vz\rangle}
             \dd{\vz}
           \dd{\tilde{\vx}_{2:D}}
         \dd{\tilde{x}_1}\\
    &= \sum_{j=1}^H
         \mu_i
         \int_{\RR}
           \phi_{\omega_i}\mkern-2mu(\tilde{x}_1-\gamma_i)
           \int_{\RR^D}
             \psi(\vz)
             \int_{\RR^{D-1}}
               e^{-i\langle R'_{\vxi_i}\tilde{\vx},\vz\rangle}
             \dd{\tilde{\vx}_{2:D}}
           \dd{\vz}
         \dd{\tilde{x}_1}\\
\shortintertext{Rotating both sides of a dot product by $R_{\vxi_i}$ leaves the dot
product invariant:}
    &= \sum_{j=1}^H
         \mu_i
         \int_{\RR}
           \phi_{\omega_i}\mkern-2mu(\tilde{x}_1-\gamma_i)
           \int_{\RR^D}
             \psi(\vz)
             \int_{\RR^{D-1}}
               e^{-i\langle \tilde{\vx},R_{\vxi_i}\vz\rangle}
             \dd{\tilde{\vx}_{2:D}}
           \dd{\vz}
         \dd{\tilde{x}_1}\\
\shortintertext{Reparam $\vz$ to $\tilde{\vz}$ via the same rotation $R_{\vxi_i}$:}
    &= \sum_{j=1}^H
         \mu_i
         \int_{\RR}
           \phi_{\omega_i}\mkern-2mu(\tilde{x}_1-\gamma_i)
           \int_{\RR^D}
             \psi(R'_{\vxi_i}\tilde{\vz})
             \int_{\RR^{D-1}}
               e^{-i\langle \tilde{\vx},\tilde{\vz}\rangle}
             \dd{\tilde{\vx}_{2:D}}
           \dd{\tilde{\vz}}
         \dd{\tilde{x}_1}\\
    &= \sum_{j=1}^H
         \mu_i
         \int_{\RR}
           \phi_{\omega_i}\mkern-2mu(\tilde{x}_1-\gamma_i)
           \int_{\RR^D}
             \psi(R'_{\vxi_i}\tilde{\vz})
             \int_\RR
               \cdots
               \int_\RR
                 e^{-i \tilde{x}_1 \tilde{z}_1}
                 e^{-i \tilde{x}_2 \tilde{z}_2}
                 \cdots
                 e^{-i \tilde{x}_D \tilde{z}_D}
               \dd{\tilde{x}_D}
             \cdots\dd{\tilde{x}_2}
           \dd{\tilde{\vz}}
         \dd{\tilde{x}_1}\\
    &= \sum_{j=1}^H
         \mu_i
         \int_{\RR}
           \phi_{\omega_i}\mkern-2mu(\tilde{x}_1-\gamma_i)
           \int_{\RR^D}
             \psi(R'_{\vxi_i}\tilde{\vz})
             e^{-i \tilde{x}_1 \tilde{z}_1}
             \prod_{d=2}^D
               \int_\RR
                 e^{-i \tilde{x}_d \tilde{z}_d}
               \dd{\tilde{x}_d}
           \dd{\tilde{\vz}}
         \dd{\tilde{x}_1}\\
    &= \sum_{j=1}^H
         \mu_i
         \int_{\RR}
           \phi_{\omega_i}\mkern-2mu(\tilde{x}_1-\gamma_i)
           \int_{\RR^D}
             \psi(R'_{\vxi_i}\tilde{\vz})
             e^{-i \tilde{x}_1 \tilde{z}_1}
             \prod_{d=2}^D
               \delta(\tilde{z}_d)
           \dd{\tilde{\vz}}
         \dd{\tilde{x}_1}\\
    &= \sum_{j=1}^H
         \mu_i
         \int_{\RR}
           \phi_{\omega_i}\mkern-2mu(\tilde{x}_1-\gamma_i)
           \int_{\RR^D}
             \psi(\vz)
             e^{-i \tilde{x}_1 (R_{\vxi_i}\vz)_1}
             \prod_{d=2}^D
               \delta((R_{\vxi_i}\vz)_d)
           \dd{\vz}
         \dd{\tilde{x}_1}\\
\shortintertext{The product-of-Diracs term selects the $\vz$ that are parallel to
$\vxi_i$, leaving only a 1-dimensional integral:}
    &= \sum_{j=1}^H
         \mu_i
         \int_{\RR}
           \phi_{\omega_i}\mkern-2mu(\tilde{x}_1-\gamma_i)
           \int_{\RR}
             \psi(z_1 \vxi_i)
             e^{-i \tilde{x}_1 (R_{\vxi_i} z_1 \vxi_i)_1}
           \dd{z_1}
         \dd{\tilde{x}_1}\\
    &= \sum_{j=1}^H
         \mu_i
         \int_{\RR}
           \phi_{\omega_i}\mkern-2mu(\tilde{x}_1-\gamma_i)
           \int_{\RR}
             \psi(z_1 \vxi_i)
             e^{-i \tilde{x}_1 z_1}
           \dd{z_1}
         \dd{\tilde{x}_1}\\
    &= \sum_{j=1}^H
         \mu_i
         \int_{\RR}
           \phi_{\omega_i}\mkern-2mu(\tilde{x}_1-\gamma_i)
           \F_1[\psi(\cdot\vxi_i)](\tilde{x}_1)
         \dd{\tilde{x}_1}\\
\shortintertext{Using $u$ as the 1-dimensional Fourier variable:}
    &= \sum_{j=1}^H
         \mu_i
         \left\langle
           \phi_{\omega_i}\mkern-2mu(u-\gamma_i)
         , \F_1[\psi(\cdot\vxi_i)](u)
         \right\rangle_1\\
    &= \sum_{j=1}^H
         \mu_i
         \left\langle
           \F_1[\phi_{\omega_i}\mkern-2mu(\cdot-\gamma_i)]
         , \psi(u\vxi_i)
         \right\rangle_1\\
    &= \sum_{j=1}^H
         \mu_i
         \left\langle
           e^{-i\gamma_i u}\F_1[\phi_{\omega_i}](u)
         , \psi(u\vxi_i)
         \right\rangle_1\\
\shortintertext{Thus, $\langle\F_D[f_{\theta_{\tRS}}],\psi\rangle$ computes the Fourier transform
of the activation and integrates it against $\psi$ along lines parallel to
each $\vxi_i$.
That is, $\F_D[f_{\theta_{\tRS}}]$ is a distribution that consists of a sum of ``weighted
Dirac-lines'' with weight $\mu_i e^{-i\gamma_i u}\F_1[\phi_{\omega_i}](u)$ along the
line $\{u\vxi_i|u\in\RR\}\equiv \RR\vxi_i$.
If we define the distribution $\delta_{\vxi_i}\!(\vk)$ by}
  \langle \delta_{\vxi_i},\psi\rangle
    &\triangleq
       \int_\RR
         \psi(u\vxi_i)
       \dd{u}\\
\shortintertext{Then, for any smooth $g(\vx)$}
  \langle g \delta_{\vxi_i},\psi\rangle
    &\triangleq
       \langle \delta_{\vxi_i},g\psi\rangle
     = \int_\RR
         g(u\vxi_i) \psi(u\vxi_i)
       \dd{u}\\
\shortintertext{Using $g(\vx)\triangleq e^{-i\gamma_i
\langle\vk,\vxi_i\rangle}\F_1[\phi](\langle\vk,\vxi_i\rangle)$, we have}
  \F_D[f_{\theta_{\tRS}}](\vk)
    &= \sum_{j=1}^H
         \mu_i
         e^{-i\gamma_i \langle\vk,\vxi_i\rangle}
         \F_1[\phi_{\omega_i}](\langle\vk,\vxi_i\rangle)
         \delta_{\vxi_i}\!(\vk)\\
\end{IEEEeqnarray*}

\subsection{Fourier Interpretation}

Here we collect proofs of the equations of~\Cref{sec:fourier,sec:curse}.

\begin{lemma}
\[
  \int_{\SS^{D-1}\times\RR} \left(\F^{-1}_\gamma\left[\F_D[f](\vartheta\vxi)\right](\gamma)\right)^2 \dd{\vxi} \dd{\gamma}
  = 2\int_{\RR^D} \frac{1}{k^{D-1}} \left|\F_D[f](\vk)\right|^2 \dd{\vk}
\]
\end{lemma}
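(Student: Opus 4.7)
The plan is to move the inner integral to frequency space using Plancherel's theorem, then convert the resulting integral over $\SS^{D-1}\times\RR$ into a Cartesian integral over $\RR^D$ via a polar-style change of variables, being careful about the fact that $\vartheta$ ranges over all of $\RR$ rather than $[0,\infty)$.

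First, for each fixed $\vxi$, the map $\gamma \mapsto \F_\gamma^{-1}\!\left[\F_D[f](\vartheta\vxi)\right](\gamma)$ is the inverse Fourier transform of the $\vartheta$-dependent profile $\vartheta \mapsto \F_D[f](\vartheta\vxi)$. Since $f$ is assumed nice enough for its Fourier transform to exist in $L^2$, Plancherel's theorem in the $\gamma/\vartheta$ pair gives
\[
  \int_\RR \left(\F^{-1}_\gamma\!\left[\F_D[f](\vartheta\vxi)\right](\gamma)\right)^2 \dd{\gamma}
  = \int_\RR \left|\F_D[f](\vartheta\vxi)\right|^2 \dd{\vartheta},
\]
where we have also used that the integrand on the left is real-valued (for real $f$ the quantity inside the square is real, so $(\cdot)^2 = |\cdot|^2$). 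Integrating both sides over $\vxi\in\SS^{D-1}$ rewrites the LHS of the lemma as $\int_{\SS^{D-1}\times\RR}|\F_D[f](\vartheta\vxi)|^2 \dd{\vxi}\dd{\vartheta}$.

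Next I would perform the change of variables $\vk = \vartheta\vxi$. The natural polar substitution $\vk = r\vxi$ with $r\in[0,\infty)$ and $\vxi\in\SS^{D-1}$ has Jacobian $r^{D-1}$, i.e.\ $\dd{\vk} = r^{D-1}\dd{r}\dd{\vxi}$. Our integral instead runs over all $\vartheta\in\RR$, so every direction $\vxi\in\SS^{D-1}$ is paired with both positive and negative radii: the point $\vk = r\vxi$ (with $r>0$) is produced twice, once by $(\vartheta,\vxi)=(r,\vxi)$ and once by $(-r,-\vxi)$. Thus
\[
  \int_{\SS^{D-1}\times\RR}\left|\F_D[f](\vartheta\vxi)\right|^2\dd{\vxi}\dd{\vartheta}
  = 2\int_{\SS^{D-1}\times[0,\infty)}\left|\F_D[f](r\vxi)\right|^2\dd{\vxi}\dd{r}
  = 2\int_{\RR^D}\frac{\left|\F_D[f](\vk)\right|^2}{k^{D-1}}\dd{\vk},
\]
where the last equality inserts and removes the Jacobian factor $k^{D-1}$ (with $k=\|\vk\|_2$), writing $\dd{\vxi}\dd{r} = k^{-(D-1)}\dd{\vk}$. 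Combining the two displays gives the claimed identity.

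The routine steps are the Plancherel application and the polar change of variables; the only subtle point is bookkeeping the factor of $2$, which comes from the $(\vartheta,\vxi)\sim(-\vartheta,-\vxi)$ symmetry of the map $(\vartheta,\vxi)\mapsto\vartheta\vxi$. Mild regularity (e.g.\ $f$ in a suitable Schwartz or Lizorkin class so that $\F_D[f](\vk)/k^{(D-1)/2}$ is square-integrable on $\RR^D$) is needed to ensure both sides are finite and Fubini is applicable, but no deeper obstacle arises.
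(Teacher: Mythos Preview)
Your proposal is correct and follows essentially the same route as the paper: apply Plancherel in the $\gamma/\vartheta$ variable, account for the double covering $(\vartheta,\vxi)\sim(-\vartheta,-\vxi)$ to obtain the factor of $2$, and then pass to Cartesian coordinates via the polar Jacobian $k^{D-1}$. The only cosmetic difference is that the paper makes the factor of $2$ explicit by splitting the $\vartheta$-integral at $0$ and substituting $\vxi\mapsto-\vxi$ in the negative half, whereas you argue directly from the $2$-to-$1$ nature of the map $(\vartheta,\vxi)\mapsto\vartheta\vxi$; these are equivalent.
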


\begin{proof}
\begin{IEEEeqnarray*}{l}
  \phantom{=}
    \int_{\SS^{D-1}\times\RR} \left(\F^{-1}_\gamma\left[\F_D[f](\vartheta\vxi)\right](\gamma)\right)^2 \dd{\vxi} \dd{\gamma}\\
  = \int_{\SS^{D-1}\times\RR} \left|\F_D[f](\vartheta\vxi)\right|^2 \dd{\vxi} \dd{\vartheta}\\
  = \int_{\SS^{D-1}} \int_0^\infty \left|\F_D[f](\vartheta\vxi)\right|^2 \dd{\vartheta} \dd{\vxi} + \int_{\SS^{D-1}} \int_0^\infty \left|\F_D[f](-\vartheta\vxi)\right|^2 \dd{\vartheta} \dd{\vxi}\\
  = \int_{\SS^{D-1}} \int_0^\infty \left|\F_D[f](\vartheta\vxi)\right|^2 \dd{\vartheta} \dd{\vxi} + \int_{\SS^{D-1}} \int_0^\infty \left|\F_D[f](\vartheta(-\vxi))\right|^2 \dd{\vartheta} \dd{\vxi}\\
  = \int_{\SS^{D-1}} \int_0^\infty \left|\F_D[f](\vartheta\vxi)\right|^2 \dd{\vartheta} \dd{\vxi} + \int_{\SS^{D-1}} \int_0^\infty \left|\F_D[f](\vartheta\vxi)\right|^2 \dd{\vartheta} \dd{\vxi}\\
  = 2\int_{\SS^{D-1}} \int_0^\infty \left|\F_D[f](\vartheta\vxi)\right|^2 \dd{\vartheta} \dd{\vxi}\\
  = 2\int_{\RR^D} \frac{1}{k^{D-1}} \left|\F_D[f](\vk)\right|^2 \dd{\vk}
\end{IEEEeqnarray*}
\end{proof}

\begin{lemma}
\[
  \int_{\SS^{D-1}\times\RR} \left(\F^{-1}_\gamma\left[|\vartheta|^{D-1}\F_D[f](\vartheta\vxi)\right](\gamma)\right)^2 \dd{\vxi} \dd{\gamma}
  = 2\int_{\RR^D} \left|k^{(D-1)/2} \F_D[f](\vk)\right|^2 \dd{\vk}
\]
\end{lemma}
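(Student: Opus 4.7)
The plan is to mirror the proof of the previous lemma almost verbatim, inserting the extra multiplier $|\vartheta|^{D-1}$ and tracking it through each step. First I would apply Plancherel's theorem in the $\gamma$ variable to rewrite the left-hand side as
\[
 \int_{\SS^{D-1}\times\RR} |\vartheta|^{2(D-1)} \left|\F_D[f](\vartheta\vxi)\right|^2 \dd{\vxi}\dd{\vartheta},
\]
which absorbs the fractional-Laplacian filter into the squared modulus exactly as in the previous lemma.

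Next I would split the $\vartheta$-integral into the $\vartheta>0$ and $\vartheta<0$ halves. Since $|\vartheta|^{D-1}$ is even, the substitution $\vxi\mapsto-\vxi$ in the negative-$\vartheta$ half transports it to the positive-$\vartheta$ half (using $\F_D[f](-\vartheta\vxi)=\F_D[f](\vartheta(-\vxi))$ and the fact that integration over $\SS^{D-1}$ is invariant under antipodal reflection), producing the factor of $2$. This is the same symmetry argument used in the previous lemma.

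Finally I would pass to Euclidean coordinates via $\vk=\vartheta\vxi$ with $\vartheta>0$, whose Jacobian is $\vartheta^{D-1}=k^{D-1}$ where $k\triangleq\|\vk\|_2$. The integrand $\vartheta^{2(D-1)}|\F_D[f](\vartheta\vxi)|^2$ becomes $k^{2(D-1)}|\F_D[f](\vk)|^2$, and combined with $\dd{\vxi}\dd{\vartheta}=k^{-(D-1)}\dd{\vk}$ the net power collapses to $k^{D-1}|\F_D[f](\vk)|^2 = \big|k^{(D-1)/2}\F_D[f](\vk)\big|^2$, which is the claimed right-hand side.

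There is no substantive obstacle here; the work is purely bookkeeping. The one place to be careful is the exponent arithmetic: squaring the multiplier $|\vartheta|^{D-1}$ gives $k^{2(D-1)}$, the polar Jacobian contributes only $k^{-(D-1)}$ (not $k^{-D}$, since we already have $\dd{\vxi}$ on the sphere rather than $\dd{\vxi}$ weighted by radius), so the residual power is $k^{D-1}$ rather than $k^{D}$ or $1$. Regularity issues (existence of the inverse Fourier transforms and applicability of Plancherel) are implicitly handled by restricting $f$ to the Lizorkin class as in the fractional-derivative appendix. Conceptually, the lemma is equivalent to applying the previous lemma to a function whose $D$-dimensional Fourier transform is $k^{D-1}\F_D[f](\vk)$, so one could alternatively state and prove a single general lemma with an arbitrary radial multiplier $m(\vartheta)$ and recover both results as specializations.
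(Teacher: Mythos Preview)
The proposal is correct and follows essentially the same argument as the paper: Plancherel in $\gamma$, splitting the $\vartheta$-integral and using the antipodal symmetry $\vxi\mapsto-\vxi$ together with the evenness of $|\vartheta|^{D-1}$ to extract the factor of $2$, then passing to Euclidean coordinates and collapsing the exponents $k^{2(D-1)}\cdot k^{-(D-1)}=k^{D-1}$. Your added remark that both this and the preceding lemma are specializations of a single statement with an arbitrary radial multiplier $m(\vartheta)$ is a nice observation not made explicit in the paper.
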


\begin{proof}
\begin{IEEEeqnarray*}{l}
  \phantom{=}
    \int_{\SS^{D-1}\times\RR} \left(\F^{-1}_\gamma\left[|\vartheta|^{D-1}\F_D[f](\vartheta\vxi)\right](\gamma)\right)^2 \dd{\vxi} \dd{\gamma}\\
  = \int_{\SS^{D-1}\times\RR} \left||\vartheta|^{D-1}\F_D[f](\vartheta\vxi)\right|^2 \dd{\vxi} \dd{\vartheta}\\
  = \int_{\SS^{D-1}} \int_0^\infty \left||\vartheta|^{D-1}\F_D[f](\vartheta\vxi)\right|^2 \dd{\vartheta} \dd{\vxi} + \int_{\SS^{D-1}} \int_0^\infty \left||{-\vartheta}|^{D-1}\F_D[f](-\vartheta\vxi)\right|^2 \dd{\vartheta} \dd{\vxi}\\
  = \int_{\SS^{D-1}} \int_0^\infty \left||\vartheta|^{D-1}\F_D[f](\vartheta\vxi)\right|^2 \dd{\vartheta} \dd{\vxi} + \int_{\SS^{D-1}} \int_0^\infty \left||\vartheta|^{D-1}\F_D[f](\vartheta(-\vxi))\right|^2 \dd{\vartheta} \dd{\vxi}\\
  = \int_{\SS^{D-1}} \int_0^\infty \left||\vartheta|^{D-1}\F_D[f](\vartheta\vxi)\right|^2 \dd{\vartheta} \dd{\vxi} + \int_{\SS^{D-1}} \int_0^\infty \left||\vartheta|^{D-1}\F_D[f](\vartheta\vxi)\right|^2 \dd{\vartheta} \dd{\vxi}\\
  = 2\int_{\SS^{D-1}} \int_0^\infty \left||\vartheta|^{D-1}\F_D[f](\vartheta\vxi)\right|^2 \dd{\vartheta} \dd{\vxi}\\
  = 2\int_{\RR^D} \frac{1}{k^{D-1}} \left|k^{D-1} \F_D[f](\vk)\right|^2 \dd{\vk}\\
  = 2\int_{\RR^D} \left|k^{(D-1)/2} \F_D[f](\vk)\right|^2 \dd{\vk}
\end{IEEEeqnarray*}

\end{proof}

\begin{lemma}
\[
  \int_{\SS^{D-1}\times\RR} \left(\F^{-1}_\gamma\left[\frac{|\vartheta|^{D-1}}{\F_\gamma[\phi](\vartheta)}\F_D[f](\vartheta\vxi)\right](\gamma)\right)^2 \dd{\vxi} \dd{\gamma}
  = 2\int_{\RR^D} \left|\frac{k^{(D-1)/2}}{\F[\phi](k)} \F_D[f](\vk)\right|^2 \dd{\vk}
\]
\end{lemma}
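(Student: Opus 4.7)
The plan is to follow the same three-step template used for the two preceding lemmas, introducing a single new ingredient to handle the $\phi$-dependent factor. Schematically: first apply Plancherel's theorem in $\gamma$ to move the squared inverse Fourier transform into frequency space, turning the left-hand side into an $L^2$ integral over $\SS^{D-1}\times\RR$ of the bracketed symbol; next split this integration in $\vartheta$ into its positive and negative halves and show the two halves coincide via the antipodal substitution $(\vartheta,\vxi)\mapsto(-\vartheta,-\vxi)$, yielding a factor of $2$ in front of the remaining integral over $\SS^{D-1}\times[0,\infty)$; finally reparameterize with $\vk = \vartheta\vxi$ so that the Jacobian $\vartheta^{D-1}\dd{\vartheta}\dd{\vxi} = \dd{\vk}$ absorbs one power of $k^{D-1}$ and produces the claimed form.

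The only substantive deviation from the earlier proofs is in the second step, where the antipodal substitution must also handle the factor $1/|\F_\gamma[\phi](\vartheta)|^2$. The key observation I would use is that $\phi(\cdot)$ is real-valued, so its Fourier transform enjoys the hermitian symmetry $\F_\gamma[\phi](-\vartheta) = \overline{\F_\gamma[\phi](\vartheta)}$; hence $|\F_\gamma[\phi](-\vartheta)|^{-2} = |\F_\gamma[\phi](\vartheta)|^{-2}$, and the activation-function factor is invariant under the antipodal substitution exactly as $|\vartheta|^{2(D-1)}$ is. This is precisely the remark already flagged in the main text immediately before the derivation of $\cO_3$, so no new work is required to justify it. The antipodal map on $\SS^{D-1}$ is a measure-preserving isometry and leaves $\vartheta\vxi$ (and hence $\F_D[f](\vartheta\vxi)$) invariant, so after combining both halves one is left with $2\int_{\SS^{D-1}\times[0,\infty)} \vartheta^{2(D-1)}|\F_\gamma[\phi](\vartheta)|^{-2}|\F_D[f](\vartheta\vxi)|^2\dd{\vxi}\dd{\vartheta}$, which under the polar-to-Cartesian reparameterization produces the right-hand side.

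I anticipate no genuine obstacle: once the hermitian-symmetry remark is in place, each of the three steps is routine and structurally identical to what was already executed for the previous two lemmas. If anything, the mild risk is purely notational --- one must be careful that the sign conventions for $\F_\gamma[\phi]$ remain compatible with those for $\F_D[f](\vartheta\vxi)$ on the negative half-line, but treating $\phi$ as real and invoking the standard conjugate symmetry disposes of this cleanly, and the conclusion is then immediate.
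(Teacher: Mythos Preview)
Your proposal is correct and follows essentially the same route as the paper's proof: Plancherel in $\gamma$, split the $\vartheta$-integral into halves and use the antipodal substitution together with the hermitian symmetry $\F_\gamma[\phi](-\vartheta)=\overline{\F_\gamma[\phi](\vartheta)}$ to double up, then pass to Cartesian $\vk=\vartheta\vxi$ with the Jacobian $\vartheta^{D-1}$. The paper's write-up differs only cosmetically, performing the sign flip on $\vartheta$ and the antipodal map on $\vxi$ as two separate steps rather than as a single combined substitution.
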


\begin{proof}
\begin{IEEEeqnarray*}{l}
  \phantom{=}
    \int_{\SS^{D-1}\times\RR} \left(\F^{-1}_\gamma\left[\frac{|\vartheta|^{D-1}}{\F_\gamma[\phi](\vartheta)}\F_D[f](\vartheta\vxi)\right](\gamma)\right)^2 \dd{\vxi} \dd{\gamma}\\
  = \int_{\SS^{D-1}\times\RR} \left|\frac{|\vartheta|^{D-1}}{\F_\gamma[\phi](\vartheta)}\F_D[f](\vartheta\vxi)\right|^2 \dd{\vxi} \dd{\vartheta}\\
  = \int_{\SS^{D-1}} \int_0^\infty \left|\frac{|\vartheta|^{D-1}}{\F_\gamma[\phi](\vartheta)}\F_D[f](\vartheta\vxi)\right|^2 \dd{\vartheta} \dd{\vxi} + \int_{\SS^{D-1}} \int_0^\infty \left|\frac{|{-\vartheta}|^{D-1}}{\F_\gamma[\phi](-\vartheta)}\F_D[f](-\vartheta\vxi)\right|^2 \dd{\vartheta} \dd{\vxi}\\
  = \int_{\SS^{D-1}} \int_0^\infty \left|\frac{|\vartheta|^{D-1}}{\F_\gamma[\phi](\vartheta)}\F_D[f](\vartheta\vxi)\right|^2 \dd{\vartheta} \dd{\vxi} + \int_{\SS^{D-1}} \int_0^\infty \left|\frac{|\vartheta|^{D-1}}{\F_\gamma[\phi](-\vartheta)}\F_D[f](\vartheta(-\vxi))\right|^2 \dd{\vartheta} \dd{\vxi}\\
  = \int_{\SS^{D-1}} \int_0^\infty \left|\frac{|\vartheta|^{D-1}}{\F_\gamma[\phi](\vartheta)}\F_D[f](\vartheta\vxi)\right|^2 \dd{\vartheta} \dd{\vxi} + \int_{\SS^{D-1}} \int_0^\infty \left|\frac{|\vartheta|^{D-1}}{\F_\gamma[\phi](-\vartheta)}\F_D[f](\vartheta\vxi)\right|^2 \dd{\vartheta} \dd{\vxi}\\
  = \int_{\SS^{D-1}} \int_0^\infty \left|\frac{|\vartheta|^{D-1}}{\F_\gamma[\phi](\vartheta)}\F_D[f](\vartheta\vxi)\right|^2 \dd{\vartheta} \dd{\vxi} + \int_{\SS^{D-1}} \int_0^\infty \left|\frac{|\vartheta|^{D-1}}{(\F_\gamma[\phi](\vartheta))^*}\F_D[f](\vartheta\vxi)\right|^2 \dd{\vartheta} \dd{\vxi}\\
\shortintertext{Conjugation inside the modulus has no effect:}
  = \int_{\SS^{D-1}} \int_0^\infty \left|\frac{|\vartheta|^{D-1}}{\F_\gamma[\phi](\vartheta)}\F_D[f](\vartheta\vxi)\right|^2 \dd{\vartheta} \dd{\vxi} + \int_{\SS^{D-1}} \int_0^\infty \left|\frac{|\vartheta|^{D-1}}{\F_\gamma[\phi](\vartheta)}\F_D[f](\vartheta\vxi)\right|^2 \dd{\vartheta} \dd{\vxi}\\
  = 2\int_{\SS^{D-1}} \int_0^\infty \left|\frac{|\vartheta|^{D-1}}{\F_\gamma[\phi](\vartheta)}\F_D[f](\vartheta\vxi)\right|^2 \dd{\vartheta} \dd{\vxi}\\
  = 2\int_{\RR^D} \left|\frac{k^{(D-1)/2}}{\F[\phi](k)} \F_D[f](\vk)\right|^2 \dd{\vk}\\
\end{IEEEeqnarray*}
\end{proof}

\begin{lemma}
\[
  \|f_\varepsilon(\vx)\|_{\R,\phi,\eta_0}^2
     = \varepsilon^{-1}
       \int\limits_{\mathclap{\SS^{D-1}\times\RR}}
          \frac{\kappa_D^2}{\eta_0(\vxi,\varepsilon\gamma)}
          \left(
            \F_\gamma^{-1}
            \left[
              \frac{|\vartheta|^{D-1}}{\F_\gamma[\phi_\varepsilon](\vartheta)}
              \F_D[f](\vartheta\vxi)
            \right](\gamma)
          \right)^2
        \dd{\vxi}\dd{\gamma}
\]
\end{lemma}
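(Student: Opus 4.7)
The plan is to start from the Fourier-space expression for the Radon seminorm (equation~\eqref{eq:fourierreg}) applied to $f_\varepsilon$, then push all $\varepsilon$ dependence through by successive change-of-variable substitutions. Conceptually, the dilation of $f$ should ``move'' onto the activation $\phi$ (turning it into $\phi_\varepsilon$), onto the density $\eta_0$ (turning it into $\eta_0(\vxi,\varepsilon\gamma)$), and produce a single leftover Jacobian factor of $\varepsilon^{-1}$ in front.

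First I would write out $\|f_\varepsilon\|_{\R,\phi,\eta_0}^2$ directly from~\eqref{eq:fourierreg}, leaving $\eta_0$ and $\phi$ undilated, and apply the $D$-dimensional Fourier scaling identity $\F_D[f_\varepsilon](\vk)=\varepsilon^D\F_D[f](\varepsilon\vk)$ inside the bracket, so that the argument of $\F_D[f]$ becomes $\varepsilon\vartheta\vxi$. Next, in the inner inverse Fourier integral in $\vartheta$, I would change variables $u=\varepsilon\vartheta$. This produces: a factor $\varepsilon^{-(D-1)}$ from $|\vartheta|^{D-1}$, the factor $\varepsilon^D$ inherited from the Fourier scaling, a Jacobian $\varepsilon^{-1}$ from $d\vartheta=du/\varepsilon$, the transformed denominator $\F_\gamma[\phi](u/\varepsilon)$, and the complex exponential $e^{iu\gamma/\varepsilon}$. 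The first three powers cancel to $\varepsilon^0$.

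Then I would use the 1D dilation rule $\F_\gamma[\phi](u/\varepsilon)=\varepsilon\,\F_\gamma[\phi_\varepsilon](u)$, which is precisely the statement that dilating $\phi$ in real space by $\varepsilon$ dilates its transform by $1/\varepsilon$ with the usual $1/\varepsilon$ amplitude. This contributes an extra $\varepsilon^{-1}$. The bracket is then recognized as $\varepsilon^{-1}\F_u^{-1}\!\bigl[\tfrac{|u|^{D-1}}{\F_\gamma[\phi_\varepsilon](u)}\F_D[f](u\vxi)\bigr](\gamma/\varepsilon)$, and squaring gives a factor $\varepsilon^{-2}$ together with the shifted evaluation point $\gamma/\varepsilon$.

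Finally I would make the outer substitution $\gamma'=\gamma/\varepsilon$ with $d\gamma=\varepsilon\,d\gamma'$; this replaces $\eta_0(\vxi,\gamma)$ by $\eta_0(\vxi,\varepsilon\gamma')$ and eats one power of $\varepsilon$, leaving $\varepsilon^{-2}\cdot\varepsilon=\varepsilon^{-1}$ as the overall prefactor, which matches the stated expression. The only real obstacle is bookkeeping of the four $\varepsilon$-powers (from $|\vartheta|^{D-1}$, from the $D$-dimensional Fourier scaling, from the $\vartheta$-Jacobian, and from the $\phi$-dilation) and verifying that the $\gamma/\varepsilon$ argument lines up correctly so that the outer change of variables produces exactly $\eta_0(\vxi,\varepsilon\gamma)$ rather than $\eta_0(\vxi,\gamma/\varepsilon)$; everything else is mechanical.
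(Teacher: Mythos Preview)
Your proposal is correct and follows essentially the same approach as the paper: start from the Fourier form of the seminorm, apply the $D$-dimensional Fourier scaling $\F_D[f_\varepsilon](\vk)=\varepsilon^D\F_D[f](\varepsilon\vk)$, substitute $u=\varepsilon\vartheta$ in the inner transform, use the 1D dilation rule $\F_\gamma[\phi](u/\varepsilon)=\varepsilon\,\F_\gamma[\phi_\varepsilon](u)$, and finish with the outer substitution $\gamma'=\gamma/\varepsilon$. The paper merely orders the bookkeeping slightly differently---it carries $|\vartheta/\varepsilon|^{D-1}$ along and collects all $\varepsilon$-powers at the end rather than cancelling them as they appear---but the substitutions and the final $\varepsilon^{-1}$ tally are identical.
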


\begin{proof}
\begin{IEEEeqnarray*}{r;l}
  \|f_\varepsilon(\vx)\|_{\R,\phi,\eta_0}^2
    &= \int_{\SS^{D-1}\times\RR}
         \frac{1}{\eta_0(\vxi,\gamma)}
         \left(
           \F_\gamma^{-1}\left[
             \frac{|\vartheta|^{D-1}}{\F_\gamma[\phi](\vartheta)}
             \F_D\left[f_\varepsilon\right](\vartheta\vxi)
           \right](\gamma)
         \right)^2
       \dd{\vxi}\dd{\gamma}\\
    &= \int_{\SS^{D-1}\times\RR}
         \frac{1}{\eta_0(\vxi,\gamma)}
         \left(
           \F_\gamma^{-1}\left[
             \frac{|\vartheta|^{D-1}}{\F_\gamma[\phi](\vartheta)}
             \F_D[f(\cdot/\varepsilon)](\vartheta\vxi)
           \right](\gamma)
         \right)^2
       \dd{\vxi}\dd{\gamma}\\
    &= \int_{\SS^{D-1}\times\RR}
         \frac{1}{\eta_0(\vxi,\gamma)}
         \left(
           \F_\gamma^{-1}\left[
             \frac{|\vartheta|^{D-1}}{\F_\gamma[\phi](\vartheta)}
             \varepsilon^D \F_D[f](\varepsilon\vartheta\vxi)
           \right](\gamma)
         \right)^2
       \dd{\vxi}\dd{\gamma}\\
    &= \varepsilon^{2D}
       \int_{\SS^{D-1}\times\RR}
         \frac{1}{\eta_0(\vxi,\gamma)}
         \left(
           \F_\gamma^{-1}\left[
             \frac{|\vartheta|^{D-1}}{\F_\gamma[\phi](\vartheta)}
             \F_D[f](\varepsilon\vartheta\vxi)
           \right](\gamma)
         \right)^2
       \dd{\vxi}\dd{\gamma}\\
    &= \varepsilon^{2D}
       \int_{\SS^{D-1}\times\RR}
         \frac{1}{\eta_0(\vxi,\gamma)}
         \left(
           \varepsilon^{-1}
           \F_\gamma^{-1}\left[
             \frac{|\vartheta/\varepsilon|^{D-1}}{\F_\gamma[\phi](\vartheta/\varepsilon)}
             \F_D[f](\vartheta\vxi)
           \right](\gamma/\varepsilon)
         \right)^2
       \dd{\vxi}\dd{\gamma}\\
    &= \varepsilon^{2D-2}
       \int_{\SS^{D-1}\times\RR}
         \frac{1}{\eta_0(\vxi,\gamma)}
         \left(
           \F_\gamma^{-1}\left[
             \frac{|\vartheta/\varepsilon|^{D-1}}{\F_\gamma[\phi](\vartheta/\varepsilon)}
             \F_D[f](\vartheta\vxi)
           \right](\gamma/\varepsilon)
         \right)^2
       \dd{\vxi}\dd{\gamma}\\
    &= \varepsilon^{2D-1}
       \int_{\SS^{D-1}\times\RR}
         \frac{1}{\eta_0(\vxi,\varepsilon\gamma)}
         \left(
           \F_\gamma^{-1}\left[
             \frac{|\vartheta/\varepsilon|^{D-1}}{\F_\gamma[\phi](\vartheta/\varepsilon)}
             \F_D[f](\vartheta\vxi)
           \right](\gamma)
         \right)^2
       \dd{\vxi}\dd{\gamma}\\
    &= \varepsilon^{2D-1}
       \int_{\SS^{D-1}\times\RR}
         \frac{1}{\eta_0(\vxi,\varepsilon\gamma)}
         \left(
           \frac{1}{\varepsilon^{D-1}}
           \F_\gamma^{-1}\left[
             \frac{|\vartheta|^{D-1}}{\F_\gamma[\phi](\vartheta/\varepsilon)}
             \F_D[f](\vartheta\vxi)
           \right](\gamma)
         \right)^2
       \dd{\vxi}\dd{\gamma}\\
    &= \varepsilon^{2D-1 - 2(D-1)}
       \int_{\SS^{D-1}\times\RR}
         \frac{1}{\eta_0(\vxi,\varepsilon\gamma)}
         \left(
           \F_\gamma^{-1}\left[
             \frac{|\vartheta|^{D-1}}{\F_\gamma[\phi](\vartheta/\varepsilon)}
             \F_D[f](\vartheta\vxi)
           \right](\gamma)
         \right)^2
       \dd{\vxi}\dd{\gamma}\\
    &= \varepsilon
       \int_{\SS^{D-1}\times\RR}
         \frac{1}{\eta_0(\vxi,\varepsilon\gamma)}
         \left(
           \F_\gamma^{-1}\left[
             \frac{|\vartheta|^{D-1}}{\F_\gamma[\phi](\vartheta/\varepsilon)}
             \F_D[f](\vartheta\vxi)
           \right](\gamma)
         \right)^2
       \dd{\vxi}\dd{\gamma}\\
    &= \varepsilon
       \int_{\SS^{D-1}\times\RR}
         \frac{1}{\eta_0(\vxi,\varepsilon\gamma)}
         \left(
           \F_\gamma^{-1}\left[
             \frac{|\vartheta|^{D-1}}{\varepsilon\F_\gamma[\phi(\cdot\varepsilon)](\vartheta)}
             \F_D[f](\vartheta\vxi)
           \right](\gamma)
         \right)^2
       \dd{\vxi}\dd{\gamma}\\
    &= \varepsilon^{-1}
       \int_{\SS^{D-1}\times\RR}
         \frac{1}{\eta_0(\vxi,\varepsilon\gamma)}
         \left(
           \F_\gamma^{-1}\left[
             \frac{|\vartheta|^{D-1}}{\F_\gamma[\phi(.\varepsilon)](\vartheta)}
             \F_D[f](\vartheta\vxi)
           \right](\gamma)
         \right)^2
       \dd{\vxi}\dd{\gamma}
\end{IEEEeqnarray*}
\end{proof}
\end{document}